\renewenvironment{abstract}
	{\quotation}
	{\endquotation}
\date{}
\renewcommand{\fnum@figure}{\textbf{Figure \thefigure}}
\renewcommand{\fnum@table}{\textbf{Table \thetable}}
\theoremstyle{plain}
\newtheorem{Theorem}{Theorem}
\newtheorem{Proposition}{Proposition}
\newtheorem{Definition}{Definition}
\long\def\@makecaption#1#2{%
  \vskip\abovecaptionskip
  \small                          
  \setlength{\baselineskip}{12pt} 
  \sbox\@tempboxa{#1: #2}
  \ifdim\wd\@tempboxa >\hsize
    #1: #2\par                   
  \else
    \global\@minipagefalse
    \hb@xt@\hsize{\hfil\box\@tempboxa\hfil}\par
  \fi
  \vskip\belowcaptionskip}
\def\scititle{
    Learning to Move in Rhythm: Task-Conditioned Motion Policies with Orbital Stability Guarantees
}
\title{\bfseries \boldmath \scititle}
\author{
    Maximilian~Stölzle$^{1,2\ast}$,
    T.~Konstantin~Rusch$^{1,3,4\dagger}$,
    Zach~J.~Patterson$^{1,5\dagger}$, \and
    Rodrigo~Pérez-Dattari$^{6}$,
    Francesco~Stella$^{1,7,8}$,\and
    Josie Hughes$^{7}$,
    Cosimo~Della~Santina$^{2}$,
    Daniela~Rus$^{1}$
	\and
	\small$^{1}$CSAIL, Massachusetts Institute of Technology, Cambridge, MA 02139, USA.
        \and
	\small$^{2}$Cognitive Robotics, Delft University of Technology, Delft, 2628 CD, Netherlands.
        \and
        \small$^{3}$ELLIS Institute Tübingen, 72076, Germany.
        \and
        \small$^{4}$Max Planck Institute for Intelligent Systems, Tübingen, 72076, Germany.
        \and
        \small$^{5}$Mechanical and Aerospace Engineering, Case Western Reserve University, Cleveland, OH 44106, USA.
        \and
        \small$^{6}$ Division of Robotics, Perception and Learning, KTH, Lindstedtsvägen 24, SE-100 44 Stockholm, Sweden.
        \and
        \small$^{7}$CREATE Lab, EPFL, Lausanne, 1015, Switzerland. \and
        \small$^{8}$ Embodied AI AG, Lausanne, Switzerland. \and
	\small$^\ast$Corresponding author. Email: M.W.Stolzle@tudelft.nl\and
	\small$^\dagger$These authors contributed equally to this work.
}
\begin{document} 

\maketitle


\begin{abstract} \bfseries \boldmath

Learning from demonstration provides a sample-efficient approach to acquiring complex behaviors, enabling robots to move robustly, compliantly, and with fluidity. In this context, Dynamic Motion Primitives offer built-in stability and robustness to disturbances but often struggle to capture complex periodic behaviors. Moreover, they are limited in their ability to interpolate between different tasks. These shortcomings substantially narrow their applicability, excluding a wide class of practically meaningful tasks such as locomotion and rhythmic tool use.
In this work, we introduce Orbitally Stable Motion Primitives (OSMPs)—a framework that combines a learned diffeomorphic encoder with a supercritical Hopf bifurcation in latent space, enabling the accurate acquisition of periodic motions from demonstrations while ensuring formal guarantees of orbital stability and transverse contraction. Furthermore, by conditioning the bijective encoder on the task, we enable a single learned policy to represent multiple motion objectives, yielding consistent zero-shot generalization to unseen motion objectives within the training distribution.
We validate the proposed approach through extensive simulation and real-world experiments across a diverse range of robotic platforms—from collaborative arms and soft manipulators to a bio-inspired rigid–soft turtle robot—demonstrating its versatility and effectiveness in consistently outperforming state-of-the-art baselines such as diffusion policies, among others.
\end{abstract}

\noindent

\noindent \textbf{Summary.}
We introduce a novel training procedure for motion policies from periodic demonstrations backed by global convergence guarantees.

\section{Introduction}
Imitation Learning \cite{schaal1999imitation, zare2024survey} has regained substantial traction in recent years due to its superior sample and iteration efficiency in acquiring complex tasks compared to Reinforcement Learning. Recent work has focused on improving the robustness, expressiveness, and generalization of motion policies learned from demonstration by leveraging modern ML architectures such as diffusion models and flow matching~\cite{chi2023diffusion, black2024pi0}; scaling up the number of demonstrations to increase robustness~\cite{o2024open, black2024pi0, gemini2025robotics}; training across multiple robot embodiments (e.g., different manipulators) to promote generalization~\cite{o2024open, black2024pi0}; and conditioning policies on semantic task instructions and environment context via embeddings from large vision–language models (VLMs)~\cite{black2024pi0, gemini2025robotics}.
Among these, Dynamic Motion Primitives (DMPs)~\cite{ijspeert2002learning, ijspeert2013dynamical, saveriano2023dynamic, hu2024fusion} parametrize a motion policy through dynamical systems that predict the desired velocity or acceleration based on the system’s current state. By grounding the formulation in dynamical systems, researchers can leverage established tools from nonlinear system theory~\cite{khalil2002nonlinear} to analyze and ensure-by-design convergence properties in motion primitive—such as global asymptotic stability~\cite{kober2009learning, ijspeert2013dynamical, rana2020euclideanizing, urain2020imitationflow, zhang2022learning, perez2023stable, perez2024puma} or orbital stability~\cite{ijspeert2002learning, kober2009learning, wensing2017sparse, urain2020imitationflow, khadivar2021learning, abu2021periodic, abu2024learning, zhi2024teaching, nah2025combining}. This is not typically the case for other ML-based motion policies like RNNs or Diffusion Policies (DPs)~\cite{chi2023diffusion, o2024open, black2024pi0, gemini2025robotics}.
Such approaches—often referred to as Stable Motion Primitives (SMPs)—are robust to perturbations, disturbances, and model mismatches, as the motion policy continuously steers the system back to the desired reference. This also enhances data efficiency, a trait that is increasingly important as robots take on a broader range of tasks.


An important subclass of DMP strategies addresses tasks that require continuous, non-resting motion—those for which rest-to-rest trajectories are neither representative nor sufficient. Canonical examples include wiping a surface, swimming, or walking, where motion generation must produce sustained activity across cycles. These so-called rhythmic or periodic DMPs have spurred extensive research, both within the traditional dynamical systems formulation~\cite{ijspeert2002learning,kober2009learning,ijspeert2013dynamical,wensing2017sparse,kramberger2018passivity,saveriano2023dynamic,abu2024learning,hu2024fusion,nah2025combining} and in more recent methods combining simple latent-space limit cycles with learned diffeomorphic mappings~\cite{urain2020imitationflow,khadivar2021learning,zhi2024teaching}. Still, despite these advances, existing approaches struggle to reproduce non-trivial trajectories—especially those with sharp transitions, high curvature, or discontinuous velocity profiles, which are common in real-world rhythmic tasks. Overcoming this typically requires many demonstrations, thus strongly limiting their applicability in practical settings. 

Such limitations are exacerbated by the incapability of classical deterministic DMPs to generalize across tasks~\cite{jaquier2025transfer}: a fresh or fine-tuned model must be trained for every new motion or task \cite{saveriano2023dynamic}. Although several studies have introduced task-conditioned variants—such as conditioning on encoded visual observations~\cite{bahl2020neural, mohammadi2024extended} or adopting probabilistic DMP formulations~\cite{seker2019conditional, saveriano2023dynamic, pekmezci2024coupled}—these methods often yield incoherent trajectories when presented with tasks they did not explicitly encounter during training~\cite{jaquier2025transfer}, even when those tasks lie within the original training distribution. 

So, despite their promise of being an alternative to data-intensive learning strategies, DMPs ultimately require a substantial amount of data and a complex training process when tasks are varied and trajectories are not straightforward. Instead, the ability to generate purposeful motions in zero-shot settings for unseen tasks will be essential on the path towards truly generalist autonomous robots in the future.

In this paper, we introduce Orbitally Stable Motion Primitives (OSMPs), a framework, visualized in Fig.~\ref{fig:concept_overview}, designed to address the limitations of existing rhythmic motion primitives by learning an expressive, orbitally stable limit cycle capable of capturing elaborate periodic behaviors. Our approach imposes a dynamic inductive bias by shaping the latent space according to a supercritical Hopf bifurcation oscillator; a well-studied system in nonlinear dynamics~\cite{strogatz2018nonlinear, khadivar2021learning, nah2025combining} that has remained unexplored in the context of machine learning. This core dynamical prior is complemented by a novel bijective Euclideanizing-flow encoder, extending the Real NVP architecture~\cite{rana2020euclideanizing, dinh2017density}.

Under mild architectural assumptions, we prove OSMPs are almost-globally transverse contracting, so every trajectory converges exponentially—not merely asymptotically—to the learned limit cycle. A tailored loss suite binds the cycle’s shape and speed to the demonstration, eliminating the long-standing mismatch between a stable latent orbit and a highly curved, nonlinear sample. Thus, a single demonstration already yields an effective policy, drastically outperforming the data efficiency of DPs~\cite{chi2023diffusion}. 
A novel conditioning–interpolation loss then drives smooth, zero-shot transitions between related tasks: for example, the model continuously morphs between reverse and forward turtle-swimming gaits after seeing only those two exemplars. This ability sharply reduces the need to densely sample the motion task space and further increases data efficiency. 
We also introduce solutions for synchronizing multiple primitives in their phase and, without retraining, affinely scale, translate, or otherwise modulate the learned velocity field, turning rhythmic DMPs into a practical, data-efficient controller for complex robots.

We rigorously validate our approach in both simulation and on hardware. Quantitative and qualitative benchmarks compare it with classical neural models (MLPs, RNNs, NODEs), state-of-the-art motion policies like Diffusion Policy (DP)~\cite{chi2023diffusion}, and diffeomorphic-encoder methods with stable latent dynamics, including Imitation Flow (iFlow)~\cite{urain2020imitationflow} and SPDT~\cite{zhi2024teaching}. Experiments show our OSMPs accurately track periodic trajectories on diverse robotic platforms—UR5 and Kuka arms, the Helix soft robot, and a hybrid underwater “crush turtle” robot. Expressed as autonomous dynamical systems, they deliver more compliant, natural behavior than time-indexed, error-based controllers. We further demonstrate in both simulations and real-world experiments in-phase synchronization of multiple OSMPs and, via encoder conditioning, smooth interpolation among several distinct motion tasks with a single motion policy.

\subsection{Methodology in a Nutshell}
Below, we provide a brief overview of the OSMP methodology and architecture, as depicted in Fig.\ref{fig:methodology_overview}. Further details are provided in Sec.\ref{sec:materials_and_methods} and the Supplementary Material~\cite{methods}. In the DMP framework, an OSMP outputs the desired system velocity as $\dot{x}=f(x; z)$, with $x\in\mathbb{R}^n$ the configuration and $z$ the motion-task conditioning. The computation proceeds in three steps: (i) map $x$ into a latent coordinate $y\in\mathbb{R}^n$ using a $z$-conditioned bijective encoder (a learned diffeomorphism); (ii) evaluate the designed latent dynamics to obtain $\dot{y}$; and (iii) project $\dot{y}$ back to the original space via the encoder’s inverse Jacobian. While the architecture can be trained under various regimes (e.g., reinforcement learning), this paper focuses on imitation learning—specifically, behaviour cloning—where both the latent representation y and the predicted velocity $\dot{x}$ are supervised.

\subsection{Related Work}
While there is a long history of research on both discrete and rhythmic/periodic DMPs~\cite{ijspeert2002learning, kober2009learning, ijspeert2013dynamical, wensing2017sparse, kramberger2018passivity, saveriano2023dynamic, abu2024learning, hu2024fusion, nah2025combining}, the expressive power of classical DMPs~\cite{ijspeert2002learning, kober2009learning, ijspeert2013dynamical, wensing2017sparse} is limited, preventing them from learning highly complex and intricate trajectories.

Recently, however, an exciting research direction has emerged that combines diffeomorphisms into a latent space—learned using ML techniques—with relatively simple, analytically tractable (e.g., linear) latent space dynamics to enhance expressiveness while preserving stability and convergence guarantees~\cite{rana2020euclideanizing, urain2020imitationflow, zhang2022learning, perez2023stable}. Most of these works focus on point-to-point motions and aim to ensure global asymptotic stability~\cite{rana2020euclideanizing, zhang2022learning, perez2023stable, perez2024puma}, although there have also been several works combining diffeomorphic encoders with rhythmic latent dynamics for learning periodic motions from demonstration~\cite{urain2020imitationflow, khadivar2021learning, zhi2024teaching}.
However, in the existing methods, either the chosen architecture for the bijective encoder lacks expressiveness~\cite{urain2020imitationflow, khadivar2021learning}, the method training is very sensitive to the initial neural network parameter~\cite{urain2020imitationflow}, the method does not learn the demonstrated velocities but only the general direction of motion~\cite{zhi2024teaching}, or cannot accurately learn very complex oracle shapes~\cite{zhi2024teaching}.

Although the proposed model architecture is similar to that of Zhi \textit{et al.}~\cite{zhi2024teaching}, our training pipeline differs substantially: we incorporate an imitation loss that teaches the model the demonstrated velocities, replace the Hausdorff-distance objective with a limit-cycle matching loss better suited to complex or discontinuous paths, optionally guide latent polar angles with a time-alignment term to capture highly curved, possibly concave, contours, and regularize workspace velocities outside the demonstration to improve numerical stability during inference. 
Finally, we allow a parametrization of the polar angular velocity with a neural network, allowing the learning of complex velocity profiles along the limit cycle without compromising the strong contraction guarantees. 

Furthermore, the above-mentioned methods do not offer solutions for many practical issues, such as synchronizing multiple systems—a common requirement in locomotion or bimanual manipulation~\cite{gams2015accelerating}—or to shape the learned velocity online.

We provide a comparison with relevant existing methods in Table~\ref{tab:osmp_characteristics_vs_baselines} (Supplementary Text).

\section{Results}\label{sec:results}





\subsection{OSMPs are Asymptotically Orbitally Stable and Transverse Contracting}
In the general setting, we show that OSMPs possess Asymptotic Orbital Stability (AOS). This property was noted—but not fully proved—in earlier work~\cite{urain2020imitationflow,zhi2024teaching}. In this paper, we formally prove AOS in Theorem~\ref{theorem:asymptotic_orbital_stability} (Supplementary Text).


To our knowledge, earlier studies have not tackled exponential (orbital) stability or contraction \cite{lohmiller1998contraction}. When a velocity scaling is applied in the original coordinates—as in Euclideanizing flows \cite{rana2020euclideanizing}—such guarantees cannot be established unless the scaling factor is explicitly bounded. By contrast, if no velocity scaling is used in the x-coordinates (i.e., $f_\mathrm{s}=1$), we can prove contraction in the directions orthogonal to the limit cycle, a property known as transverse contraction \cite{manchester2014transverse}. Transverse contraction implies Exponential Orbital Stability (EOS), ensuring trajectories converge to the limit cycle at an exponential rate.

\begin{Theorem}\label{theorem:transverse_contraction}
    Let $\alpha, \beta > 0$, $R > 0$, and $z \in \mathbb{R}$ be constants.
    Also, choose $f_\mathrm{s}(x) = 1$ and $\omega(\Bar{y}_{1:2}): \mathbb{R}^2 \to \mathbb{R}_{>0}$.
    Then, the OSMP dynamics $\dot{x} = f(x;z)$ defined in \eqref{eq:dynamics} are transverse contracting 
    in the region $\mathcal{X} = \left \{ x \in \mathbb{R}^n | \sqrt{\{ \Psi(x,z) \}_1^2 + \{ \Psi(x,z) \}_2^2} > 0 \right \}$.
\end{Theorem}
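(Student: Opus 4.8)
The plan is to transfer the problem to the latent coordinates, reduce it there to a scalar contraction estimate along the radius, and then invoke the transverse-contraction criterion of Manchester and Slotine \cite{manchester2014transverse}. First I would use that transverse contraction is a differential-geometric property and hence is preserved by the $z$-conditioned diffeomorphism $\Psi(\cdot,z)$. The hypothesis $f_\mathrm{s}(x)=1$ is precisely what makes this change of variables clean: writing $y=\Psi(x,z)$ and $J=\partial\Psi/\partial x$, the OSMP dynamics \eqref{eq:dynamics} become $\dot y = J\dot x = g(y)$, the designed latent vector field with \emph{no} residual scalar speed factor (a state-dependent speed factor would only reparametrise time, preserving orbits---hence AOS---but potentially destroying uniform contraction). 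A latent contraction metric $\tilde M(y)$ is then transported by $M(x)=J^{\top}\tilde M(\Psi(x,z))J$ with the transversality condition and the contraction rate unchanged, and since $\Psi(\cdot,z)$ is a bijection of $\mathbb{R}^n$ the region $\mathcal{X}$ corresponds to $\mathcal{Y}=\{\,y\in\mathbb{R}^n : y_1^2+y_2^2>0\,\}$, which is open and forward invariant for $g$ (the fibre $\{y_1=y_2=0\}$ is invariant and repelling). It therefore suffices to show $g$ is transverse contracting on $\mathcal{Y}$.

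I would then pass to polar coordinates $(\rho,\phi)$ on $(y_1,y_2)$ and keep $y_r=(y_3,\dots,y_n)$, in which the supercritical Hopf normal form together with the linear contraction of the remaining coordinates becomes the skew product $\dot\rho=\alpha\rho(R^2-\rho^2)$, $\dot\phi=\omega$, $\dot y_r=-\beta y_r$. Because $\omega$ is evaluated at the \emph{normalised} argument $\bar y_{1:2}$ it depends only on the phase $\phi$; hence $\partial\omega/\partial\rho=0$ and the Jacobian is block structured. Continuity of $\omega$ on the compact unit circle gives $0<\omega_{\min}\le\omega(\phi)\le\omega_{\max}<\infty$, so $\dot\phi\ge\omega_{\min}>0$ on $\mathcal{Y}$: the flow never vanishes there (so the transversality condition is non-degenerate) and $\phi$ is a global monotone phase whose level sets are everywhere-transverse sections. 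By the transverse-coordinate characterisation of transverse contraction \cite{manchester2014transverse}, $g$ is transverse contracting on $\mathcal{Y}$ provided the $\phi$-periodic, non-autonomous transverse dynamics obtained by taking $\phi$ as the independent variable,
\begin{equation}
  \frac{d\rho}{d\phi}=\frac{\alpha\rho(R^2-\rho^2)}{\omega(\phi)},\qquad
  \frac{dy_r}{d\phi}=\frac{-\beta\,y_r}{\omega(\phi)},
\end{equation}
contract toward their periodic solution $(\rho,y_r)\equiv(R,0)$, the image of the limit cycle.

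For this last step the two blocks decouple, so I would take the product metric $\mathrm{diag}\!\big(w(\rho),\,I_{n-2}\big)$. On the $y_r$-block the Euclidean metric already contracts at rate $\ge\beta/\omega_{\max}$. On the $\rho$-block the naive Euclidean rate $\alpha(R^2-3\rho^2)/\omega$ is positive near the rotation axis, so a conformal weight is essential; the choice $w(\rho)=\rho^{-4}$ works because the differential contraction rate is
\[
  \frac{w'(\rho)}{w(\rho)}\,\frac{\alpha\rho(R^2-\rho^2)}{\omega}
  +2\,\frac{\alpha(R^2-3\rho^2)}{\omega}
  =-\frac{2\alpha\,(R^2+\rho^2)}{\omega}
  \le-\frac{2\alpha R^2}{\omega_{\max}}<0 ,
\]
i.e.\ the local expansion of the cubic field at $\rho=0$ is converted into uniform contraction precisely because the field itself vanishes there (any weight $\rho^{-k}$ with $2<k<6$ works similarly). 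This gives a uniform reduced contraction rate $\lambda=\min\{\alpha R^2,\beta\}/\omega_{\max}$; the equivalence of \cite{manchester2014transverse} then upgrades it to transverse contraction of $g$ on $\mathcal{Y}$, and transporting the metric back through $\Psi(\cdot,z)$ yields transverse contraction of $\dot x=f(x;z)$ on $\mathcal{X}$, hence EOS.

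I expect the main obstacle to be the metric: a \emph{single} admissible metric---positive definite, finite, and non-degenerate on the limit cycle---must satisfy the contraction inequality on all of $\mathcal{X}$, reconciling the regimes $\rho\to0$, $\rho$ near $R$, and $\rho\to\infty$. Routing the argument through the $\phi$-reduced transverse dynamics, rather than searching directly for a global metric on the full autonomous system, is what makes this tractable: it eliminates the phase direction from the inequality and confines the difficulty to the scalar radial equation, where the explicit conformal factor above settles it by a direct computation. The remaining items---uniform two-sided bounds on $\omega$ and a bound on $\omega'$, forward invariance of $\mathcal{X}$, and the push-forward identities for the metric---are routine.
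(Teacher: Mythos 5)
Your proposal reaches the right conclusion and shares the paper's outer scaffolding (reduce to the latent dynamics via diffeomorphism invariance of contraction, then work in polar coordinates), but the core technical argument is genuinely different. The paper proves Propositions~\ref{prop:polar_latent_dynamics_transverse_contraction} and \ref{prop:cartesian_latent_dynamics_transverse_contraction} by exhibiting an explicit \emph{non-diagonal} metric $M_\mathrm{pol}$ whose $r$--$\varphi$ cross terms are engineered so that $M_\mathrm{pol}f_\mathrm{pol}$ has vanishing radial component, and then verifies the Manchester--Slotine matrix inequality directly on the autonomous system; the price is a rate bound $\zeta_\mathrm{pol}\geq(\tfrac{2\alpha}{R^2}+\beta)\tfrac{r_\epsilon^2}{r_\epsilon^2+1}$ that is only established on $r\geq r_\epsilon$ and degrades as $r_\epsilon\to 0$. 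You instead reparametrize by the phase (legitimate here because $\omega$ depends only on $\bar y_{1:2}$, i.e.\ on $\phi$, and is bounded away from zero on the circle), which decouples the problem into a scalar radial equation plus a linear block, and your conformal weight $w(\rho)=\rho^{-4}$ gives the clean identity $\tfrac{w'}{w}g+2g'=-2\alpha(R^2+\rho^2)/\omega$ --- a \emph{uniform} rate on all of $\rho>0$, with no $r_\epsilon$ truncation and no orthogonality bookkeeping. That is an attractive improvement. Two caveats. First, a cosmetic one: the paper's radial field is $\alpha(1-r^2/R^2)r=\tfrac{\alpha}{R^2}r(R^2-r^2)$, so your rate constants are off by a factor $R^2$. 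Second, and more substantively: the step you call the ``transverse-coordinate characterisation'' is not available off the shelf in \cite{manchester2014transverse} as a two-way equivalence. The theorem asserts transverse contraction in the sense of Def.~\ref{def:transverse_contraction}, which requires a positive-definite metric on the \emph{full} state space satisfying the inequality for every $\delta_x$ with $\delta_x^\top M f=0$ --- an $(n-1)$-dimensional constraint set that generically contains nonzero $\delta_\phi$ components off the cycle. Lifting your reduced diagonal metric back to such a full-state metric is the one nontrivial bridge in your plan, and carrying it out essentially forces you to reintroduce the $r$--$\varphi$ coupling terms that the paper's $M_\mathrm{pol}$ writes down explicitly. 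If you only want exponential orbital stability, the classical moving-Poincaré-section argument closes your route as stated; to get the literal metric-based definition, you should either construct the lifted metric or fall back on the paper's direct verification.
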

\begin{proof}
    The proof can be outlined as follows:
    \begin{enumerate}
        \item In Proposition~\ref{prop:polar_latent_dynamics_transverse_contraction} (Supplementary Text), we prove that the polar latent dynamics are transverse contracting in the region $\mathcal{Y}_\mathrm{pol} = \left \{ y_\mathrm{pol} \in \mathbb{R}^n | r > 0, \varphi \in [-\pi, \pi) \right \}$.
        \item Proposition~\ref{prop:cartesian_latent_dynamics_transverse_contraction} (Supplementary Text) shows that the same transverse contraction properties hold in the Cartesian latent coordinates with dynamics $\dot{y} = f_\mathrm{y}(y)$.
        \item Existing work~\cite{manchester2017control, mohammadi2024neural, jaffe2024learning} demonstrates that such contraction properties also hold after a change of coordinates $x = \Psi^{-1}(y;z)$ that is defined by a smooth diffeomorphism, which is the case for our encoder based on Euclideanizing flows~\cite{rana2020euclideanizing}. 
    \end{enumerate}
\end{proof}

Formal definitions of transverse contraction and exponential orbital stability (EOS) are provided in Def.~\ref{def:transverse_contraction} and Def.~\ref{def:exponential_orbital_stability} (Supplementary Text). Intuitively, Theorem~\ref{theorem:transverse_contraction} tells us that two trajectories starting from any initial conditions outside the exact center of the limit cycle will converge exponentially to the same periodic orbit~\cite{manchester2014transverse}, demonstrating almost-global contraction. This, in turn, implies almost-global exponential orbital stability: no matter the initial conditions (as long as they are outside the center of the limit cycle with $r = 0$), the trajectories will reach the stable limit cycle specified by the OSMP in exponential time.

\subsection{OSMPs Exhibit a High Imitation Fidelity and Ensure Global Convergence to the Oracle}\label{sub:osmp_benchmarking}

We conduct both quantitative and qualitative evaluations of OSMPs against several baselines. 
Specifically, we evaluate the transverse contracting/exponentially stable variant of OSMP with $f_\mathrm{s}(x) = 1$.
The baselines include classical neural motion policies—MLPs, RNNs, and LSTMs—that directly predict the next system position, plus NODEs \cite{chen2018neural}, which instead predict the desired velocity. We also compare with state-of-the-art robotic imitation-learning methods such as Diffusion Policies (DPs) that predict system trajectory over a horizon and existing SMPs designed for periodic motion, namely Imitation Flows (iFlow) \cite{urain2020imitationflow} and Stable Periodic Diagrammatic Teaching (SPDT) \cite{zhi2024teaching}, predicting system velocities.

Table \ref{tab:benchmarking_quantitative_results} summarizes the quantitative benchmarking of OSMPs versus the baselines, assessing both imitation fidelity and convergence characteristics. To gauge imitation quality, we compute trajectory and velocity RMSE, together with the Dynamic Time Warping (DTW) distance, following prior work \cite{urain2020imitationflow, perez2023stable, nawaz2024learning}.
Convergence is examined at two levels. For local convergence, the system is initialized near a demonstration; we roll out each policy for one estimated period and compare the resulting shapes using directed Hausdorff distance and Mean Euclidean Distance (MED) after aligning the sequences with Iterative Closest Point (ICP). This scenario reflects small deviations from the desired limit cycle caused by low-level control errors or external disturbances. For global convergence, the system starts farther from the demonstrations. We roll out the motion policy for two full periods, then compute the same shape metrics on the second half of the rollout, allowing each policy sufficient time to settle into its limit cycle before measuring how closely that cycle matches the target demonstration.

Our benchmarks span several dataset categories. We include datasets used in earlier studies—such as the IROS letter drawings \cite{urain2020imitationflow} and other 2-D shapes \cite{nawaz2024learning}—as well as particularly challenging 2-D image contours (e.g., Star, MIT CSAIL and TU Delft flame logos, Dolphin, Bat), whose tight curves and discontinuous velocity profiles test the methods limits. In addition, we employ turtle-swimming datasets generated from biologically inspired oracles; unlike previous work, these sequences require reproducing not only the positional trajectory but also its complex, nonlinear velocity profile. 

Please note that we train a separate motion policy on each dataset contained in the dataset category and report the mean of all datasets and demonstrations contained in a dataset category. In order to give statistical relevance to the results, we conduct three training runs on each model+dataset combination, where we initialized the neural network weights in each run with a different random seed. Subsequently, we report the mean and standard deviation across the three random seeds.

The results in Tab.~\ref{tab:benchmarking_quantitative_results} indicate that, across most dataset categories and evaluation metrics, OSMPs outperform the baselines. They not only converge more reliably than neural policies without formal guarantees, but also imitate the demonstrations and align their limit cycles to intricate periodic shapes more accurately than other orbitally stable methods, including iFlow~\cite{urain2020imitationflow} and SPDT~\cite{zhi2024teaching}.
This conclusion is also supported by the qualitative benchmarking in Fig.~\ref{fig:qualitative_benchmarking_results}, that shows while some of the baseline methods, such as MLPs, Neural ODEs, or SPDT~\cite{zhi2024teaching}, might be sufficient for simpler oracles, such as the RShape~\cite{urain2020imitationflow}, the planar drawing~\cite{nawaz2024learning}, or the Star oracle, but fail to track the periodic demonstration for more complex and highly curved oracles, such as the TUD-Flame or the Dolphin image contour.
In an ablation study with results reported in Tab.~\ref{tab:ablation_study_loss_functions} and Fig.~\ref{fig:ablation_study_loss_functions} (Supplementary Text), we furthermore demonstrate that the training loss terms proposed in this paper all contribute towards improving the reported performance metrics, although the most suitable combination of loss terms depends on the complexity of the oracle shape.

More details about the implementation of the baseline methods, the evaluation metrics, and the datasets can be found in the supplementary materials~\cite{methods}.

\subsection{Stable and Accurate Tracking of Oracles Across Robot Embodiments}

While the previous section focused on evaluating and benchmarking the learning of the OSMP, we now aim to demonstrate that the proposed OSMP can effectively control robot motion in real-world scenarios. To achieve this, we apply the method to a diverse range of robot embodiments, including robot manipulators (UR5), cobots (KUKA), continuum soft robots (Helix Soft Robot)~\cite{guan2023trimmed}, and prototypes of hybrid soft-rigid underwater robots (Crush turtle robot).
Figure~\ref{fig:robot_embodiments_results} illustrates the effectiveness of OSMPs across all tested robot embodiments.

First, we deploy the OSMPs trained on image contours on both the UR5 arm and the Helix soft robot, achieving accurate and stable contour tracking. The deviations and oscillations seen on the Helix stem not from the OSMP itself but from the low-level controller—particularly inverse-kinematics errors—as demonstrated in Fig.~\ref{fig:robot_embodiments_extended_results} and Movie~S1, where we benchmark against a classical trajectory-tracking controller.
Also, a quantitative evaluation of the imitation metrics and shape similarity between the actual system trajectory and the desired oracle shape is contained in Tab.~\ref{tab:robot_embodiments_quantitative_evaluation} (Supplementary Text).

We then target swimming behavior on the Crush Turtle robot using biologically inspired oracles collected by marine biologists. Our goal is for the OSMP to drive the two front flippers, the main propulsion surfaces. We use both a three-dimensional joint-space oracle~\cite{van2023soft} and a four-dimensional task-space oracle comprising flipper-tip position and twist~\cite{van2022new}, each derived from video recordings of green sea turtles (Chelonia mydas)~\cite{van2022new,van2023soft}. The resulting joint-space velocity commands are executed by the robot’s actuators. Experiments show that OSMPs enable accurate tracking of the biological oracle at moderate speeds. Because of joint-motor velocity and acceleration limits, the system cannot perfectly track shape or speed at higher $s_\omega$ values; yet even when motion diverges slightly, stability is preserved, the trajectory rapidly reconverges to the oracle, and the turtle robot successfully swims. 
We observe that an OSMP trained on the joint-space swimming oracle yields more effective propulsion than one based on the task-space oracle—likely because the latter omits the full 3-D pose of the flippers. In addition, the joint-space OSMP avoids kinematic singularities that can destabilize task-space control.

Next, we test OSMP performance on kinesthetic-teaching demonstrations, which are typically jerkier and less smooth than the oracles above. In periodic demonstrations, the trajectory often fails to close exactly, leaving a spatial offset between start and end poses that complicates limit-cycle fitting. We investigate a whiteboard-erasure task on a UR5 manipulator and a brooming task on a KUKA cobot. For the UR5, we encode only the end-effector positions, whereas for the KUKA, we encode both position and orientation. On the UR5, we observe successful task completion (i.e., cleaning the writing from the whiteboard), rapid convergence to the limit cycle, and strong oracle tracking, with only minor errors where start and end points were fused. On the KUKA, tracking error is somewhat larger—likely due to the demanding six-DOF oracle and low feedback gains in the low-level controller—but the robot still completes the task reliably and repeats it with high consistency, even remaining robust to external disturbances and perturbations as seen in Movie~S2.

\subsection{The Learned Policies Exhibit Compliant and Natural Motion Behavior}

We aim for robots in human-centric environments to demonstrate robust, compliant, and predictable behavior. Specifically, \emph{robustness} means that if a robot deviates from its intended path—perhaps due to a disturbance—it will always converge back to the desired motion. \emph{Compliance} indicates that robots should exert only minimal forces when coming into contact with humans, and \emph{predictability} ensures that their motions are sufficiently consistent for humans to anticipate their behavior and respond appropriately.

In this section, we compare the reaction upon disturbances and perturbations of OSMPs against classical trajectory tracking controllers that rely on a time-parametrized trajectory, given in the form
\begin{equation}
    \dot{x}(t) = \dot{x}^\mathrm{d}(t) + k_\mathrm{p} \, (x^\mathrm{d}(t) - x(t)),
\end{equation}
where $k_\mathrm{p} \in \mathbb{R}$ is a proportional feedback gain that operates on the error between the current position $x(t)$ and the desired position $x^\mathrm{d}(t)$. We stress here the reliance on a time-parametrized trajectory provided in the form $(x^\mathrm{d}(t),\dot{x}^\mathrm{d}(t)) \: \forall \: t \in [t_0, t_\mathrm{f}]$. 
We evaluate three motion controllers: a pure feedforward trajectory tracking controller, which we gather by setting $k_\mathrm{p} = 0$, an error-based feedback controller with $k_\mathrm{p} > 0$, and the learned OSMP.
In this setting, we are particularly interested in analyzing the behavior of the motion controllers upon encountering an external disturbance that perturbs the state of the system with respect to the time reference. For example, in simulation, we shift the time reference when initializing the system by half a period (i.e., a phase shift of $\pi$~rad) and in the real world experiments with the KUKA robot running a low level impedance controller we apply external perturbations to the system that prevents or disturbs the nominal motion.

The results in Fig.~\ref{fig:compliance_results} and Movie~S3 show that the pure feedforward trajectory tracking controller entirely drifts off the desired trajectory. When adding an error-based feedback term, the classical trajectory tracking controller is able to recover and rejoin the demonstrated trajectory after a bit. However, while doing so, the feedback term generates a very aggressive correction action, which could cause incompliant behavior and would not seem natural to humans. Instead, the OSMP, which is solely conditioned on the system state and not time, is not affected by the perturbation of the time reference and perfectly tracks the demonstration, immediately returning to the closest point on the limit cycle after a perturbation, while exhibiting compliant and natural behavior.

\subsection{Achieving Phase Synchronization Across Multiple Motion Primitives}

In many practical applications, such as locomotion or bimanual manipulation, synchronizing multiple motion policies is critical. In this section, we illustrate how our approach can synchronize multiple learned OSMPs by evaluating the polar phase of each and then aligning them via an error-based feedback controller~\cite{dorfler2014synchronization}. Crucially, we only adjust the velocity magnitude without altering the system’s spatial motion, thereby preserving the imitation and convergence properties of each learned motion policy.

In Fig.~\ref{fig:phase_sync_results} and Movie~S4, we show simulation and experimental results for synchronizing between two and six OSMPs. The simulation outcomes illustrate how the controller identifies the most efficient strategy to align the OSMPs, achieving rapid polar phase synchronization. A proportional gain determines the aggressiveness of the synchronization process. The simulation results confirm that the phase synchronization approach is effective not only for two systems but also for three or more.
In complex systems with many DOFs, we have found it can be more effective to train separate OSMPs and synchronize them during execution rather than relying on one large OSMP that covers every DOF. With a joint OSMP, a disturbance in even a few DOFs can pull the rest off the limit cycle—and away from the oracle—until the system reconverges. By contrast, synchronized but independent OSMPs are insulated from such disturbances: if some DOFs are perturbed, the remaining DOFs managed by their own OSMPs can keep tracking the oracle accurately, and the phase synchronization ensures that all OSMPs stay locked in their phase.

Regarding experimental findings, we examined the swimming performance of the Crush turtle robot. Tests in a swimming pool revealed that the robot can swim effectively only when both front flippers—the primary means of locomotion in water~\cite{van2022new, van2023soft}—are fully synchronized. In practice, even aside from external disturbances and inherent differences between the flippers, desynchronization occurs already during initialization when the flipper arms start in slightly different configurations with varying polar phases. Our results show that using our method, the two flipper arms synchronize, even with flippers initialized far from the oracle, within 4.72s to less than $1^\circ$ in phase error, and subsequently for the entire experiment exhibit a mean phase error of less than $0.2^\circ$, thereby enabling the turtle robot to swim effectively.

\subsection{Smooth Interpolation Between Motion Behaviors via Encoder Conditioning}
As robotics shifts toward generalist motion policies that choose among varied behaviors based on task, state, and perception, those policies must support multiple skills rather than one~\cite{o2024open, black2024pi0, gemini2025robotics}. Leveraging semantic cues—e.g., embeddings from vision–language models—could supply such conditioning, yet dynamic motion-primitive work seldom tackles it. Existing methods~\cite{rana2020euclideanizing, perez2023stable, perez2024puma, sochopoulos2024learning, zhi2024teaching} also lack smooth interpolation between trained behaviors.
Examples include applications such as surface cleaning, where the robot must in the future seamlessly switch wiping motions as materials change. In locomotion, blending oracles for flat walking and stair climbing enables natural movement over moderately stepped terrain. Such cases underscore the need for motion policies that transfer to unseen tasks with few- or zero-shot generalization~\cite{jaquier2025transfer}.

We introduce task conditioning in the bijective encoder through a scalar variable $z \in \mathbb{R}$, allowing the desired motion behavior to be selected online by simply setting $z$. To ensure the learned policy transitions smoothly across behaviors (e.g., for $z \in [-1, 1]$), we add a loss term $\mathcal{L}\mathrm{sci}$ during training. As illustrated in Fig.~\ref{fig:conditioning_results} and Movie~S5, both simulation and hardware experiments with the turtle robot show that (a) a single OSMP faithfully reproduces all behaviors encountered during training, and (b) $\mathcal{L}\mathrm{sci}$ promotes smooth interpolation between oracles, enabling meaningful zero-shot performance on unseen tasks that fall within the training distribution. An ablation study—comparing against an OSMP trained without $\mathcal{L}_\mathrm{sci}$—confirms this finding. Crucially, switching behaviors requires no elaborate sequence: once $z$ is updated, the OSMP’s convergence guarantees rapidly steer the system to the new behavior, preserving EOS for constant or slowly varying conditioning values.

\section{Discussion}\label{sec:discussion}

\paragraph{Convergence Guarantees}
The first result introduced in the previous section is theoretical in nature: we establish that the proposed OSMP guarantees almost-global asymptotic orbital stability. Under mild architectural conditions, this further strengthens into almost-global exponential orbital stability via transverse contraction. Yet this result is not merely of theoretical interest—it carries profound practical implications. Indeed, it ensures that, regardless of network weights or conditioning, any trajectory converges exponentially fast to the learned limit cycle. Exponential convergence prevents slow-dynamics plateaus where the system might otherwise stall, while the near-global basin drastically reduces the need for far-reaching training data coverage. Together, these properties yield significant gains in data efficiency, as they obviate the need to densely sample the state space. Beyond training, exponential convergence also facilitates system-level analysis and enhances modularity. Most critically, contractiveness plays a central role in enabling policy reuse and transfer learning: while interconnecting merely stable systems can lead to instability, contracting systems are provably composable and lend themselves naturally to transfer and hierarchical control~\cite{ofir2022sufficient,angeli2025lmi}. 

In the same spirit, it is important to underscore that our stability results hold uniformly for any fixed conditioning value $z$, thereby reinforcing their relevance for transferability. This property suggests robustness not only to different instances within a task family but also to transitions across them. While outside the current scope, it is natural to ask whether convergence still holds under time-varying conditioning $z(t)$, such as during policy switching. Thanks to the global nature of the underlying result, the extension to the case where $z(t)$ eventually stabilizes (i.e., $z(t) \equiv z_\infty$ for $t > t'$) is straightforward. More ambitiously, we believe that the exponential convergence rate can be leveraged to establish stability even under piecewise-constant $z(t)$, including non-smooth transitions over finite horizons. This opens the door to formal tools for designing learning-based schedules of optimal conditioning patterns—a direction we leave for future investigation.

\paragraph{Quantitative Benchmark}

Quantitative benchmarks show that OSMPs surpass most baselines across the majority of dataset categories. They display superior convergence—especially when initialized far from the demonstration—compared with classic neural motion policies such as MLPs, RNNs, LSTMs, NODEs, and even state-of-the-art DPs~\cite{chi2023diffusion}. While other orbitally stable approaches like iFlow~\cite{urain2020imitationflow} and SPDT~\cite{zhi2024teaching} guarantee convergence to a limit cycle, they often struggle to (a) imitate highly curved shapes or discontinuous velocity profiles and (b) ensure that the resulting limit cycle accurately reproduces those complex demonstrations.
For example, compared to SPDT~\cite{zhi2024teaching}, OSMPs are able to imitate the velocity profile, and their limit cycle captures the oracle shape 5x more accurately, as the global convergence analysis shows.
The few cases in which baselines outperform OSMPs highlight avenues for improvement. On the IROS letters dataset~\cite{urain2020imitationflow}, DPs~\cite{chi2023diffusion} achieve lower imitation errors, likely because widely separated demonstrations of the same letter must be captured by a single policy, which favors probabilistic methods like DPs and iFlow over deterministic ones, such as NODEs, SPDT, and OSMPs. For image-contour data, MLP policies slightly edge out OSMPs when starting near the demonstration but degrade markedly when initialized farther away.
Regarding inference time/runtime, OSMPs demand more computation than classical neural policies (which forgo gradient evaluations during inference) and iFlow~\cite{urain2020imitationflow} (given our more expressive bijective encoders), yet they strike a favorable performance-to-inference-time balance and are 10–20× faster than standard diffusion policies~\cite{chi2023diffusion}. Note, too, that the timings in Table~\ref{tab:benchmarking_quantitative_results} stem from unoptimized, eager-mode inference; with numerical gradients and PyTorch ahead-of-time compilation, OSMPs can run at up to 15 kHz on modern CPUs, as shown in Tab.~\ref{tab:inference_time_benchmarking} (Supplementary Text).


\paragraph{Real-World Experiments}
When moving to real-world experiments, we quickly realized how the interpretable latent-dynamics structure allows us to inspect post-training how closely the learned cycle mirrors the periodic demonstration, promoting predictable behavior when the policy is deployed on a real robot.
This aspect will require further quantification in future work.

On the UR5 robot, OSMPs are able to track the oracle with directed Hausdorff distance between $3$~mm and $14$~mm. As the Helix soft robot employs a less accurate low level controller, the shape tracking accuracy drops to $6-11$~mm. Still, OSMPs outperform classical trajectory tracking controller in terms of shape accuracy on the Helix soft robot by approximately $60-65$~\%.

OSMPs enable highly successful locomotion in the turtle robot by leveraging biologically derived swimming oracles. Although oracle tracking is not perfectly precise (directed Hausdorff distance of $0.085$-$0.382$~rad)—especially at higher speeds due to motor limits and external disturbances like water drag—the system remains stable, quickly reconverges to the limit cycle, and keeps the limbs nearly perfectly synchronized.

Next, we trained individual OSMPs on multiple kinesthetic-teaching demonstrations for the UR5 and KUKA manipulators. Although some demonstrations were jerky and uneven, the OSMPs accurately captured the intended motions: the UR5 achieved a 100~\% success rate in cleaning the whiteboard using the learned policy. Similarly, the KUKA robot consistently completed the brooming task over many repetitions—despite limited execution speed and tracking accuracy imposed by the low feedback gains of its impedance controller—and even succeeded when its motion was perturbed by external disturbances.

Although orbital stability is always guaranteed, large deformations in the learned diffeomorphism can push the system far from the demonstrated path. On real robots, that drift is risky because joint position and velocity limits—or a finite task-space workspace—can be violated. We saw this on the Crush turtle robot, whose joint range and velocity/acceleration caps constrain what the low-level controller can track. Two design choices proved helpful: (i) encoder regularization during training and (ii) a sliding-mode–style motion modulation that first draws the system into a neighborhood of the oracle before advancing along the polar phase. Since motion directly on the oracle is usually feasible, these measures prevent most of the problems. Looking ahead, embedding Control Barrier Functions (CBFs) could explicitly keep the system out of infeasible or unsafe regions in the oracle space, echoing recent advances in the DMP literature \cite{davoodi2022rule,nawaz2024learning,mohammadi2024extended,simmoteit2025diffeomorphic}.

Phase synchronization is vital when deploying OSMPs for turtle swimming: without it, the limb controllers—lacking an explicit time parameter—would drift out of phase, sharply increasing the cost of transport. Instead, our phase synchronization strategy is able to keep the mean phase error between the two flippers at less than $0.2^\circ$.
The same synchronization strategy can be applied in the future to other platforms, such as bipedal or quadrupedal robots. We also demonstrate that training separate OSMPs and synchronizing their phases at runtime can outperform a single large OSMP, as disturbances affecting one subsystem do not pull the others away from their limit cycles.

Conditioning DMPs on task information enables them to produce purposeful motions for tasks never encountered during training, marking a paradigm shift that opens the door to zero-shot transfer of convergence-guaranteed motion policies to far more complex, unseen scenarios. Here, we take an initial step toward that vision: our training procedure enables smooth interpolation between motions observed in the dataset. For instance, a single OSMP trained on both forward and reverse turtle swimming can effortlessly blend the two behaviors into a continuum.
Crucially, the orbital stability is always preserved, even when the shape of the stable limit cycle changes as a function of the conditioning $z$.

\paragraph{Limitations}
The proposed OSMP exhibits several limitations that highlight promising directions for future development. First, it presumes prior annotation of the demonstration’s attractor regime, requiring that each trajectory be segmented to isolate the periodic portion and implicitly modeled as a limit cycle. While our approach accommodates sharp turns and discontinuities more robustly than existing baselines, such features still induce sizable deformations in the learned vector field. These deformations manifest as locally aggressive dynamics, where nearby trajectories may temporarily diverge markedly before reconverging toward the limit cycle.

Second, like other time-invariant dynamical motion primitive frameworks~\cite{ijspeert2002learning, ijspeert2013dynamical, rana2020euclideanizing, perez2023stable}, OSMP inherits the limitation of being ill-posed under intersecting demonstrations or oracles—situations that demand multi-valued flows at a single point in state space. In addition to existing ideas in literature~\cite{sun2024directionality}, formulating the motion policy as second-order dynamics or augmenting the dynamics with an explicit notion of trajectory progress could resolve this ambiguity by lifting the state into a higher-dimensional phase space.

Looking ahead, we envision extending the framework to support multiple classes of attractors—such as point attractors (GAS), multistable basins (MS), and limit cycles (OS)—within a single primitive. The supercritical Hopf bifurcation already captures a transition between equilibrium and periodic behavior, suggesting that the inclusion of an additional “attractor type” parameter could generalize the formalism to support richer behaviors~\cite{strogatz2018nonlinear}. Finally, replacing the explicit conditioning variable $z$ with implicit observation-based embeddings (e.g., task images or object states) would make OSMP compatible with vision-language-action models such as $\pi\_0$~~\cite{black2024pi0} and SmolVLA~~\cite{shukor2025smolvla}, offering a path toward more expressive and versatile control policies.

\section{Materials and Methods}\label{sec:materials_and_methods}

In this work, we introduce OSMPs, which can be trained to capture complex periodic motions from demonstrations while ensuring convergence to a limit cycle that aligns with a predefined oracle. To accomplish this, we build on previous research~\cite{rana2020euclideanizing, zhi2024teaching} that combines learned bijective encoders with a prescribed motion behavior in latent space. These latent dynamics generate velocities or accelerations that are subsequently mapped back into the oracle space via a pullback operator~\cite{zhang2022learning}—in the case of a bijective encoder, this operator is the inverse of the encoder’s Jacobian. In this formulation, the motion in latent space exhibits key convergence properties, such as Global Asymptotic Stability (GAS)~\cite{rana2020euclideanizing, perez2023stable, sochopoulos2024learning} or Orbital Stability (OS)~\cite{zhi2024teaching}, while the learned encoder provides the necessary expressiveness to capture complex motions and transfers the convergence guarantees from latent to oracle space through the established diffeomorphism.

However, compared to existing work~\cite{zhi2024teaching}, we introduce several crucial modifications that enhance both the performance and practical utility of the proposed method: (1) we develop a limit cycle matching loss to reduce the discrepancy between the learned limit cycle and the periodic oracle; (2) we design strategies to modulate the learned velocity field online without the need for retraining—for instance, to adjust the convergence behavior; (3) we establish a procedure to synchronize the phase of multiple OSMPs; and (4) we condition the encoder on a task, enabling a single OSMP to exhibit multiple distinct motion behaviors. Moreover, we introduce loss terms that allow the trained OSMP to smoothly interpolate between the learned motion behaviors—something that has not been possible before.

\subsection{Orbitally Stable Motion Primitives}
The dynamical motion policy $\dot{x}=f(x; z)$ is typically defined in task space but can also be defined in other Cartesian or generalized coordinates (e.g., joint space). Therefore, we will in the following refer to these coordinates as \emph{Oracle space}.
In this work, we are specifically interested in cases where we train $f(x)$ to learn periodic motions.
Then, the smooth diffeomorphism between the oracle and latent space is made via a bijective encoder $\Psi: \mathbb{R}^n \to \mathbb{R}^n$, which maps positional states $x \in \mathbb{R}^n$ into the latent states $y \in \mathbb{R}^n$.
Optionally, this encoding is conditioned on a continuous variable $z \in \mathbb{R}$ as a homotopy such that $y = \Psi(x;z)$.
Furthermore, we construct $\Psi$ such that it is invertible and a closed-form inverse function $\Psi^{-1}: y \mapsto x$ allows us to map from latent space back into oracle space.
In latent space, we apply dynamics $\dot{y} = f_\mathrm{y}(y)$ that exhibit a stable limit cycle behavior. In summary, the orbitally stable motion primitive is given as
\begin{equation}\label{eq:dynamics}
    \dot{x} = f(x;z) = f_\mathrm{s}(x) \, J_\Psi^{-1}(x;z) \, f_\mathrm{y} \left (\Psi(x;z) \right ),
\end{equation}\
where $J_\Psi = \frac{\partial \Psi(x;z)}{\partial x}$ defines the Jacobian of the encoder. 
The function $f_\mathrm{s}(x): \mathbb{R}^n \to \mathbb{R}_{>0}$ scales the velocity magnitude and is implemented as $f_\mathrm{s}(x) = e^{\mathrm{MLP}(x)} + \varepsilon$, where $\varepsilon \in \mathbb{R}_{>0}$ is a small value~\cite{rana2020euclideanizing}.
As $\Psi$ is a diffeomorphism (w.r.t. $x$ and $y$), the motion policy is orbitally stable by construction~\cite{rana2020euclideanizing, zhi2024teaching}.

\subsubsection{Diffeomorphic Encoder}

We consider a bijective encoder $\Psi_\theta : \mathbb{R}^n \times \mathbb{R} \to \mathbb{R}^n$, which maps positional states $x \in \mathbb{R}^n$ into the latent states $y \in \mathbb{R}^n$ conditioned on $z \in \mathbb{R}$, where we assume that $n \in \mathbb{N} \geq 2$.
The encoder $y = \Psi_\theta(x;z)$ adopting the Euclideanizing flows~\cite{dinh2017density, rana2020euclideanizing} architecture is parametrized by the learnable weights $\theta \in \mathbb{R}^{n_\theta}$ and consists of $n_\mathrm{b}$ blocks, where each block is analytically invertible.
If a conditioning is used, $z$ is first lifted into an embedding $\Bar{z}$, which is subsequently used to condition each block.
More details on the encoder architecture can be found in the pioneering work~\cite{dinh2017density, rana2020euclideanizing} and in the Supplementary Materials~\cite{methods}.

\subsubsection{Latent Dynamics}
In latent space, we consider the 1st-order dynamics of a supercritical Hopf bifurcation~\cite{strogatz2018nonlinear, khadivar2021learning, zhi2024teaching, nah2025combining}
\begin{equation}\label{eq:latent_dynamics}
    \dot{y} = \begin{bmatrix}
        \dot{y}_1\\
        \dot{y}_2\\
        \dot{y}_{3:n}\\
    \end{bmatrix} = f_\mathrm{y}(y) = \begin{bmatrix}
        -\omega(y) \, y_2 + \alpha \, \left ( 1 - \frac{y_1^2 + y_2^2}{R^2} \right ) \, y_1\\
        + \omega(y) \, y_1 + \alpha \, \left ( 1 - \frac{y_1^2 + y_2^2}{R^2} \right ) \, y_2\\
        -\beta \, y_{3:n}\\
    \end{bmatrix},
\end{equation}
where $\omega(y) = f_\omega(\operatorname{atan2}(y_2, y_1)) > 0$ determines the angular velocity.
Here, the dynamics of $y_1$ and $y_2$ describe the Cartesian-space behavior of a simple limit cycle whose behavior in polar coordinates $y_\mathrm{pol} = \begin{bmatrix}
    r & \varphi & y_{3:n}^\top
\end{bmatrix}^\top$ is expressed as
\begin{equation}\label{eq:latent_dynamics_polar_coordinates}
    \dot{y}_\mathrm{pol} = \begin{bmatrix}
        \dot{r}\\ \dot{\varphi}\\ \dot{y}_{3:n}
    \end{bmatrix} = f_\mathrm{pol}(y_\mathrm{pol}) = \begin{bmatrix}
        \alpha \left ( 1 - \frac{r^2}{R^2} \right ) \, r\\ f_\omega(\varphi)\\ -\beta \, y_{3:n}\\
    \end{bmatrix},
\end{equation}
where $f_\omega(\varphi): [-\pi, \pi) \to \mathbb{R}_{>0}$ computes the positive angular velocity as a function of the polar angle. Often, in particular, when employing $f_\mathrm{s}(x) \neq 1$, it can also be set to a constant: $\omega = 1$. 
If not, we define in practice $\omega(y) = f_\omega(\Bar{y}_{1:2}) = e^{\mathrm{MLP}(\Bar{y}_{1:2})} + \epsilon_\omega$, where $\Bar{y}_{1:2} = \begin{bmatrix}
    \frac{y_1}{\sqrt{y_1^2+y_2^2}} &  \frac{y_2}{\sqrt{y_1^2+y_2^2}}
\end{bmatrix}^\top \in \mathbb{R}^2$ with $\epsilon_\omega > 0$. More details can be found in the supplementary material~\cite{methods}.

$\alpha > 0$ and $\beta > 0$ are positive gains that determine how fast the system converges onto the limit cycle.  When learning the dynamics, we choose $\alpha = \beta = 1$. $R \in \mathbb{R}_{>0}$ expresses the radius of the limit cycle in latent space. Again, it is sufficient to choose $R =1$ or $R=0.5$.

\subsection{Training}
We consider a dataset $\langle T, X^\mathrm{d}, \dot{X}^\mathrm{d}, Z \rangle$ as a tuple between timestamps $T = \langle t(1), \dots, t(k), \dots, t(N) \rangle$ positions $X^\mathrm{d} = \langle x^\mathrm{d}(1), \dots, x^\mathrm{d}(k), \dots, x^\mathrm{d}(\mathrm{N}) \rangle$, the corresponding, demonstrated velocities $\dot{X}^\mathrm{d} = \langle \dot{x}^\mathrm{d}(1), \dots, \dot{x}^\mathrm{d}(k), \dots, \dot{x}^\mathrm{d}(\mathrm{N}) \rangle$ and an optional conditioning $Z = \langle z(1), \dots, z(k), \dots, z(\mathrm{N}) \rangle$, where $k \in \mathbb{N}_\mathrm{N} = \{1, 2, \dots, N \}$.
The total training loss function is then given by
\begin{equation}
    \mathcal{L} = \underbrace{\zeta_\mathrm{vi} \, \mathcal{L}_\mathrm{vi}}_\text{Vel. Imitation} + \underbrace{\zeta_\mathrm{lcm} \, \mathcal{L}_\mathrm{lcm}}_\text{Limit Cycle Matching} + \underbrace{\zeta_\mathrm{tgd} \, \mathcal{L}_\mathrm{tgd}}_\text{Time Guidance} 
    + \underbrace{\zeta_\mathrm{er} \, \mathcal{L}_{\mathrm{er}}}_\text{Encoder Reg.}
    + \underbrace{\zeta_\mathrm{vr} \, \mathcal{L}_{\mathrm{vr}}}_\text{Vel. Reg.}  + \underbrace{\zeta_\mathrm{sci} \, \mathcal{L}_{\mathrm{sci}}}_\text{Cond. Interpolation} 
\end{equation}
where $\zeta_\mathrm{vi}, \zeta_\mathrm{lcm}, \zeta_\mathrm{tgd}, \zeta_\mathrm{er}, \zeta_\mathrm{vr}, \zeta_\mathrm{sci} \in \mathbb{R}$ are scalar loss weights.
$\mathcal{L}_\mathrm{vi}$ is a loss term that enforces that the velocity of the motion primitive matches the one demanded by the demonstration at all samples in the demonstration dataset, $\mathcal{L}_\mathrm{lcm}$ ensures that the encoded demonstration positions lie on the latent limit cycle. The time-guidance loss $\mathcal{L}_\mathrm{tgd}$ can support the limit cycle matching loss for highly curved demonstrations. 
The term $\mathcal{L}_\mathrm{er}$ regularizes the encoder by penalizing deviations from the identity encoder.
The optional $\mathcal{L}_{\mathrm{sci}}$ gives rise to smooth interpolation between different encoder conditioning.
The discretionary velocity regularization loss $\zeta_\mathrm{vr}$ increases the numerical stability by penalizing very high velocities.
More details on the various loss terms can be found in the supplementary materials~\cite{methods}.

\subsection{Phase Synchronization of Multiple Motion Primitives}
In many real-world applications, it is essential to synchronize the phases of multiple learned (orbital) motion primitives~\cite{gams2015accelerating}. For instance, in turtle swimming, the phases of the two limbs must align, while in (human) walking, the periodic movement of the two legs should be offset by $\pi$. To address this, we developed a controller that can synchronize the motion of two or more systems.
Here, we consider that we trained $n_\mathrm{s}$ OSMPs. We refer to the latent state of the $i$th system, where $i \in \mathbb{N}_{n_\mathrm{s}}$, as ${}_{i} y$. The polar phase of each system is given by ${}_{i} \varphi = \operatorname{atan2}({}_{i} y_2, {}_{i} y_1)$. We then construct a symmetric matrix $\delta \Phi^* \in \mathbb{R}^{(n_\mathrm{s}-1) \times (n_\mathrm{s}-1)}$ that contains the desired phase offsets. For example, $\delta \Phi^*_{ij} = \delta \Phi^*_{ji} \in [-\pi, \pi)$ specifies the desired phase offset between the $i$th and the $j$th system. In the case of $\delta \Phi^* = 0^{(n_\mathrm{s}-1) \times (n_\mathrm{s}-1)}$, we ask the phase difference between all systems to be zero.
We then adopt a technique from the field of network synchronization~\cite{dorfler2014synchronization} that allows the alignment of the OSMPs in phase. Namely, we define a feedback controller that acts on the angular velocity of the latent system
\begin{equation}
    {}_{i} \tilde{\omega}(\varphi) = {}_{i} \omega({}_{i}\varphi) \, \left (1  -k_\mathrm{ps} \sum_{j=1}^{n_\mathrm{s}} \sin \left ( \delta \Phi^*_{ij} + {}_{i} \varphi - {}_{j} \varphi \right ) \right ),
\end{equation}
where $\omega(\varphi), \tilde{\omega} \in \mathbb{R}$ are the default and modified polar angular velocities of the systems, respectively. 
$k_\mathrm{ps} \in \mathbb{R}_{>0}$ is the phase synchronization gain that determines how quickly the systems synchronize.

\begin{figure}[h!]
    \centering
    \includegraphics[width=1.0\linewidth]{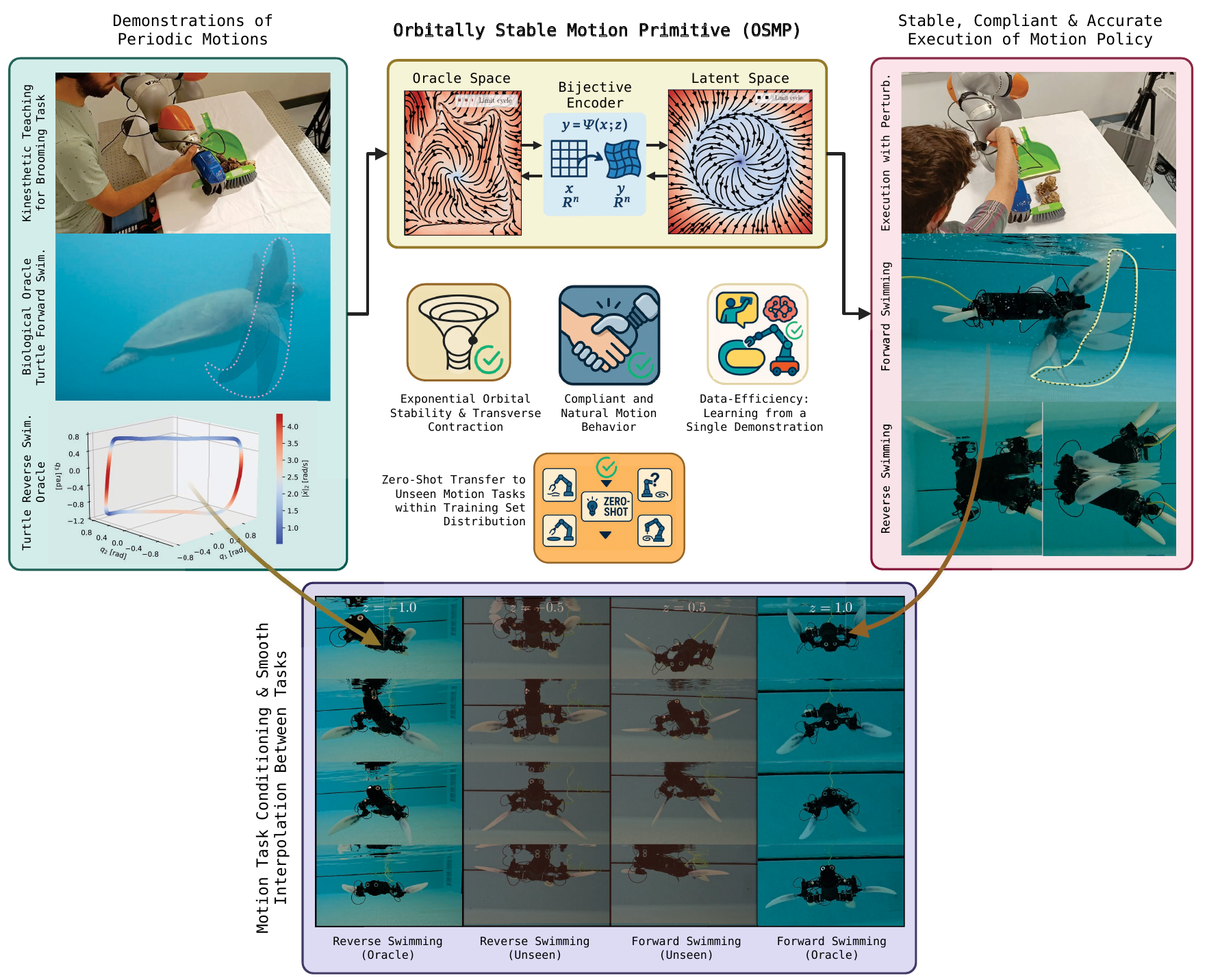}
    \caption{
    \textbf{Overview of Features and Characteristics of Orbitally Stable Motion Primitives (OSMPs).}
    OSMPs enable the learning of periodic motions from demonstrations while guaranteeing exponential orbital stability \& transverse contraction, instilling compliant and natural motion behavior, and outstanding data efficiency as they are able to learn complex motion behaviors from just one demonstration. Furthermore, a motion task conditioning allows the same motion policy to exhibit multiple distinct behaviors while a specially crafted loss term encourages smooth interpolation between motion tasks seen during training leading to meaningful motion behaviors on task unseen during training (i.e., zero-shot transfer).
    }
    \label{fig:concept_overview}
\end{figure}

\begin{figure}[h!]
    \centering
    \includegraphics[width=1.0\linewidth]{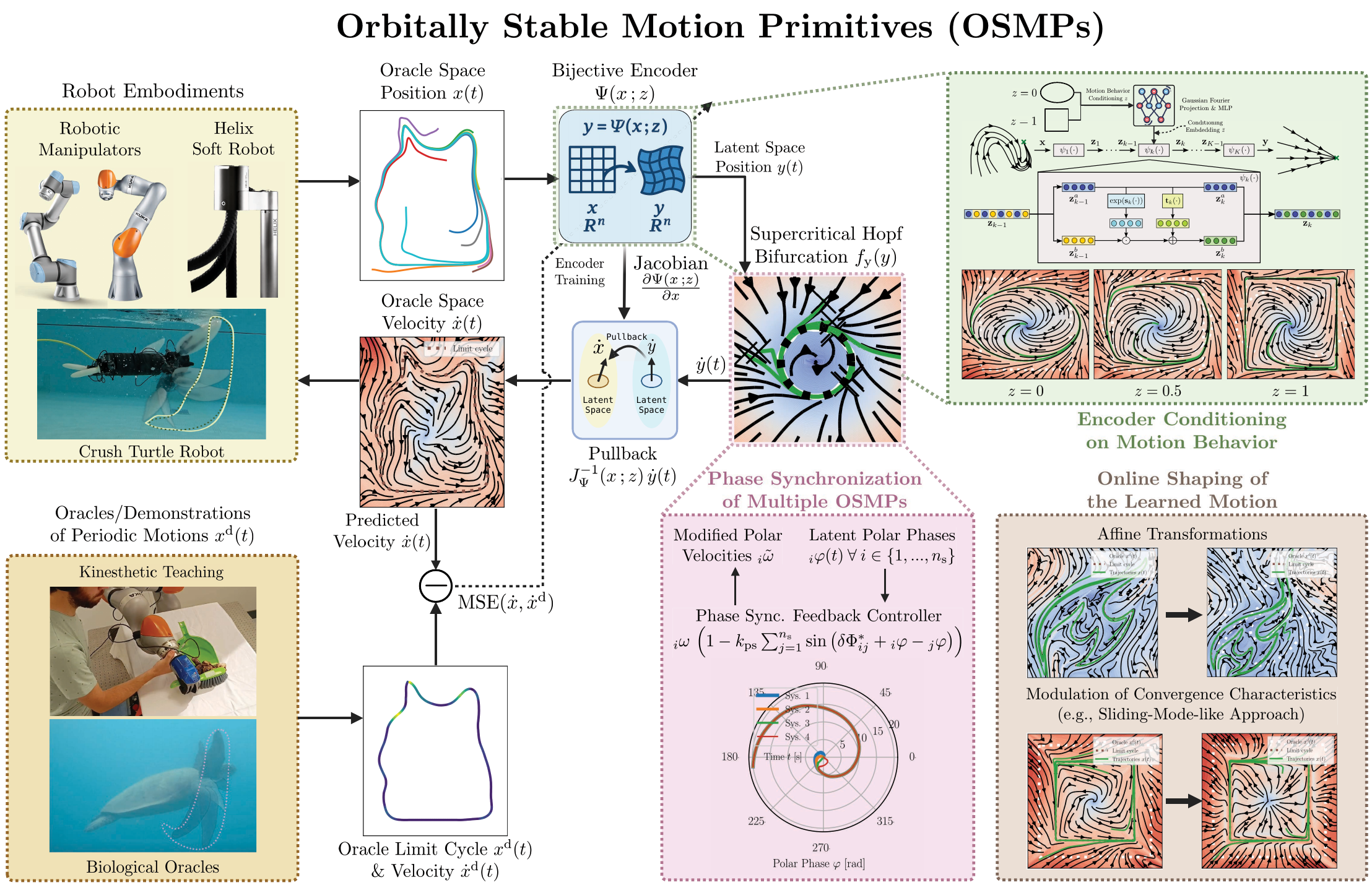}
    \caption{
    \textbf{Methodology of Orbitally Stable Motion Primitives (OSMPs).}
    Periodic motions can be learned from demonstration via a DMP that combines a bijective encoder with latent space dynamics defined by a supercritical Hopf bifurcation. After encoding the current oracle space position of the system into latent space and predicting the latent space velocity, the pullback operator projects the velocity back into oracle space and represents a motion reference for the various robot embodiments. During training, we enforce both the predicted velocity and the exhibited limit cycle to match the demonstrations that are, for example, provided via kinesthetic teaching or based on biological oracles.
    Multiple contributions increase the practical usefulness of the proposed methods, which include a technique to synchronize multiple OSMPs in phase, an approach to shape the learned motion online without requiring retraining via affine transformations or even modulating the convergence characteristics, and a methodology for conditioning the encoder on the oracle which allows the OSMP to capture multiple distinct motion behaviors and even smoothly interpolate between them.
    The depiction of the Euclideanizing flows architecture is adapted from Rana \textit{et al.}~\cite{rana2020euclideanizing}.
    }
    \label{fig:methodology_overview}
\end{figure}

\begin{figure}[ht!]
    \centering
    \includegraphics[width=1.0\linewidth]{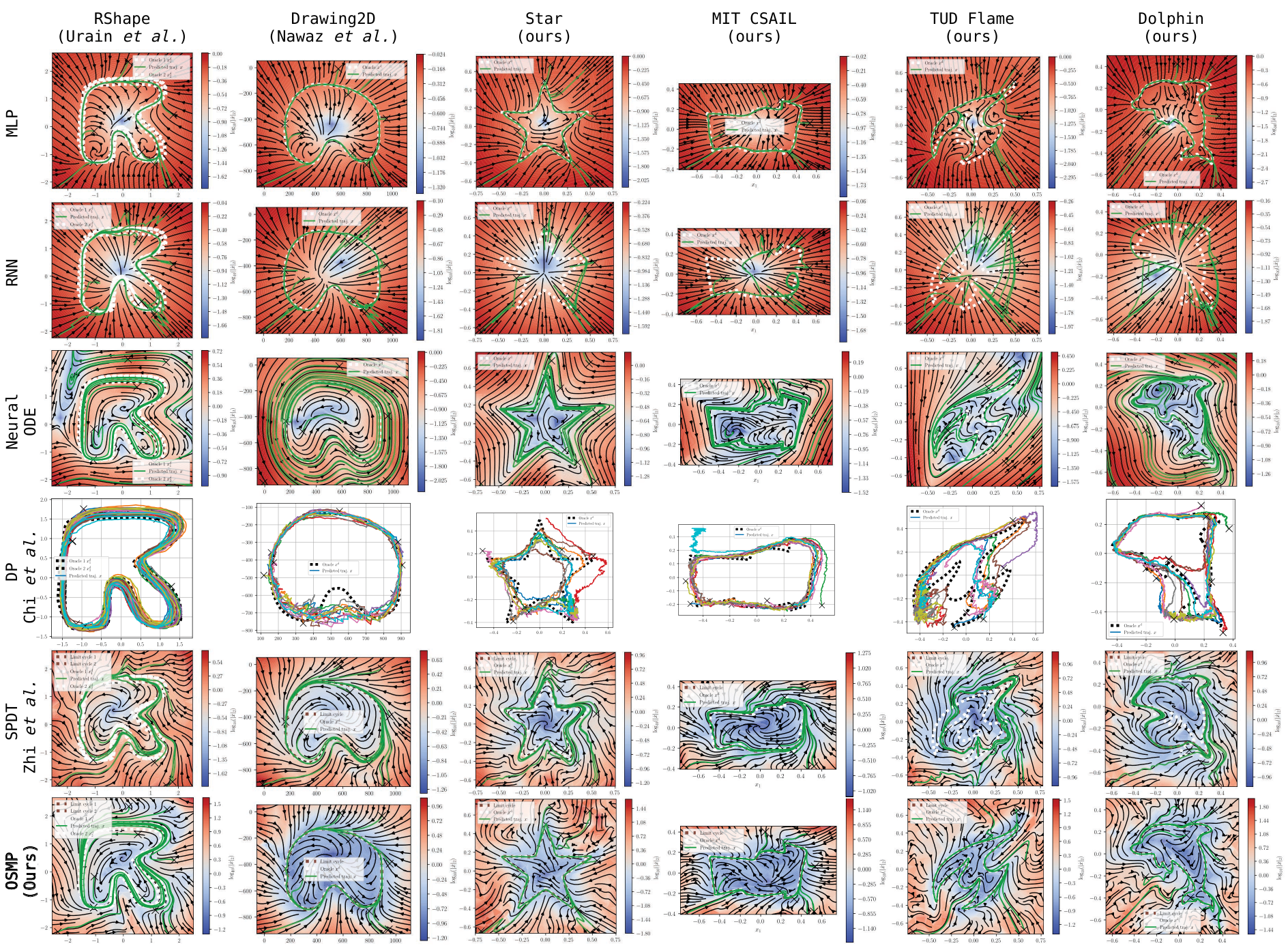}
    \caption{\textbf{Qualitative benchmarking of OSMP against baselines.}
    In this figure, we display the qualitative benchmarking results when comparing the proposed OSMP against baseline methods, such as MLPs, RNNs, NODEs, DPs~\cite{chi2023diffusion}, or SPDT~\cite{zhi2024teaching}. The various columns represent oracles on which the motion policies were separately trained, shown as white dotted lines on the plots. The color map and the streamlines denote the velocity of the learned motion policy when evaluated on a grid. We initialize the trained motion policies at 10 different randomly sampled initial conditions and roll out their trajectory, visualized using solid green lines, for a duration of one period.
    }
    \label{fig:qualitative_benchmarking_results}
\end{figure}

\begin{figure}
    \centering
    \includegraphics[width=1.0\linewidth]{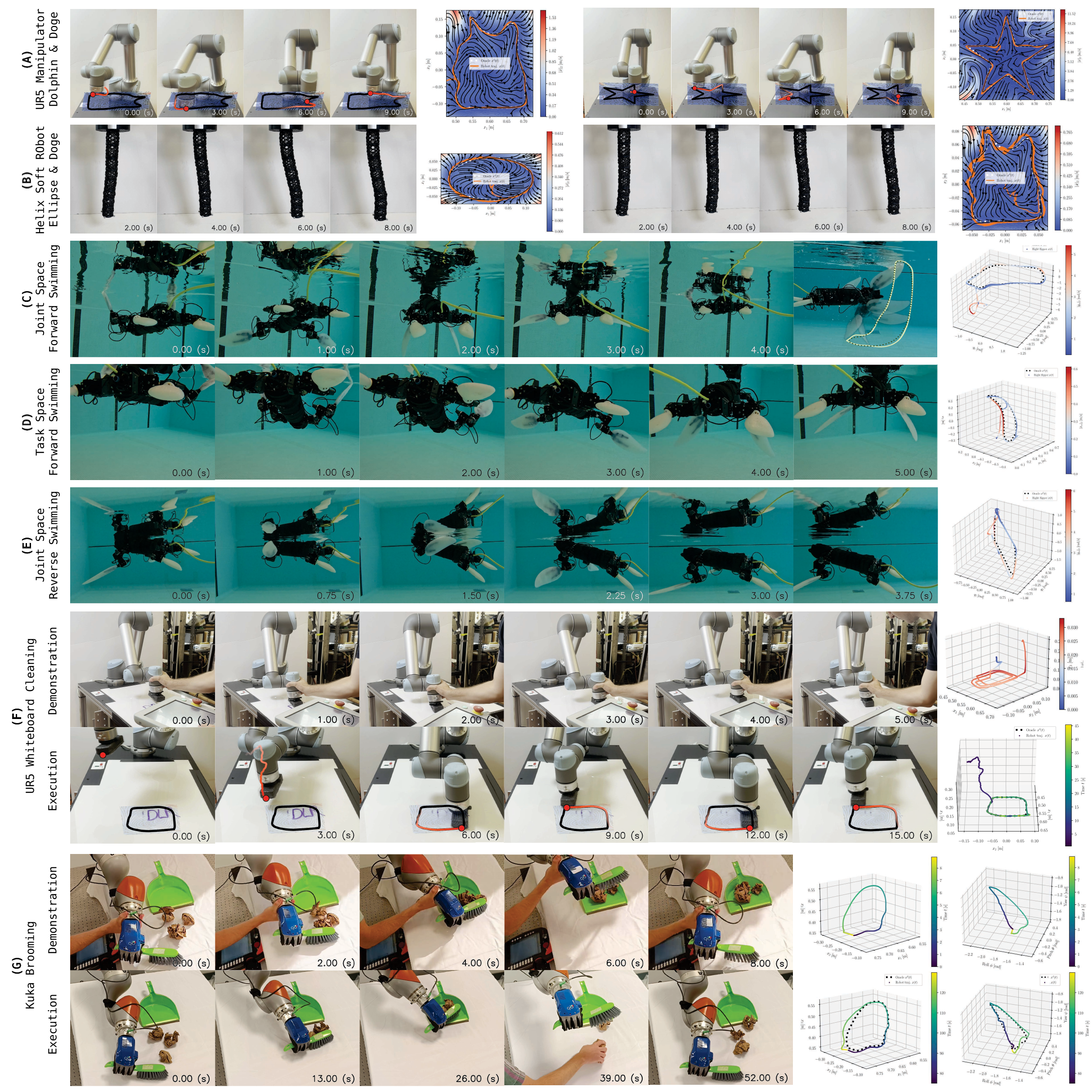}
    \caption{\textbf{Deployment of OSMPs on diverse robot embodiments.}
    We showcase the motion behavior generated by OSMPs deployed on various robot embodiments.
    \textbf{(A \& B)} Tracking of image contours with the UR5 manipulator and the helix soft robot. The black line denotes the oracle shape and the red line the trajectory of the system over a horizon of the past $3$~s.
    \textbf{(C-E)} Forward and reverse turtle robot swimming with the biological oracles defined either in joint or task space.
    \textbf{(F-G)} Demonstrations provided via kinesthetic teaching on whiteboard cleaning (UR5 manipulator) and brooming tasks (Kuka cobot) and subsequent execution of OSMPs trained on these demonstrations.
    }
    \label{fig:robot_embodiments_results}
\end{figure}

\begin{figure}[h!]
    \centering
    \includegraphics[width=0.8\linewidth]{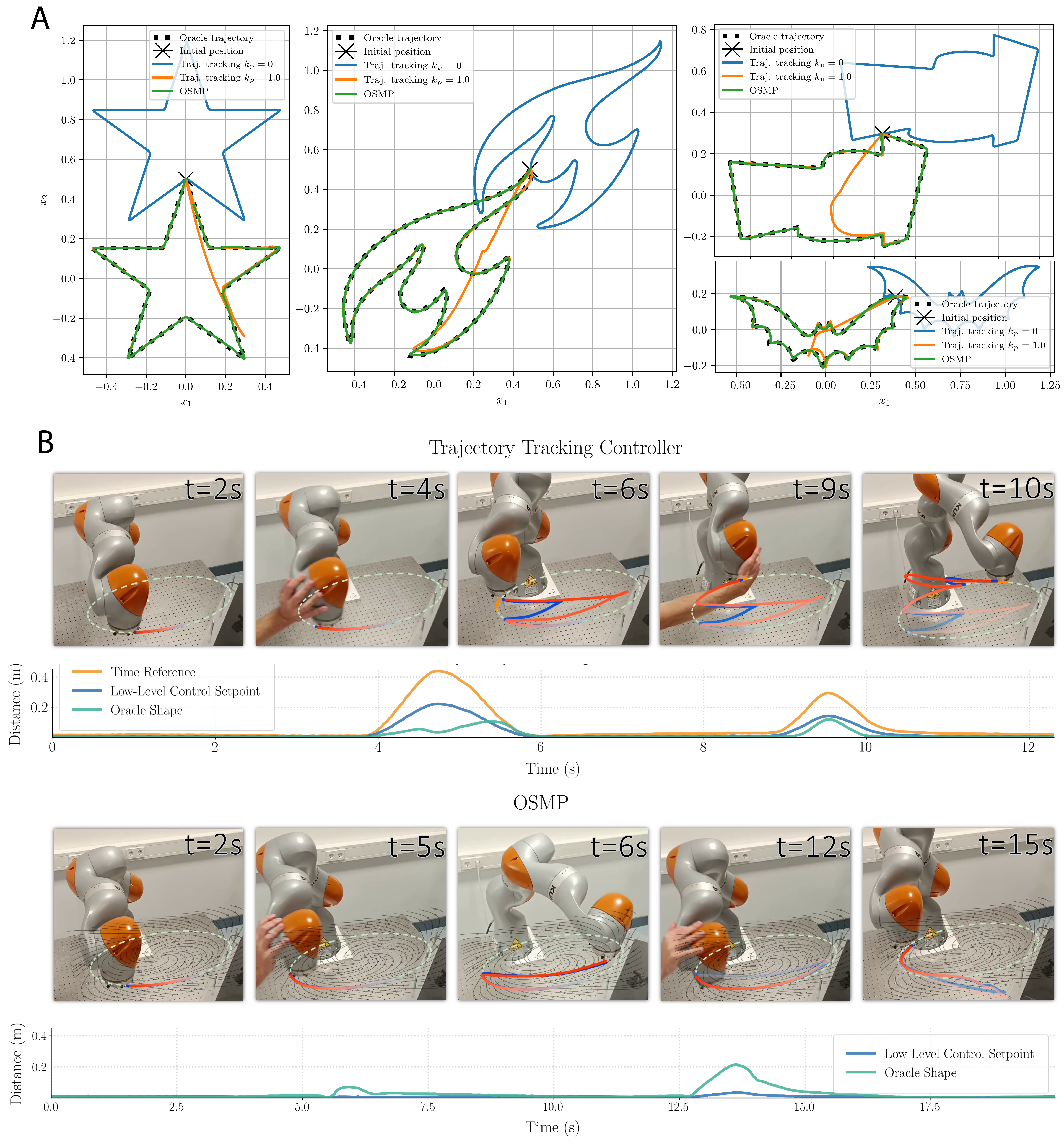}
    \caption{{\textbf{OSMPs exhibit compliant and natural motion behavior.}
    In this figure, we analyze the motion behavior of OSMPs upon perturbations and interaction with humans and the environment. We compare the behavior of OSMPs against classical error-based trajectory tracking controllers $\dot{x}(t) = \dot{x}^\mathrm{d}(t) + k_\mathrm{p} \, (x^\mathrm{d}(t) - x(t))$ that operate on a time-parametrized reference $\dot{x}^\mathrm{d}(t)$, where $k_\mathrm{p} \in \mathbb{R}_{\geq 0}$ is the proportional control gain.
    \textbf{(A)} Simulations with time perturbations where we compare the behavior of traditional, time-parametrized trajectory tracking controllers against the OSMP. Here, the dashed black lines denote the oracles/demonstrations, the solid blue lines the behavior of a pure feedforward trajectory tracking controller with zero feedback gain $k_\mathrm{p} = 0$, the orange line the behavior of an error-based feedback trajectory tracking controller with $k_\mathrm{p} = 1$, and the green line the behavior of the learned OSMP. Compared to nominal scenarios, we perturb the time reference - i.e., the time reference exhibits a $\pi$ offset in phase with respect to the initial position.
    \textbf{(B)} Experiments on a Kuka cobot that runs a compliant low-level impedance controller where a human interacts with the robot and exerts disturbances on the robot, comparing the behavior of the trajectory tracking controller with a feedback gain $k_\mathrm{p} = 1$. We also plot the Cartesian distance with respect to the time reference (trajectory tracking controller only), the next low-level control setpoint $x*(t)$, and the closest point on the oracle over time.
    Experiments on a Kuka cobot equipped with a compliant, low-level impedance controller while a human interacted with the robot and deliberately introduced disturbances/perturbations. The study compares the trajectory-tracking controller’s behavior with a feedback gain of $k_\mathrm{p}=1$ against that of the OSMP. We also plot, over time, the Cartesian distance to the time-indexed reference (trajectory-tracking controller only), the next low-level control setpoint $x^*(t)$, and the distance to the nearest point on the Ellipse oracle.
    In the snapshots, the red line represents the past trajectory, and the blue line the trajectory of past low-level control setpoints.
    }}
    \label{fig:compliance_results}
\end{figure}

\begin{figure}[h!]
    \centering
    \includegraphics[width=1.0\linewidth]{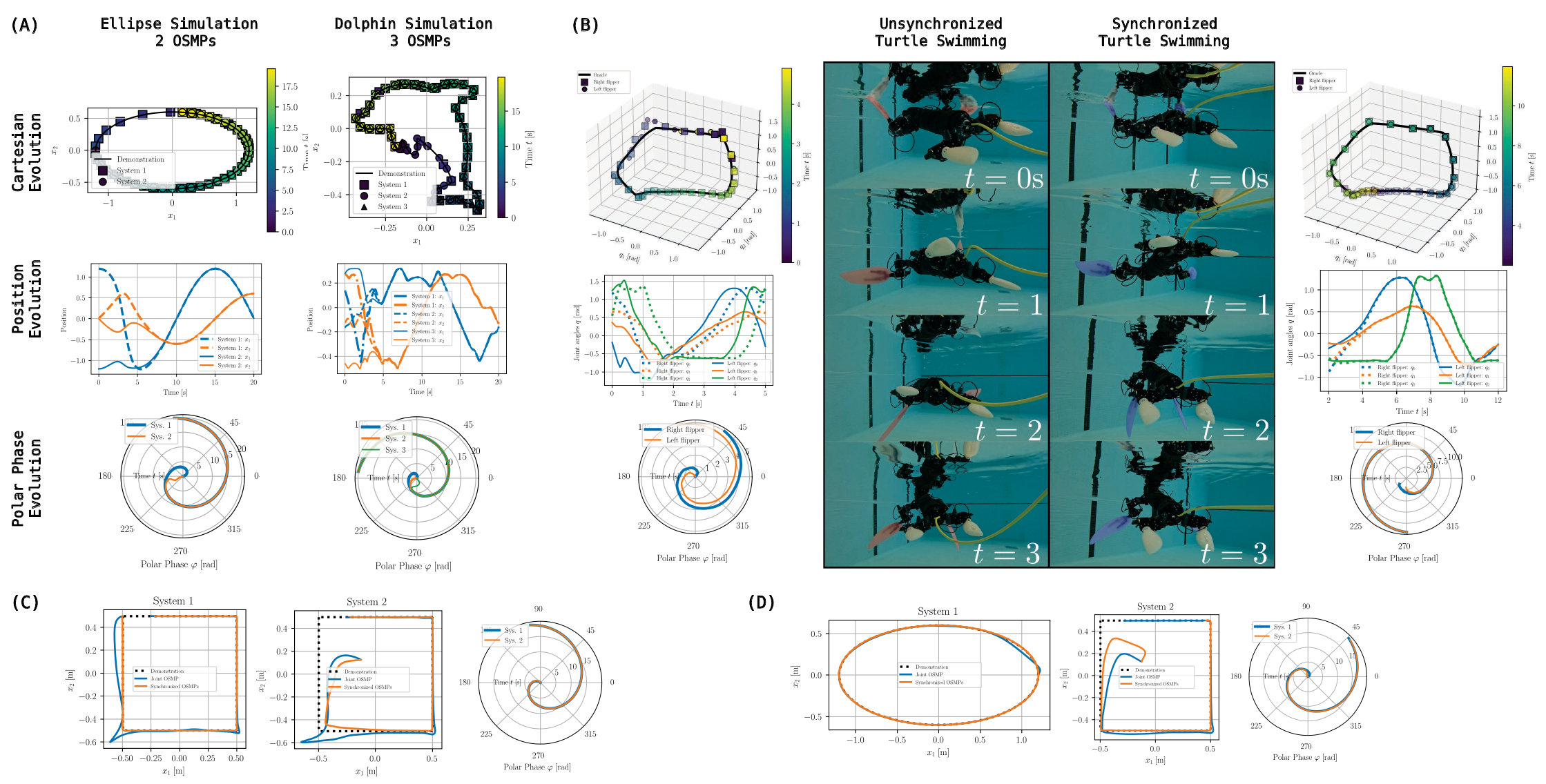}
    \caption{\textbf{The error-based feedback controller successfully synchronizes multiple OSMPs in their phase.}
    Results for the phase synchronization of multiple motion policies.
    \textbf{(A)} Simulation results for the phase synchronization of two and three OSMPs on the Ellipse and Dolphin oracles, respectively. The first row shows the Cartesian-space evolution of each OSMP, where the color of the markers communicates the time information. The second row shows the position vs. time, and the last row shows the polar phase $\varphi$ of the systems over time.
    \textbf{(B)} Experimental results for phase synchronization of the robotic turtle flippers for turtle swimming based on the joint-space oracle. Naturally, the flippers are initialized at the start of the experiment at slightly different positions, causing a phase offset. If this phase offset is not corrected and the flippers remain unsynchronized (left side), the turtle robot doesn't swim or only very slowly. On the right side, the flippers rapidly synchronize, which leads to successful swimming.
    \textbf{(C \& D)} Comparison of a ``\emph{large}'', joined OSMP trained on two systems compared to ``\emph{small}'' OSMPs trained on each system separately and then synchronized during inference. When one of the systems is initialized off the oracle/perturbed, this leads the other system controlled by the joined OSMP to drift off its own limit cycle as well. Contrarily, the synchronized separate OSMPs do not cause each other to drift off the limit cycle.
    }
    \label{fig:phase_sync_results}
\end{figure}

\begin{figure}[h]
    \centering
    \includegraphics[width=1.0\linewidth]{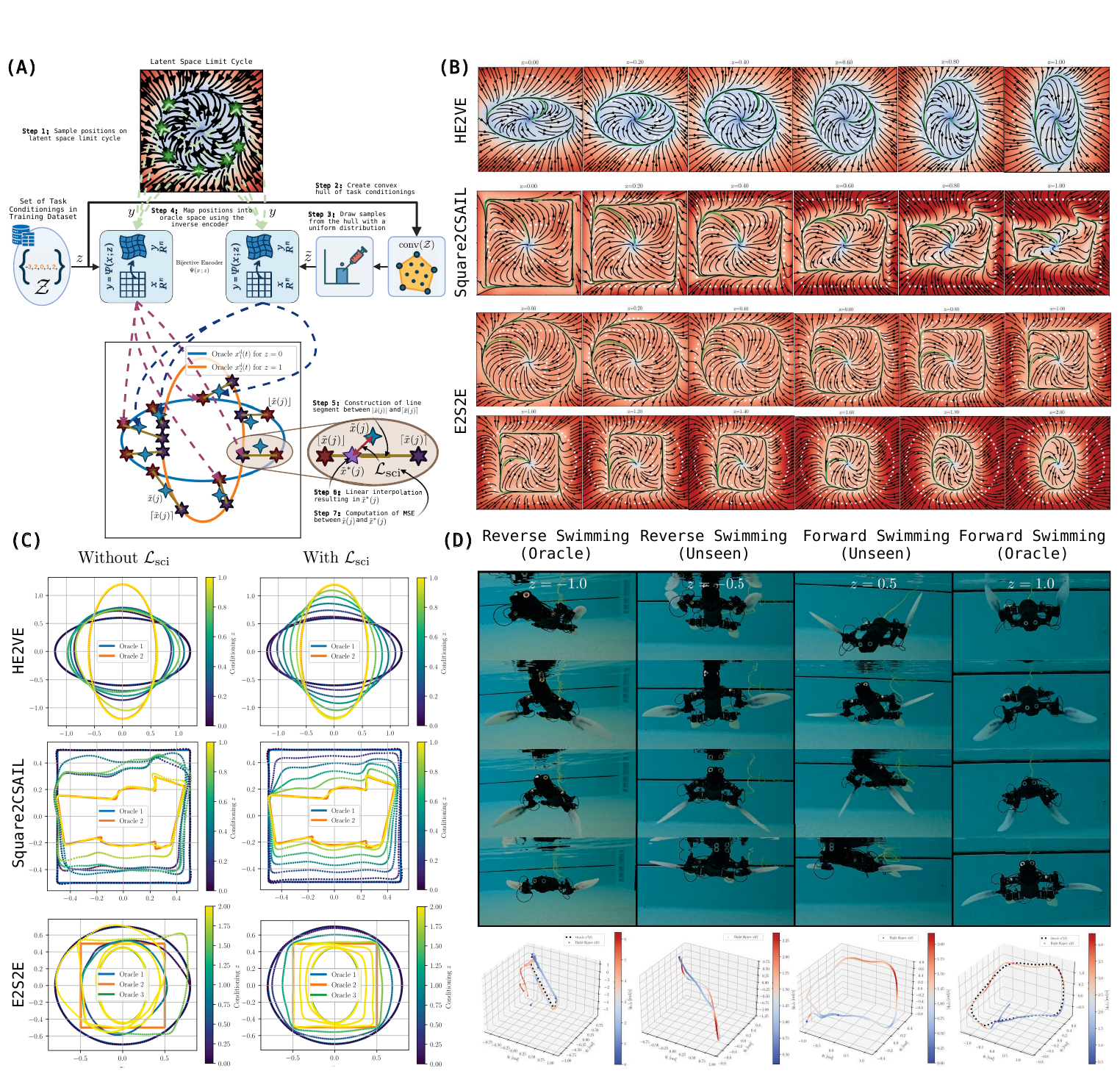}
    \caption{
    \textbf{Smooth interpolation between distinct motion behaviors via encoder conditioning.}
    Results demonstrating the conditioning of the encoder on multiple learned motion behaviors, including smooth interpolation between behaviors.
    \textbf{(A)} Graphic explaining the formulation of the $\mathcal{L}_\mathrm{sci}$ loss.
    \textbf{(B)} OSMPs trained with $\mathcal{L}_\mathrm{sci}$ on multiple image contour transition datasets - horizontal ellipse to vertical ellipse (HE2VE), square to CSAIL logo (Square2CSAIL), and horizontal ellipse to square to - evaluated for various, both seen and unseen during training, task conditioning values. 
    \textbf{(C)} Ablation study for OSMPs trained without and with the smooth conditioning interpolation loss $\mathcal{L}_\mathrm{sci}$, where we smoothly vary the conditioning value over the duration of the trajectory (e.g., $z=0$ at $t=0$ to $z=1$ at $t=140$~s)
    \textbf{(D)} Turtle swimming motion for various conditioning values, ranging from $z=-1$ (reverse swimming joint space oracle) to $z=1$ (forward swimming joint space oracle), with $z=-0.5$ and $z=0.5$ unseen during training.
    }
    \label{fig:conditioning_results}
\end{figure}


	

\begin{table}[htbp]\centering
    \centering
    \caption{\textbf{Orbitally Stable Motion Primitives (OSMPs) provide the best tradeoff between imitation accuracy, global convergence characteristics, and computational time when compared to classical neural motion policies (e.g., MLPs, RNNs, NODEs), current SOTA algorithms such as Diffusion Policies (DP)~\cite{chi2023diffusion}, and prior work on period motion policies with stability guarantees (e.g., iFlow~\cite{urain2020imitationflow}, DT~\cite{zhi2024teaching})}. We report the mean $\pm$ std across three random seeds for each dataset/method evaluation. Entries marked with a $^*$ diverged for some random seeds, and we only report the statistics for the seeds that converged. We label other cases where all random seeds diverged or exhibited extremely high errors using the \emph{$\infty$} symbol.}
    \label{tab:benchmarking_quantitative_results} 
    \setlength{\tabcolsep}{2.5pt}   
    \renewcommand{\arraystretch}{1.2} 
    \begin{tiny}
    \begin{tabular}{|l|l|ccc|cc|cc|c|}   
    \hline
    & & \multicolumn{3}{c|}{\textbf{Imitation Metrics}} & \multicolumn{2}{c|}{\textbf{Local Convergence Metrics}} & \multicolumn{2}{c|}{\textbf{Global Convergence Metrics}} & \textbf{Eval. Time}\\
    \cline{3-9}
    \textbf{Dataset Category} & \textbf{Method} & \textbf{Traj. RMSE~$\downarrow$} & \textbf{Norm. Traj. DTW~$\downarrow$} & \textbf{Vel. RMSE~$\downarrow$} & \textbf{Hausdorff Dist.~$\downarrow$} & \textbf{ICP MED~$\downarrow$} & \textbf{Hausdorff Dist.~$\downarrow$} & \textbf{ICP MED~$\downarrow$} & \textbf{per Step~$\downarrow$}\\
    \hline
    \multirow{8}{*}{IROS Letters~\cite{urain2020imitationflow}} & MLP & $0.257 \pm 0.011$ & $0.0745 \pm 0.0042$ & $0.630 \pm 0.011$ & $0.049 \pm 0.003$ & $0.011 \pm 0.000$ & $0.039 \pm 0.003$ & $0.011 \pm 0.000$ & $0.0017$\\
    & Elman RNN & $0.267 \pm 0.011$ & $0.0935 \pm 0.0286$ & $0.680 \pm 0.004$ & $0.092 \pm 0.011$ & $0.013 \pm 0.002$ & $0.048 \pm 0.014$ & $0.013 \pm 0.003$ & $0.0009$\\
    & LSTM & $0.406 \pm 0.058$ & $0.2719 \pm 0.0473$ & $0.777 \pm 0.055$ & $0.267 \pm 0.104$ & $0.032 \pm 0.008$ & $0.149 \pm 0.050$ & $0.019 \pm 0.008$ & $0.0010$\\
    & NODE & $0.751 \pm 0.573$ & $0.5894 \pm 0.5618$ & $0.856 \pm 0.062$ & $\infty$ & $\infty$ & $\infty$ & $\infty$ & $0.0009$\\
    & DP~\cite{chi2023diffusion} & $\mathbf{0.255 \pm 0.004}$ & $\mathbf{0.0891 \pm 0.0029}$ & $\mathbf{0.648 \pm 0.004}$ & $0.103 \pm 0.001$ & $0.024 \pm 0.001$ & $0.101 \pm 0.001$ & $0.024 \pm 0.001$ & $0.0272$\\
    & iFlow~\cite{urain2020imitationflow} & $0.783 \pm 0.481$ & $0.6832 \pm 0.6326$ & $1.317 \pm 0.895$ & $1.248 \pm 1.201$ & $0.441 \pm 0.449$ & $1.242 \pm 1.204$ & $0.441 \pm 0.449$ & $\mathbf{0.0007}$\\
    & SPDT~\cite{zhi2024teaching} & $0.374 \pm 0.002$ & $0.1406 \pm 0.0017$ & $0.707 \pm 0.004$ & $0.103 \pm 0.002$ & $0.027 \pm 0.001$ & $0.098 \pm 0.003$ & $0.027 \pm 0.001$ & $0.0019$\\
    & \textbf{OSMP (ours)} & $0.344 \pm 0.007$ & $0.1759 \pm 0.0010$ & $0.895 \pm 0.007$ & $\mathbf{0.044 \pm 0.001}$ & $\mathbf{0.010 \pm 0.000}$ & $\mathbf{0.032 \pm 0.001}$ & $\mathbf{0.009 \pm 0.000}$ & $0.0020$\\
    \hline
    \multirow{8}{*}{Drawing2D~\cite{nawaz2024learning}} & MLP & $\mathbf{0.039 \pm 0.004}$ &$\mathbf{0.0060 \pm 0.0002}$ & $\mathbf{0.071 \pm 0.004}$ & $0.046 \pm 0.003$ & $\mathbf{0.005 \pm 0.000}$ & $\mathbf{0.015 \pm 0.001}$ & $\mathbf{0.004 \pm 0.000}$ & $0.0008$\\
    & Elman RNN & $0.109 \pm 0.009$ & $0.0457 \pm 0.0382$ & $0.199 \pm 0.011$ & $0.119 \pm 0.002$ & $0.012 \pm 0.001$ & $0.041 \pm 0.003$ & $0.009 \pm 0.003$ & $0.0007$\\
    & LSTM & $0.384 \pm 0.039$ & $0.2891 \pm 0.0911$ & $0.482 \pm 0.121$ & $0.191 \pm 0.021$ & $0.017 \pm 0.004$ & $0.079 \pm 0.043$ & $0.006 \pm 0.005$ & $0.0008$\\
    & NODE & $0.075 \pm 0.039$ & $0.0279 \pm 0.0300$ & $0.093 \pm 0.028$ & $\infty$ & $\infty$ & $\infty$ & $\infty$ & $0.0008$\\
    & DP~\cite{chi2023diffusion} & $0.178 \pm 0.028$ & $0.0858 \pm 0.0238$ & $0.245 \pm 0.015$ & $0.156 \pm 0.007$ & $0.031 \pm 0.001$ & $0.154 \pm 0.006$ & $0.030 \pm 0.001$ & $0.0494$\\
    & iFlow & $0.337 \pm 0.264$ & $0.2526 \pm 0.3407$ & $0.347 \pm 0.296$ & $0.921 \pm 1.238$ & $0.124 \pm 0.166$ & $0.907 \pm 1.243$ & $0.124 \pm 0.166$ & $\mathbf{0.0007}$\\
    & SPDT~\cite{zhi2024teaching} & $0.462 \pm 0.001$ & $0.2866 \pm 0.0020$ & $0.528 \pm 0.019$ & $0.070^* \pm 0.022^*$ & $0.014^* \pm 0.006^*$ & $0.049^* \pm 0.025^*$ & $0.014^* \pm 0.006^*$ & $0.0017$\\
    & \textbf{OSMP (ours)} & $0.053 \pm 0.002$ & $0.0062 \pm 0.0001$ & $0.081 \pm 0.000$ & $\mathbf{0.040 \pm 0.000}$ & $\mathbf{0.005 \pm 0.000}$ & $\mathbf{0.016 \pm 0.000}$ & $\mathbf{0.004 \pm 0.000}$ & $0.0019$\\
    \hline
    \multirow{8}{*}{Image Cont. (ours)} & MLP & $0.124 \pm 0.023$ & $0.1049 \pm 0.0237$ & $0.106 \pm 0.004$ & $0.051 \pm 0.003$ & $0.006 \pm 0.001$ & $0.030 \pm 0.005$ & $0.004 \pm 0.001$ & $0.0012$\\
    & Elman RNN & $0.318 \pm 0.021$ & $0.2793 \pm 0.0197$ & $0.397 \pm 0.034$ & $0.118 \pm 0.013$ & $0.010 \pm 0.000$ & $0.059 \pm 0.020$ & $0.004 \pm 0.002$ & $0.0012$\\
    & LSTM & $0.494 \pm 0.024$ & $0.5415 \pm 0.0071$ & $0.719 \pm 0.073$ & $0.220 \pm 0.023$ & $0.012 \pm 0.004$ & $0.098 \pm 0.032$ & $0.003 \pm 0.002$ & $0.0385$\\
    & NODE & $\infty$ & $\infty$ & $\infty$ & $\infty$ & $\infty$ & $\infty$ & $\infty$ & $\mathbf{0.0009}$\\
    & DP~\cite{chi2023diffusion} & $0.158 \pm 0.016$ & $0.0928 \pm 0.0078$ & $0.228 \pm 0.008$ & $0.113 \pm 0.005$ & $0.023 \pm 0.002$ & $0.125 \pm 0.005$ & $0.019 \pm 0.001$ & $0.0410$\\
    & iFlow~\cite{urain2020imitationflow} & $0.337 \pm 0.114$ & $0.2342 \pm 0.1799$ & $0.792 \pm 0.280$ & $0.688 \pm 0.608$ & $0.047 \pm 0.045$ & $0.679 \pm 0.613$ & $0.047 \pm 0.045$ & $0.0014$\\
    & SPDT~\cite{zhi2024teaching} & $0.433 \pm 0.000$ & $0.3453 \pm 0.0005$ & $0.454 \pm 0.009$ & $0.114 \pm 0.014$ & $0.024 \pm 0.000$ & $0.094^* \pm 0.002^*$ & $0.023^* \pm 0.001^*$ & $0.0027$\\
    & \textbf{OSMP (ours)} & $\mathbf{0.033 \pm 0.009}$ & $\mathbf{0.0129 \pm 0.0086}$ & $\mathbf{0.050 \pm 0.001}$ & $\mathbf{0.043 \pm 0.002}$ & $\mathbf{0.004 \pm 0.001}$ & $\mathbf{0.016 \pm 0.002}$ & $\mathbf{0.003 \pm 0.000}$ & $0.0028$\\
    \hline
    \multirow{8}{*}{Turtle Swim. (ours)} & MLP & $0.138 \pm 0.001$ & $0.2089 \pm 0.0003$ & $0.294 \pm 0.003$ & $\mathbf{0.084 \pm 0.001}$ & $\mathbf{0.112 \pm 0.041}$ & $\mathbf{0.036 \pm 0.009}$ & $\mathbf{0.089 \pm 0.062}$ & $0.0008$\\
    & Elman RNN & $0.174 \pm 0.021$ & $0.1756 \pm 0.0286$ & $1.165 \pm 0.120$ & $0.177 \pm 0.037$ & $0.126 \pm 0.020$ & $0.109 \pm 0.056$ & $0.120 \pm 0.013$ & $\mathbf{0.0006}$\\
    & LSTM & $0.476 \pm 0.096$ & $0.5712 \pm 0.2284$ & $2.748 \pm 0.518$ & $0.317 \pm 0.095$ & $0.042 \pm 0.020$ & $0.101 \pm 0.070$ & $0.008 \pm 0.009$ & $0.0008$\\
    & NODE & $0.085 \pm 0.022$ & $0.0621 \pm 0.0223$ & $0.236 \pm 0.016$ & $5.133 \pm 1.414$ & $1.085 \pm 0.247$ & $3198 \pm 1800$ & $599 \pm 332$ & $0.0007$\\
    & DP~\cite{chi2023diffusion} & $0.211 \pm 0.014$ & $0.1813 \pm 0.0526$ & $1.511 \pm 0.175$ & $0.183 \pm 0.017$ & $0.124 \pm 0.022$ & $0.190 \pm 0.035$ & $0.129 \pm 0.004$ & $0.0382$\\
    & iFlow~\cite{urain2020imitationflow} & $0.242 \pm 0.144$ & $0.2129 \pm 0.2555$ & $0.739 \pm 0.329$ & $0.590^* \pm 0.323^*$ & $0.183^* \pm 0.123^*$ & $0.391 \pm 0.376$ & $0.187 \pm 0.096$ & $0.0013$\\
    & SPDT~\cite{zhi2024teaching} & $0.321 \pm 0.009$ & $0.2616 \pm 0.0114$ & $0.671 \pm 0.029$ & $0.331 \pm 0.042$ & $0.137 \pm 0.005$ & $0.233 \pm 0.071$ & $0.154 \pm 0.027$ & $0.0016$\\
    & \textbf{OSMP (ours)} & $\mathbf{0.009 \pm 0.000}$ & $\mathbf{0.0056 \pm 0.0008}$ & $\mathbf{0.052 \pm 0.000}$ & $0.135 \pm 0.021$ & $0.114 \pm 0.015$ & $0.045 \pm 0.023$ & $0.112 \pm 0.011$ & $0.0017$\\
    \hline
    \end{tabular}
    \end{tiny}
\end{table}


\clearpage 

%
\bibliography{main} 
\bibliographystyle{sciencemag}

%
%
%
%
%
%


\section*{Acknowledgments}

\paragraph*{Funding:}
The M.S. was supported under the European Union's Horizon Europe Program from Project EMERGE - Grant Agreement No. 101070918 and by the Cultuurfonds Wetenschapsbeurzen. R.P.D was supported by the National Growth Fund program NXTGEN Hightech.
\paragraph*{Author contributions:}
M.S. proposed the research idea. M.S., Z.J.P, T.K.R., C.D.S, and D.R. developed the research idea. 
M.S. developed the framework for training the orbital motion primitives.
Z.J.P. and M.S. performed the turtle robot experiments; M.S. conducted the UR5 robotic manipulator experiments; R.P.D. and M.S. executed the Kuka cobot experiments; M.S. performed the Helix soft robot experiments. 
M.S., Z.J.P, T.K.R., and R.P.D. performed the data analysis.
M.S., Z.J.P, T.K.R., and R.P.D. wrote the manuscript. All authors revised the manuscript. D.R., C.D.S, T.K.R, and Z.J.P supervised the research project. D.R., C.D.S, and J.H. provided funding. All authors gave final approval for publication.

\paragraph*{Competing interests:}
There are no competing interests to declare.
\paragraph*{Data and materials availability:}




Upon publication, we will fully open-source our work. The planned GitHub repositories will host (i) code for training and inference with OSMPs, (ii) all benchmark datasets, (iii) pre-trained models, (iv) evaluation results and plots, (v) MuJoCo turtle-robot simulation code, and (vi) OSMP-based turtle-swimming controllers. Larger supplemental datasets that exceed GitHub limits will be deposited in an open repository such as 4TU.ResearchData.


\subsection*{Supplementary materials}
Materials and Methods\\
Supplementary Text\\
Figs. S1 to S3\\
Tables S1 to S4\\
Movie S1\\


\newpage


\renewcommand{\thefigure}{S\arabic{figure}}
\renewcommand{\thetable}{S\arabic{table}}
\renewcommand{\theequation}{S\arabic{equation}}
\renewcommand{\thepage}{S\arabic{page}}
\setcounter{figure}{0}
\setcounter{table}{0}
\setcounter{equation}{0}
\setcounter{page}{1} 


\begin{center}
\section*{Supplementary Materials for\\ \scititle}

Maximilian~Stölzle,
T.~Konstantin~Rusch$^{\dagger}$,
Zach~J.~Patterson$^{\dagger}$,
Rodrigo~Pérez~Dattari,
Francesco~Stella,
Josie Hughes,
Cosimo~Della~Santina,
Daniela~Rus
\\ 
\small$^\ast$Corresponding author. Email: M.W.Stolzle@tudelft.nl\\
\small$^\dagger$These authors contributed equally to this work.
\end{center}

\subsubsection*{This PDF file includes:}
Materials and Methods\\
Supplementary Text\\
Figures S1 to S2\\
Tables S1 to S4\\
Captions for Movies S1 to S5\\

\subsubsection*{Other Supplementary Materials for this manuscript:}
Movies S1 to S5\\

\newpage


\subsection*{Materials and Methods}




\subsubsection{Diffeomorphic Encoder}

We consider a bijective encoder $\Psi_\theta : \mathbb{R}^n \times \mathbb{R} \to \mathbb{R}^n$, which maps positional states $x \in \mathbb{R}^n$ into the latent states $y \in \mathbb{R}^n$ conditioned on $z \in \mathbb{R}$, where we assume that $n \in \mathbb{N} \geq 2$.
The encoder $y = \Psi_\theta(x;z)$ is parametrized by the learnable weights $\theta \in \mathbb{R}^{n_\theta}$. We omit $\theta$ in most subsequent expressions for simplicity of notation.
If a conditioning is used, the encoder first embeds it with a function $e_\mathrm{z}: \mathbb{R} \to \mathbb{R}^{n_\mathrm{e}}$, which provides $\Bar{z} = e_\mathrm{z}(z)$. Customarily, we choose $n_\mathrm{e} = n$ and construct $e_\mathrm{z}$ using a Gaussian Fourier projection~\cite{chi2023diffusion} into a dimensionality of $4 \, n_\mathrm{e}$ and a two-layer MLP with hidden dimension of $8 \, n_\mathrm{e}$ and a softplus nonlinearity.
%
The encoder is composed by $n_\mathrm{b}$ blocks: $\Psi = \Psi_1 \circ \Psi_2 \dots \Psi_{n_\mathrm{b}}$, where $\Psi_j: \chi_{j-1} \in \mathbb{R}^n \in \mathbb{R} \mapsto \chi_{j} \in \mathbb{R}^n$. Therefore, $\chi_0 = x$ and $\chi_{n_\mathrm{b}} = y$.
In particular, we employ Euclideanizing flows~\cite{dinh2017density, rana2020euclideanizing}, which splits $\chi_j$ into two parts: $\chi_\mathrm{a} \in \mathbb{R}^{n_\mathrm{a}}$ and $\chi_\mathrm{b} \in \mathbb{R}^{n_\mathrm{b}}$ with $n_\mathrm{a} + n_\mathrm{b} = n$.
Then, for odd and even $j$, $\Psi_j$ is given by
\begin{equation}
    \chi_j = \begin{bmatrix}
        \chi_{\mathrm{a}, j}\\
        \chi_{\mathrm{b}, j}
    \end{bmatrix} = \Psi_j(\chi_{j-1}; \Bar{z}) = \begin{bmatrix}
        \chi_{\mathrm{a}, j-1}\\
        \chi_{\mathrm{b}, j-1} \odot \exp(s_{\theta_{\mathrm{s},j}}(\chi_{\mathrm{a}, j-1}; \Bar{z})) + t_{\theta_{\mathrm{t},j}}(\chi_{\mathrm{a}, j-1}; \Bar{z})
    \end{bmatrix},
\end{equation}
and
\begin{equation}
    \chi_j = \begin{bmatrix}
        \chi_{\mathrm{a}, j}\\
        \chi_{\mathrm{b}, j}
    \end{bmatrix} = \Psi_j(\chi_{j-1}; \Bar{z}) = \begin{bmatrix}
        \chi_{\mathrm{a}, j-1} \odot \exp(s_{\theta_{\mathrm{s},j}}(\chi_{\mathrm{b}, j-1}; \Bar{z})) + t_{\theta_{\mathrm{t},j}}(\chi_{\mathrm{b}, j-1}; \Bar{z})\\
        \chi_{\mathrm{b}, j-1}
    \end{bmatrix},
\end{equation}
respectively, which are diffeomorphisms by construction.
For conciseness and without loss of generality, we consider in the following only the case of odd $j$.
In this setting, $s_j(\chi_{\dot, j-1}, \Bar{z}): \mathbb{R}^{n_\mathrm{a}} \to \mathbb{R}^{n_\mathrm{b}}$ and $t_j(\chi_{\dot, j-1}, \Bar{z}): \mathbb{R}^{n_\mathrm{a}} \to \mathbb{R}^{n_\mathrm{b}}$ are two learned functions expressing the scaling and translation.
In our experiments, we leverage Random Fourier Features Networks (RFFNs) consisting of a clamped linear layer with randomly sampled, untrained weights, a cosine activation function and a learned, linear output layer to parametrize $s_{\theta_{\mathrm{s},j}}(\chi_{\mathrm{a}, j-1}; \Bar{z})$ and $t_{\theta_{\mathrm{t},j}}(\chi_{\mathrm{a}, j-1}; \Bar{z})$, where $\theta_{\mathrm{s},j}$ and $\theta_{\mathrm{t},j}$ are the learnable parameters~\cite{rana2020euclideanizing}.

\subsubsection*{OSMP Dynamics}
In this section, we will provide expressions that are helpful in defining and analyzing the dynamics underlying OSMP. Specifically, many of these expressions will be later used for the stability and contraction analysis (Supplementary Text).

\paragraph{\small Latent Polar vs. Cartesian Dynamics}
The map $h_{\mathrm{p2y}}(y_\mathrm{pol}): \mathbb{R}^n \to \mathbb{R}^n$ from latent polar coordinates to latent Cartesian coordinates and its inverse $h_{\mathrm{y2p}}(y) = h_{\mathrm{p2y}}^{-1}(y)$ is given by
\begin{equation}
    y = h_{\mathrm{p2c}}(y_\mathrm{pol}) = \begin{bmatrix}
        r \, \cos(\varphi)\\
        r \, \sin(\varphi)\\
        y_{3:n}
    \end{bmatrix},
    \qquad
    y_\mathrm{pol} = h_{\mathrm{c2p}}(y) = \begin{bmatrix}
        \sqrt{y_1^2+y_2^2}\\
        \operatorname{atan2}(y_2, y_1)\\
        y_{3:n}
    \end{bmatrix}.
\end{equation}
Then, the Jacobian of the Polar-to-Cartesian map $\frac{\partial h_{\mathrm{p2c}}}{\partial y_\mathrm{pol}} \in \mathbb{R}^{n \times n}$ is given by
\begin{equation}
    \frac{\partial h_{\mathrm{p2c}}}{\partial y_\mathrm{pol}} = \begin{bmatrix}
        \cos(\varphi) & -r \, \sin(\varphi) & 0_{1 \times (n-2)}\\
        \sin(\varphi) & r \, \cos(\varphi) & 0_{1 \times (n-2)}\\
        0_{(n-2) \times 1} & 0_{(n-2) \times 1} & \mathbb{I}_{n-2}
    \end{bmatrix}
\end{equation}
We can also substitute $y_\mathrm{pol} = y$ into the Jacobian and determine its inverse
\begin{equation}
    \frac{\partial h_{\mathrm{p2c}}}{\partial y_\mathrm{pol}} = \begin{bmatrix}
        \frac{y_1}{\sqrt{y_1^2 + y_2^2}} & -y_2 & 0_{1 \times (n-2)}\\
        \frac{y_2}{\sqrt{y_1^2 + y_2^2}} & y_1 & 0_{1 \times (n-2)}\\
        0_{(n-2) \times 1} & 0_{(n-2) \times 1} & \mathbb{I}_{n-2}
    \end{bmatrix},
    \qquad
    \frac{\partial h_{\mathrm{p2c}}}{\partial y_\mathrm{pol}}^{-1} = \begin{bmatrix}
        \frac{y_1}{\sqrt{y_1^2 + y_2^2}} & \frac{y_2}{\sqrt{y_1^2 + y_2^2}} & 0_{1 \times (n-2)}\\
        -\frac{y_2}{y_1^2 + y_2^2} & \frac{y_1}{y_1^2 + y_2^2} & 0_{1 \times (n-2)}\\
        0_{(n-2) \times 1} & 0_{(n-2) \times 1} & \mathbb{I}_{n-2}
    \end{bmatrix},
\end{equation}
which are both full-rank for $r = \sqrt{y_1^2 + y_2^2} > 0$.

\subsubsection*{Loss Functions}
\paragraph{\small Velocity Imitation Loss}
Analog to the literature on stable point-to-point motion primitives~\cite{rana2020euclideanizing}, the predicted oracle space velocity is supervised by a smooth $\ell_1$ loss~\cite{girshick2015fast}, which computes a squared term if the absolute error falls below $\beta_{\ell_1}$ and the $\ell_1$ term otherwise, and can be formally defined as
\begin{equation}
    \mathcal{L}_\mathrm{vi} = \frac{1}{N} \, \sum_{k = 1}^{N} \begin{cases}
		\frac{\left ( f(x^\mathrm{d}(k);z(k)) - \dot{x}^\mathrm{d}(k) \right )^2}{2 \, \ell_1}, & \text{if} \: \left \lVert f(x^\mathrm{d}(k);z(k)) - \dot{x}^\mathrm{d}(k) \right \rVert_1 < \beta_{\ell_1} \\
        \left \lVert f(x^\mathrm{d}(k);z(k)) - \dot{x}^\mathrm{d}(k) \right \rVert_1 - \frac{\beta_{\ell_1}}{2} , & \text{otherwise}
    \end{cases},
\end{equation}
where we choose $\beta_{\ell_1} = 1$.

\paragraph{\small Limit Cycle Matching Loss}
Next, we consider a subset of the demonstrations $\mathcal{P} \subset \mathbb{N}_\mathrm{N}$ that exhibit a periodic motion. To guarantee the OS of the learned system, we need to make sure that the learned limit cycle matches the periodic demonstration.
For this purpose, we design a \emph{limit cycle matching} loss $\mathcal{L}_\mathrm{lcm}$ in latent space
\begin{equation}\small
    y_\mathrm{p}(k) = \begin{bmatrix}
        \sqrt{y_1^2(k) + y_2^2(k)}\\ y_{3:n}
    \end{bmatrix} \in \mathbb{R}^{n-1},
    \quad
    y_\mathrm{p}^\mathrm{d}(k) = \begin{bmatrix}
        R\\ 0_{n-2}
    \end{bmatrix},
    \quad
    \mathcal{L}_\mathrm{lcm} = \sum_{k \in \mathcal{P}} \frac{\big \lVert y_\mathrm{p}^\mathrm{d}(k) - y_\mathrm{p}(k) \big \rVert_2^2}{|\mathcal{P}|},
\end{equation}
where $|\mathcal{P}|$ is the cardinality of $\mathcal{P}$, and $y=\Psi(x^\mathrm{d}; z) \in \mathbb{R}^n$ is the latent encoding. 

\paragraph{\small Time Reference Guidance Loss}
For strongly curved oracles, we have found it advantageous to use the oracle’s time parameterization to steer how its positions map onto the latent polar angle. Doing so spreads the oracle samples uniformly around the latent-space limit cycle, preventing the encoders from bunching up in one angular sector while leaving other portions of the cycle without corresponding oracle points.

First, we compute for each position contained in the rhythmic demonstration a desired latent polar angle based on the normalized time reference. Simultaneously, we evaluate the actual encoded polar angle of $x^\mathrm{d}(k)$ as
\begin{equation}
     \varphi^\mathrm{d}(k) = \varphi_0 + 2 \, \pi \, \frac{t(k)}{P},
     \qquad
     \varphi(k) = \operatorname{atan2}\left ( \Psi(x(k);z(k))_2, \{ \Psi(x(k);z(k)) \}_1 \right),
\end{equation}
where $\varphi_0$ is the polar angle anchor and $P$ is the period of the rhythmic demonstration.
Subsequently, we define a positive alignment loss between the 
\begin{equation}
    \mathcal{L}_\mathrm{tgd} = \sum_{k \in \mathcal{P}} \frac{\max \left ( \left | \operatorname{mod} \bigl( \varphi^\mathrm{d}(k) - \varphi(k) + \pi,\; 2\pi\bigr) - \pi \right | - m_\mathrm{tgd}, 0 \right )^2}{|\mathcal{P}|},
\end{equation}
$m_\mathrm{tgd} \in \mathbb{R}_{>0}$ is the allowed margin between the time reference and the actual polar phase and the function $n_{e_\varphi}(e_\varphi) = \operatorname{mod} \bigl( e_\varphi(k) + \pi,\; 2\pi\bigr) - \pi$ normalizes the polar angle error $e_\varphi(k) = \varphi^\mathrm{d}(k) - \varphi(k) $ into the interval $[-\pi, \pi)$.

\paragraph{\small Encoder Regularization}
Similar to Zhi \textit{et al.}~\cite{zhi2024teaching}, we employ an encoder regularization loss $\mathcal{L}_{\Psi, \mathrm{reg}}$ that penalizes deviations from an identity encoder $y = \Psi(x) = x$.
We draw in each epoch $N$ random positions samples $x(k) ~\sim \mathcal{U}(x_\mathrm{min}, x_\mathrm{max}) \: \forall \, k \in \mathbb{N}_N$ from a uniform distribution within the workspace $[x_\mathrm{min}, x_\mathrm{max}]$ of the system. Then, the loss is computed as
\begin{equation}
    \mathcal{L}_{\mathrm{er}} = \sum_{k=1}^{N} \frac{\lVert x - \Psi(x;z) \rVert}{N}.
\end{equation}

\paragraph{\small Velocity Regularization}
The velocity-imitation loss $\mathcal{L}_\mathrm{vi}$ constrains the predicted velocities only along the demonstration trajectory. When demonstrations are sparse and clustered, large regions of the workspace receive no direct supervision on velocity magnitude, even though orbital stability and transverse contraction are still guaranteed. However, in practice, large predicted velocities frequently lead to numerical instability. Therefore, it can be helpful to regularize the predicted velocities.

For this purpose, we draw in each epoch $N$ random positions samples $x(k) ~\sim \mathcal{U}(x_\mathrm{min}, x_\mathrm{max}) \: \forall \, k \in \mathbb{N}_N$ from a uniform distribution within the workspace $[x_\mathrm{min}, x_\mathrm{max}]$ of the system. Then, the loss is computed as
\begin{equation}
    \mathcal{L}_\mathrm{vr} = \sum_{k=1}^N \frac{\max(\lVert f(x(k);z)\rVert_2 - m_\mathrm{vr}, 0_N)}{N},
\end{equation}
where $m \in \mathbb{R}_{\geq0}$ is the margin. In practice, we choose $m_\mathrm{vr}$ to be 50\% higher than the maximum velocity magnitude in the dataset in order not to conflict with the $\mathcal{L}_\mathrm{vi}$ objective.

\paragraph{\small Smooth Conditioning Interpolation Loss}
Next, optionally, we can add a loss term that encourages a smooth interpolation of the learned limit cycle between conditionings $z$. We assume that all conditionings in the dataset $z(k) \in \mathcal{Z}$, where $\mathcal{Z} = \{ z(1), \dots, z(k), \dots, z(N) \}$, are bounded in the interval $[z_\mathrm{min}, z_\mathrm{max}]$.
Now, consider a convex hull $\operatorname{conv}(\mathcal{Z}) = [\min(\mathcal{Z}), \min(\mathcal{Z})] = [z_\mathrm{min}, z_\mathrm{max}]$.
Next, we draw $N_\mathrm{sci}$ random conditionings from a uniform distribution: $\tilde{z}(j) \sim \mathcal{U}(\operatorname{conv}(\mathcal{Z})) \in \mathbb{R}$ with $j \in \mathbb{N}_{N_\mathrm{sci}}$.
Additionally, we also generate $N_\mathrm{sci}$ samples on the latent limit cycle by uniformly sampling polar angles $\varphi(j) \sim [-\pi, \pi)$ and subsequently first map into Cartesian latent coordinates and then into oracle space using the inverse encoder
\begin{equation}
    y(j) = \begin{bmatrix}
        R \, \cos(\varphi(j)) & R \, \sin(\varphi(j)) & 0_{n-2}^\top
    \end{bmatrix}^\mathrm{T},
    \quad
    \tilde{x}(j) = \Psi^{-1}(y(j) \, ; \tilde{z}(j)).
\end{equation}
Now, we define the floor $\lfloor \cdot \rfloor$ and ceil $\lceil \cdot \rceil$ functions that round down, or up to the next conditioning $z \in \mathcal{Z}$ in the dataset
\begin{equation}
    \lfloor \tilde{z} \rfloor = \max\{ z \in \mathbb{Z} \mid z \le \tilde{z} \},
    \qquad
    \lceil \tilde{z} \rceil = \min\{ z \in \mathbb{Z} \mid z \ge \tilde{z} \}.
\end{equation}
Then, the target for $\tilde{x}(j)$ that represents a smooth linear interpolation between conditioning $\lfloor \tilde{z} \rfloor $ and $\lceil \tilde{z} \rceil$ is given by
\begin{equation}
    \tilde{x}^*(j) = \lfloor \tilde{x}(j) \rfloor + \frac{\tilde{z}(j) - \lfloor \tilde{z}(j) \rfloor}{\lceil \tilde{z}(j) \rceil - \lfloor \tilde{z}(j) \rfloor} \left ( \lceil \tilde{x}(j) \rceil - \lfloor \tilde{x}(j) \rfloor \right )
\end{equation}
where
\begin{equation}
    \lfloor \tilde{x}(j) \rfloor = \Psi^{-1}(y(j) \, ; \lfloor \tilde{z}(j) \rfloor),
    \qquad
    \lceil \tilde{x}(j) \rceil = \Psi^{-1}(y(j) \, ; \lceil \tilde{z}(j) \rceil).
\end{equation}
Finally, the smooth conditioning interpolation loss can be formulated as
\begin{equation}
    \mathcal{L}_{\mathrm{sci}} = \sum_{j = 1}^{N_\mathrm{sci}} \frac{\left ( \tilde{x}^*(j) - \tilde{x}(j)\right )^2}{N_\mathrm{sci}}.
\end{equation}

\subsubsection*{Online Shaping of the Learned Motion}
In order to improve the practicality of using the learned orbital motion primitives, we introduce in this section approaches that allow us to modulate the learned velocity field to adjust the task or modify its characteristics without having to retrain the OSMP. 

First, we introduce variables that allow us to spatially translate and scale the learned velocity field
\begin{equation}
    \dot{x}(t) = \tilde{f}(x \, ;z) \coloneq s_\mathrm{f} \, f \left ( \frac{x(t)-x^\mathrm{o}}{s_\mathrm{f}}; z \right ).
\end{equation}
Here, $s_\mathrm{f} \in \mathbb{R}_{>0}$ controls the spatial scale of the velocity field. When $s_\mathrm{f} = 1$, the executed motion primitive is equal to the learned motion primitive. $x^\mathrm{o} \in \mathbb{R}^{n}$ defines the origin of the velocity field.

However, we are not limited to affine transformations such as translation and scaling. Additionally, we can adjust the period and convergence characteristics of the velocity field online. Specifically, by scaling the polar angular velocity $\omega(\varphi)$ with the factor $s_\omega \in \mathbb{R}_{>0}$, we can either slow-down ($0 < s_\omega < 1$) or speed-up ($s_\omega > 1$) the periodic motion.
Furthermore, the convergence of trajectories onto the $\mathbb{S}^1$ limit cycle can be made more or less aggressive by adjusting the convergence gain $k_\mathrm{conv} \in \mathbb{R}_{>0}$. Usually, we set $\alpha = \beta = k_\mathrm{conv} \, s_\omega$.

Finally, constraints in the oracle or actuation space (e.g., joint limits, environment obstacles) might pose challenges to the deployment of the orbital motion primitive in real-world settings when the system is initialized (far) off the oracle.
In these situations, we would not want to start our periodic motion directly, but instead, we would first converge into a neighborhood around the oracle that is collision-free. We devise a strategy that is able to accomplish such behavior by scaling the polar angular velocity $\tilde{\omega}$ as a function $\sigma: \mathbb{R}_{>0} \to \mathbb{R}$ of the distance from the limit cycle $d_\mathrm{lc}$
\begin{equation}\small
    d_\mathrm{lc} = \sqrt{\frac{\left (\sqrt{y_1^2 + y_2^2} - R \right)^2 + \sum_{i=2}^{n} y_i^2}{n-1}},
    \qquad
    \tilde{\omega} = \sigma(\varphi, d_\mathrm{lc}) = \exp \left ( - \frac{\max(d_\mathrm{lc} - R_\mathrm{sm}, 0)^2}{2 \, \sigma_\mathrm{sm}^2} \right ) \, \omega(\varphi),
\end{equation}
where $d_\mathrm{lc} \in \mathbb{R}_{>0}$ the Euclidean distance of the latent state $y$ from the limit cycle normalized by the DOF.
The mapping $d_\mathrm{lc} \mapsto \tilde{\omega}$ can be intuitively interpreted as follows: in a tube of radius $R_\mathrm{sm}$ around the limit cycle, we apply the nominal polar angular velocity $\omega(\varphi)$. Outside of that tube, we reduce the angular velocity using a Gaussian function with RMS width $\sigma_\mathrm{sm} \in \mathbb{R}_{>0}$. In the limit $d_\mathrm{lc} \to \infty$, the polar angular velocity is zero: $\lim_{d_\mathrm{lc} \to \infty} \sigma(d_\mathrm{lc}) = 0$.

\subsubsection*{OSMP Training}
The bijective encoder based on Euclideanizing flow~\cite{rana2020euclideanizing} / Real NVP~\cite{dinh2017density} uses coupling layers with the scaling and translation functions parametrized by Random Fourier Features Networks (RFFNs) that integrate ELU activation functions and a hidden dimension of $100$.
The number of encoder layers/blocks varies by the complexity of the task and ranges from $10$ for simple tasks such as the Ellipse, Square and Doge contours and the IROS~\cite{urain2020imitationflow} and Drawing2D~\cite{nawaz2024learning} to $25$ for very complex tasks such as the TUD-Flame, Bat, and Eagle contours.
At the start of the training, the encoder is initialized as an identity mapping.

The optional velocity scaling network $e^{\mathrm{MLP}(x)}$ is parametrized by a three-to-five-layer MLP (depending on the nonlinearities and discontinuities that the demonstration velocity profile exhibits) with a hidden dimension of $128$ and LeakyReLU activation functions.

Similarly, the angular velocity network $\omega(y) = f_\omega(\Bar{y}_{1:2}) = e^{\mathrm{MLP}(\Bar{y}_{1:2})} + \epsilon_\omega$, where $\Bar{y}_{1:2} = \begin{bmatrix}
    \frac{y_1}{\sqrt{y_1^2+y_2^2}} &  \frac{y_2}{\sqrt{y_1^2+y_2^2}}
\end{bmatrix}^\top \in \mathbb{R}^2$ with $\epsilon_\omega = 10^{-6}$, is parametrized by a five-layer MLP with a hidden dimension of $128$ and LeakyReLU activation functions.

In case we employ the time reference guidance loss $\mathcal{L}_\mathrm{tgd}$ for a given dataset, we usually set $\zeta_\mathrm{lcm} = \zeta_\mathrm{tgd}$.

The OSMP motion policy is trained by a AdamW optimizer~\cite{kingma2014adam, loshchilov2018decoupled} with $(\beta_1, \beta_2) = (0.9, 0.999)$ and a default weight decay of $\lambda = 1 0^{-10}$.
We employ a learning rate scheduler that sequences a linear warmup phase (usually $10$ epochs), with a relatively short constant learning rate period and subsequent long cosine annealing~\cite{loshchilov2016sgdr} period for the remaining epochs. 
We remark that we don't use a minibatch-based training strategy but instead process all given demonstrations in a single batch.
Please note that we also apply the described training procedure to the baseline methods if not explicitly mentioned otherwise.

\subsubsection*{OSMP Inference}
Maintaining discrete-time stability demands that the OSMP—or any DMP—runs at sufficiently high control rates. This requirement becomes even tougher when computational resources are limited, as in our turtle-swimming setup where the OSMPs ran on a Raspberry Pi 5. To minimise latency, we sought to shorten the OSMP’s inference time by exploiting PyTorch’s compilation and export utilities. Unfortunately, most current PyTorch compilers/exporters are incompatible with autograd, which we still need at inference to obtain the encoder Jacobian $J_\Psi = \frac{\partial}{\partial x} \Psi(x;z)$. Consequently, we explored modern options in the \texttt{torch.func} namespace—including combinations of \texttt{vmap} with the forward and reverse functional Jacobian operators (\texttt{jacfwd}, \texttt{jacrev}) and the vector-Jacobian product (\texttt{vjp}). Our analysis, presented in the Supplementary Text, shows that simple two-point finite-difference schemes for the Jacobian compile and export cleanly, run quickly, and yield Jacobians whose accuracy is very close to the analytic solution.
In practice, we use an (absolute) step size $\delta_x = 5 \, 10^{-4}$ such that
\begin{equation}
    J(x;z) \approx \begin{bmatrix}
        \frac{\Psi(x+\delta_x \, e_1;z) - \Psi(x;z)}{\delta_x} & \dots &         \frac{\Psi(x+\delta_x \, e_j;z) - \Psi(x;z)}{\delta_x} & \dots & \frac{\Psi(x+\delta_x \, e_n;z) - \Psi(x;z)}{\delta_x}
    \end{bmatrix},
\end{equation}
where $e_j \in \mathbb{R}^n$ is the $j$th canonical basis vector in $x$-coordinates.
This allows us to exploit \emph{AOTInductor}, a specialized version of \emph{TorchInductor}, to export a compiled executable, which runs at up to $15,000$~Hz on the M4 Max CPU - a 9x increase over the standard eager inference mode.

In case the Jacobian of the encoder $J_\Psi(x;z)$ exhibits close-to-singular values, the numerical stability of the inference, can be improved by computing the inverse as $J_\Psi^{-1}(x;z) \equiv \left ( J_\Psi(x;z) + \epsilon_\mathrm{inv} \, \mathbb{I}_n \right )^{-1}$, where, for example, $\epsilon_\mathrm{inv} = 10^{-6}$.

\subsubsection*{Datasets}

We note that before training, all positions contained in the datasets are normalized to the interval $[-0.5, 0.5]$ with zero mean.

\paragraph{\small IROS Letters}
Originally published by Urain \textit{et al.}~\cite{urain2020imitationflow} and later adopted for benchmarking by Nawaz \textit{et al.}~\cite{nawaz2024learning}, the IROS Letters dataset provides several demonstrations for each character (\emph{IShape}, \emph{RShape}, \emph{OShape}, and \emph{SShape}), sometimes spanning multiple cycles.
A noteworthy characteristic is that demonstrations of the same task are widely separated in state space, posing a challenge for deterministic policies. We smooth every trajectory with a Savitzky–Golay filter (order 3, window 8). Because the \emph{IShape} sequence contains very few sample points and large gaps between consecutive states, we upsample it by a factor of 5. The duration of the demonstrations is chosen as 20~s.

\paragraph{\small Drawing2D}
The Drawing2D dataset introduced by Nawaz \textit{et al.}~\cite{nawaz2024learning} offers four demonstrations of a kidney‐bean-shaped periodic drawing. We train a separate motion policy for each demonstration, applying the same Savitzky–Golay filter (order 3, window 8) for smoothing. Trajectories are normalised to a 20 s duration.

\paragraph{\small Image Contour}
%
We contribute a new benchmark category based on image contours that range from simple shapes (Ellipse, Square) to highly intricate outlines such as the TU Delft flame logo, Bat, and Eagle. Compared to prior benchmarks like IROS Letters~\cite{urain2020imitationflow} and Drawing2D~\cite{nawaz2024learning}, these oracles introduce sharp corners (e.g., Star, Bat), strongly concave curves (e.g., TUD‐Flame), and discontinuous velocity profiles (e.g., Star, Bat, Eagle), all of which are difficult for most DMP-based approaches.
We extract each outline with OpenCV~\cite{opencv_library} and lightly smooth it using a Savitzky–Golay filter (order 3, window 25). Every trajectory lasts 20 s.
The list of image contours is: Ellipse, Square, Star, MIT‐CSAIL, TUD‐Flame, Doge, Bat, Dolphin, and Eagle. 

\paragraph{\small Turtle Swimming}


This category contains four datasets.
\textbf{(i)} The first two comprise Cartesian trajectories of the flipper tip of wild green sea turtles (Chelonia mydas) captured by marine biologists (van der Geest \textit{et al.}~\cite{van2022new}) and represented by Fourier series. We train on two variants: position only (3-D oracle) and position plus twist angle (4-D oracle), each with a period of 4.2~s.
\textbf{(ii)} A subsequent dataset from the same authors applies inverse kinematics to those trajectories, yielding a three-joint-space oracle for bioinspired robotic turtles\cite{van2023soft}. After smoothing with a 30th-degree polynomial, we compute velocities; this oracle has a 4.3~s period.
\textbf{(iii)} Finally, we include a reverse-swimming template defined in joint space by sinusoidal functions, with a 4~s period. This template was designed by constructing waypoints to produce "reverse rowing" movement patterns, interpolating between them with a spline, and finally approximating the trajectory with a fourier fit.

A key distinction from previous periodic-motion datasets is the pivotal influence of the velocity profile on swimming performance: if the velocity profile with which the trajectory is executed is wrong, the cost of transport rises, and in extreme cases, the turtle robot either stalls or even moves in the opposite direction.

\subsubsection*{Baseline Methods}

\paragraph{\small Trajectory Tracking PD Controller}

A classical error-based feedback controller tracking a time-parametrized trajectory $(x^\mathrm{d}(t), \dot{x}^\mathrm{d}(t), \ddot{x}^\mathrm{d}(t)) \: \forall \: t \in [t_0, t_\mathrm{f}]$ is usually given in the form
\begin{equation}\label{eq:osmp:trajectory_tracking_controller}
    \dot{x}(t) = \dot{x}^\mathrm{d}(t) + K_\mathrm{p} \, (x^\mathrm{d}(t) - x(t)),
\end{equation}
where $K_\mathrm{p} \in \mathbb{R}^{n \times n}$ is a proportional feedback gain that operates on the error between the current position $x(t)$ and the desired position $x^\mathrm{d}(t)$. In practice, we choose a scalar $k_\mathrm{p} \in \mathbb{R}_+$ such that $K_\mathrm{p} = k_\mathrm{p} \, \mathbb{I}_{n}$.

\paragraph{\small Multilayer Perceptron (MLP)}
As the most basic neural motion policy, we consider an MLP that predicts the next position/state of the system according to the discrete transition function $x(k+1) = f_\mathrm{MLP}(x)$. We employ a five-layer MLP with a hidden dimension of $128$ and a LeakyReLU activation function.
During training, we randomly sample at the start of each epoch $N_\mathrm{init} = 64$ initial positions from the oracles contained in the dataset. Subsequently, we roll out each trajectory for $T = 25$ steps and enforce an MSE loss between the predicted $x_i(k)$ and demonstrated trajectory $x_i^\mathrm{d}(k)$
\begin{equation}
    \mathcal{L}_\mathrm{ro} = \frac{\sum_{i = 1}^{N_\mathrm{init}}\sum_{k=1}^{T} \left ( x_i(k) - x_i^\mathrm{d}(k) \right )^2}{N_\mathrm{init} \, K}.
\end{equation}

\paragraph{\small Recurrent Neural Networks (RNNs, LSTM)}
For the RNN-like networks (i.e., Elman RNN \& LSTM), we employ a five-layer recurrent neural network with a hidden dimension of $128$.
For example, in the case of the Elman RNN, the transition function of the hidden state $h_j \in \mathbb{R}^{128}$ of the $j$th layer is given by
\begin{equation}
    h_j(k) = \tanh \left (W_\mathrm{xh} \, u_j(k) + W_\mathrm{hh} \, h_j(k-1) + b \right ).
\end{equation}
Here, $u_j(k)$ is the input to the $j$th layer such that $u_1(k) = x(k)$ and $u_j(k) = h_{j-1}(k) \: \forall \, j \in 2, \dots, 5$ and the output of the RNN (i.e., the next state prediction) is given by $x(k+1) = W_\mathrm{o} \, h_5(k)$, where $W_\mathrm{o} \in \mathbb{R}^{n \times 128}$ is a learned matrix.
For training, we use the same rollout procedure and loss $\mathcal{L}_\mathrm{ro}$ as in the case of the discrete MLP motion policy, with the difference that we initialize the RNN's concatenated hidden state $h = \begin{bmatrix}
    h_1^\top & \dots & h_5^\top
\end{bmatrix}^\top$ as $h(1) = 0_{640}$ at the beginning of the trajectory and subsequently propagate through each of the $25$ transitions. 

\paragraph{\small Neural ODE (NODE)}
A NODE-based motion policy can be defined as $\dot{x} = f_\mathrm{NODE}(x)$ where $f_\mathrm{NODE}(x): \mathbb{R}^n \to \mathbb{R}^n$ is parametrized by an MLP. Specifically, we choose a five-layer MLP with hidden dimensions of $128$ and LeakyReLU activation functions.
In addition to supervising the predicted velocity via $\mathcal{L}_\mathrm{vi}$, we compute position-based losses based on rolled-out trajectories analogous to the MLP and RNNs obtained via forward Euler integration of $f_\mathrm{NODE}(x)$.

\paragraph{\small Diffusion Policy (DP)}
We use the official open-source implementation of DPs~\cite{chi2023diffusion} and train them on the respective datasets for $250$ epochs while employing a batch size of $256$, an AdamW~\cite{kingma2014adam, loshchilov2018decoupled} optimizer configured with a learning rate of $10^{-4}$, a weight decay of $10^{-6}$, and $(\beta_1, \beta_2) = (0.95, 0.999)$ and a cosine learning rate scheduler with $10$ warmup steps. For this task, we define the observation as the current and last position of the system $o(k) = \begin{bmatrix}
    x^\top(k-1) & x^\top(k)
\end{bmatrix}^\top \in \mathbb{R}^{2n}$ and the action as the positional state of the system $a = x \in \mathbb{R}^{n}$. For each observation $o(k)$, the DP is trained to predict a sequence of $h=16$ actions $a(k+1), \dots a(k+h)$, of which during inference only eight actions are executed before replanning takes place.
The sampled noise is denoised in a sequence of $100$ steps via a Denoising Diffusion Probabilistic Models (DDPMs) \texttt{squaredcos\_cap\_v2} scheduler~\cite{ho2020denoising}. 
The noise samples are clipped to the range $[-1, 1]$ and the noise prediction network is parametrized by a 1D UNet~\cite{ronneberger2015u} with dimension $(256, 512, 1024)$, kernel size $5$ and global conditioning on the observation $o(k)$. 

\paragraph{\small Imitation Flows (iFlow)}
We leverage the official iFlow~\cite{urain2020imitationflow} implementation for the training on the considered dataset. Specifically, we train the model with a normalizing flows~\cite{rezende2015variational} bijective encoder consisting of $15$ ResNet coupling layers for $1000$ epochs with a batch size of $100$ using an Adamax~\cite{kingma2014adam} optimizer with learning rate $10^{-3}$ and $(\beta_1, \beta_2) = (0.9, 0.999)$.
To define the dynamics in latent space, the method uses a stochastic variant of linear polar limit cycle dynamics transformed into Cartesian coordinates.

\paragraph{\small Stable Periodic Diagrammatic Teaching (SPDT)}
The model architecture of SPDT~\cite{zhi2024teaching} is very similar to OSMPs, apart from the parametric angular velocity. Furthermore, in addition to  advanced features, such as online shaping of the learned motion, phase synchronization, and motion/task conditioning, that OSMPs exhibit, the main difference in training a SPDT lies in the loss function: Instead of employing a velocity imitation loss $\mathcal{L}_\mathrm{vi}$, a limit cycle matching loss $\mathcal{L}_\mathrm{lcm}$, a time reference guidance loss $\mathcal{L}_\mathrm{trg}$, and velocity regularization loss $\mathcal{L}_\mathrm{vr}$, SPDT in mainly relies on on the Hausdorff distance to ensure that the encoded demonstrations match the limit cycle defined in latent space
\begin{equation}
     \mathcal{L}_\mathrm{haus} \;:=\; \max \left \{ \max_{k\in\mathbb{N}_N}\; \min_{\kappa\in\mathbb{N}_N} \; d_y(k, \kappa), \max_{\kappa \in\mathbb{N}_N}\; \min_{k\in\mathbb{N}_N} \; d_y(\kappa, k) \right \},
\end{equation}
where
\begin{equation}
    d_y(k, \kappa) = \lVert \Psi(x(k);z(k)) - y^\mathrm{d}(\kappa) \rVert_2,
\end{equation}
and $y^\mathrm{d}(k) = \begin{bmatrix}
    r \, \cos(\varphi(k)) & r \, \sin(\varphi(k)) & 0_{n-2}^\top
\end{bmatrix}^\top \in \mathbb{R}^n \: \forall \, k \in \mathbb{N}_n$ is a sequence of length $N$ of positions on the latent-space limit cycle obtained via arranging equally-spaced polar angles $\varphi^\mathrm{d}(k) \in [-\pi, \pi)$ and subsequently projecting them back into cartesian space with radius $r = R$.
Additionally, the method also employs the encoder regularization loss $\mathcal{L}_\mathrm{er}$.

\subsubsection*{Imitation Metrics}
We adopt several metrics from the literature that measure how well the motion policy is able to track the given demonstration, both along the time and spatial dimensions.
For all metrics, we initialize the motion policy at the starting position of the demonstration (i.e., $x(1) = x^\mathrm{d}(1)$) and subsequently roll out the motion for $N$ steps (i.e., as many steps as included in the demonstration). Subsequently, we evaluate the mismatch between the actual $x(k)  \in \mathbb{R}^n \: \forall \, k \in \mathbb{N}_N$ and desired trajectory $x^\mathrm{d}(k) \in \mathbb{R}^n \: \forall \, k \in \mathbb{N}_N$, where we define $\mathbb{N}_N = 1,\dots,N$. In case a dataset contains multiple demonstrations, we separately initialize the motion policy at the starting point of each demonstration and report the mean of the metrics across all demonstrations.
Please note that we compute all evaluation metrics on the normalized datasets (i.e., with the positions normalized into the range $[-0.5, 0.5]$), which makes it easier to compare and aggregate metrics across several datasets within the same dataset category.
For all evaluation metrics, we rely on the Python implementation by Jekel \textit{et al.}~\cite{jekel2019similarity} open-sourced in the \texttt{similaritymeasures} package.

\paragraph{\small Trajectory RMSE}
The \emph{Trajectory Root Mean Square Error (Traj. RMSE)}, used, for example, by Khadivar \textit{et al.}~\cite{khadivar2021learning}, measures the deviation of the actual from the nominal trajectory in position space and is computed as
\begin{equation}
    \mathrm{RMSE}_\mathrm{x}\left (\{x(k)\}_{k\in\mathbb{N}_N},\{x^{\mathrm d}(k)\}_{k\in\mathbb{N}_N} \right ) := \sum_{k=1}^{N} \sum_{i=1}^{n} \frac{\left ( x_i^\mathrm{d}(k) - x_i(k) \right )^2}{N \, n}.
\end{equation}

\paragraph{\small Normalized Trajectory DTW}
Dynamic Time Warping (DTW)~\cite{sakoe1978dynamic} searches across all allowable temporal warpings to identify the alignment that minimizes the Euclidean distance between two time series and is used frequently to benchmark imitation learning and behavioral cloning algorithms in robotics~\cite{urain2020imitationflow, perez2023stable, nawaz2024learning} is defined as
\begin{equation}\label{eq:dtw}
    \mathrm{DTW}_\mathrm{x}\left (\{x(k)\}_{k\in\mathbb{N}_N},\{x^{\mathrm d}(k)\}_{k\in\mathbb{N}_N} \right ) := \min_{\pi}\;\sum_{(k, \kappa) \in \pi} \left \lVert x^{d}(k)-x(\kappa) \right \rVert_2.
\end{equation}
where $\pi = ((k_1, \kappa_1), \dots, (k_N, \kappa_N))$ is commonly referred to as the alignment path of length $N$ that contains the sequence of index pairs\footnote{Although DTW is formally not a valid mathematical metric and instead a similarity measure as the triangle inequality doesn't hold, we refer to it here for convenience as one of the evaluation metrics.}. In order for $\pi$ to be a valid alignment path, it needs to fulfill the following constraints
\begin{equation}
\begin{split}
    \pi_1 = (k_1, \kappa_1) = (1,1),
    \qquad
    \pi_N = (k_1, \kappa_1) = (N,N),\\
    k_{\iota+1} - k_{\iota} \in \{0, 1\}, 
    \quad 
    \kappa_{\iota+1} - \kappa_{\iota} \in \{0, 1\},
    \quad
    k_{\iota+1} - k_{\iota} + \kappa_{\iota+1} - \kappa_{\iota} \geq 1,
\end{split}
\end{equation}
where the first row contains constraints that ensure that the beginning and the end of each sequence are connected, and the second row verifies that the indices are monotonically increasing in both $k$ and $\kappa$ and are contained in the sequence at least once.
However, as it can be seen in \eqref{eq:dtw}, the magnitude of the DTW is proportional to the demonstration length $N$, which makes it challenging to aggregate the measure across several datasets of varying demonstration lengths. Therefore, we instead compute a normalized version of the DTW, the \emph{Normalized Trajectory Dynamic Time Warping} measure, as
\begin{equation}
    \mathrm{NDTW}_\mathrm{x}\left (\{x(k)\}_{k\in\mathbb{N}_N},\{x^{\mathrm d}(k)\}_{k\in\mathbb{N}_N} \right ) = \frac{1}{N} \, \mathrm{DTW}_\mathrm{x}\left (\{x(k)\}_{k\in\mathbb{N}_N},\{x^{\mathrm d}(k)\}_{k\in\mathbb{N}_N} \right ).
\end{equation}

\paragraph{\small Velocity RMSE}
The \emph{Velocity Root Mean Square Error (Vel. RMSE)} measures the deviation of the actual from the nominal velocities along the trajectory and is computed as
\begin{equation}
    \mathrm{RMSE}_\mathrm{\dot{x}}\left (\{\dot{x}(k)\}_{k\in\mathbb{N}_N},\{\dot{x}^{\mathrm d}(k)\}_{k\in\mathbb{N}_N} \right ) := \sum_{k=1}^{N}  \sum_{i=1}^{n} \frac{\left ( \dot{x}_i^\mathrm{d}(k) - \dot{x}_i(k) \right )^2}{N \, n},
\end{equation}
where $\dot{x}(k)$ is either directly given by the motion policy (e.g., DMP, OSMP, NODE) or obtained via finite differences as $\dot{x}(k) = \frac{x(k) - x(k-1)}{\delta t}$ (e.g., Diffusion Policy~\cite{chi2023diffusion}).
This metric is particularly relevant for tasks in which it is crucial that the demonstrated velocity, and not just trajectory, is accurately tracked, as in the case of turtle swimming.

\subsubsection*{Convergence Metrics}
Our convergence study pursues two questions: (i) Does the method display orbital stability, converging to a stable limit cycle from any initial state in the workspace? and (ii) Does that limit cycle coincide with the desired oracle? Unlike the imitation metrics, we do not examine timing or velocity profiles; our attention is restricted to the geometric path of the motion. To the best of our knowledge, no standardized protocol yet exists for gauging the orbital-convergence behavior of rhythmic motion policies. Therefore, we devised a protocol based on several rollouts from randomly sampled initial conditions that verifies both the local and global convergence characteristics of the motion policy, which we describe in more detail in the next paragraph.
We evaluate the convergence metrics on all rollouts and subsequently compute the mean value over the resulting trajectories.

\paragraph{\small Local vs. Global Convergence}
We evaluate both local and global convergence characteristics of the motion policies.
In both cases, we sample randomly $25$ positions $x^\mathrm{d}(\kappa) \: \forall \kappa \in 1,\dots,25$ from the set of positions contained in the demonstration $x^\mathrm{d}(k) \forall k \in 1,\dots,N$. Subsequently, we sample an offset $\Delta x_\kappa \sim \mathcal{N}(0,\sigma) \in \mathbb{R}^n$. Now, we consider $x_\kappa(1) = x^\mathrm{d}(\kappa) + \Delta x_\kappa$ as the initial position for the rollout of the motion policy.
For the local convergence analysis, we initialize close to the demonstration by choosing $\sigma = 0.05$. For the global convergence analysis, we select instead $\sigma = 0.15$.

Subsequently, we roll out the motion policy for $N$ and $2N$ steps for the local and global convergence analysis, respectively, resulting in the trajectory sequences $(x_\kappa(1), \dots, x_\kappa(N))$ and $(x_\kappa(1), \dots, x_\kappa(2N))$. For the global convergence, we strive to give the motion policy sufficient time to converge to its limit cycle, and therefore, only evaluate the metrics on the trajectory after the demonstration duration - i.e., we compute the metric on the trajectory sequence $(x_\kappa(N+1), \dots, x_\kappa(2N))$. For simplicity of notation, we assume in the following a reset of time indices such that $k = k-N$.

\paragraph{\small Directed Hausdorff Distance}
The directed Hausdorff distance~\cite{hausdorff1914grundzuge, huttenlocher1993comparing} computes the largest distance between closest-neighbor correspondences from the actual convergence trajectory to the desired limit cycle
\begin{equation}
    h^{\rightarrow}\! \left (\{x(k)\}_{k \in\mathbb{N}_N},\{x^{\mathrm d}(\kappa)\}_{\kappa\in\mathbb{N}_N} \right ) \;:=\; \max_{k\in\mathbb{N}_N}\; \min_{\kappa\in\mathbb{N}_N} \; \left \lVert \,x(k)-x^{\mathrm d}(\kappa) \right \rVert_2.
\end{equation}
The undirected/symmetric version of the Hausdorff distance was used by Zhi \textit{et al.}~\cite{zhi2024teaching} for evaluating the similarity between the desired and actual trajectory shape.

\paragraph{\small Iterative Closest Point MED}
We use an Iterative Closest Point (ICP) algorithm~\cite{besl1992method} to identify the optimal alignment between the two sequences $x^\mathrm{d}(k) \: \forall \, k \in 1,\dots,N$ and $x(\kappa) \: \forall \, \kappa \in 1,\dots,N$ containing the desired limit cycle shape (i.e., the demonstration) and the actual (asymptotic) behavior generated by the motion policy, respectively, by iteratively estimating the transformation, consisting of a translation and a rotation, between the two shapes.
After initializing the translation between the points as $p_0 = 0_n$ and the rotation as $R = \mathbb{I}_n$, each iteration of the ICP algorithm performs two steps
\begin{equation}
\begin{aligned}
\textbf{(correspondence-search step)}\qquad
      c_{m}(k) &:= 
      \arg\min_{\kappa \in N}\,
          \bigl\|\,x(k)\;-\;
          \bigl(R_{m-1}\,x^\mathrm{d}(\kappa)+p_{m-1}\bigr)\bigr\|_2 ,
      \\[6pt]
\textbf{(alignment step)}\qquad
      (R_{m},p_{m}) &:=
      \arg\min_{\substack{R\in SO(n)\\ p \in\mathbb{R}^{n}}}
          \sum_{k = 1}^n
          \bigl\|\,x(k)\;-\;
          \bigl(R\,x^\mathrm{d}\bigl(c_{m}(k)\bigr)+p\bigr)\bigr\|_2^{2},
\end{aligned}
\end{equation}
where the nearest-neighborhood correspondence-search returns a correspondence map $c: N \to N$ that provides for each point $x(k)$ on the actual trajectory the closest point on the desired shape $x^\mathrm{d}(c(k))$. The \emph{alignment-step} identifies the optimal transformation that aligns the two point sequences and is, in practice, implemented using a closed-form SVD step.
After the algorithm has converged to $\hat{c}:=c_{m^{\star}}$, $\hat{R}:=R_{m^{\star}}$, and $\hat{p}:=p_{m^{\star}}$, the Mean Euclidean Distance (MED) between the two aligned shapes is computed as
\begin{equation}
    \mathrm{MED}_\mathrm{conv}\left (\{x(k)\}_{k\in\mathbb{N}_N},\{x^{\mathrm d}(k)\}_{k\in\mathbb{N}_N} \right ) = \sum_{k=1}^{N} \frac{\left \lVert x(k)\;-\;
          \bigl(\hat{R}\,x^\mathrm{d}\bigl(c_{m}(k)\bigr)+\hat{p}\bigr) \right \rVert_2}{N}.
\end{equation}
Please note that ICP-based MED is designed to compare the shape of the desired and measured limit cycles. Because the algorithm also finds a single translation that minimizes the point-to-point distances, two identical shapes that are simply shifted in space will still produce an MED of zero, even though the measured limit cycle is displaced relative to the demonstration.
Therefore, it is crucial to also always evaluate other metrics, such as the directed Hausdorff distance.

\subsubsection*{Evaluation Time per Step}
We define the evaluation time as the computational time/runtime $t_\mathrm{exec}$ required to run inference of the motion policy on the full sequence of a (single) demonstration of length $N$. We then report the evaluation time per step by dividing the time by the sequence length: $\frac{t_\mathrm{exec}}{N}$. Please note that all reported values represent the runtime of an unoptimized inference call of the respective method. For example, we show in the supplementary material that the inference time of OSMP can be optimized by several orders of magnitude by leveraging the PyTorch compile operations (see also the \emph{OSMP Inference} section).

We measured the evaluation time on a desktop workstation with a 12th-Gen i9 12900KF CPU with 16 cores, 64 GB of RAM, and an Nvidia GeForce RTX 3090 GPU with 24 GB of VRAM. For each method, we chose the most suitable processor type: while most methods were executed on the CPU due to the batch size of one and to avoid CPU-GPU communication latency, we ran the diffusion policy on the GPU due to the neural network size involved.

\subsubsection*{Robot Control with Orbital Motion Primitives}
In this work, we illustrate several examples of how the orbital motion policy’s output can be employed to control real-world robots. The range of robots examined in this study covers a diverse spectrum of robotic embodiments—from rigid manipulators (e.g., UR5, Kuka) to soft robotic manipulators (e.g., Helix soft robot~\cite{guan2023trimmed}) and even locomotion systems (e.g., crush turtle robot).

As in the case of OSMPs, the robot action is defined as the system velocity; a large variety of existing approaches can be interchangeably used to generate motor/actuator commands that track this system velocity reference. In many cases, the velocity can directly serve as a reference to a low-level trajectory tracking controller. Alternatively, when considering setpoint regulation controllers, the setpoint/the desired next system position can be computed as
\begin{equation}
    x^*(t) = x(t) + \frac{k_\mathrm{v2p}}{\omega_\mathrm{ctrl}} \, \tilde{f}(x(t) \, ; z),
\end{equation}
where $x(t) \in \mathbb{R}^n$ is the current system, $x^* (t) \in \mathbb{R}^n$ is the system setpoint/goal, $\omega_\mathrm{ctrl} = 200 \, \mathrm{Hz}$ is the control frequency, and $k_\mathrm{v2p} = 1500$ is a gain to map the desired velocity into the next position goal.

When the oracle is defined in join-space, as is the case for some of the turtle swimming oracles, the specified reference can be tracked directly by a motor controller. If the oracle and reference are provided in task-space, one can either (a) project the task-space velocity into configuration space using the pseudo-inverse of the kinematic Jacobian, (b) identify a configuration that matches the desired task-space position setpoint, and subsequently track this configuration space setpoint using a computed torque or PD+ controller, or (c) track the reference using operational-space impedance controllers~\cite{khatib1987unified}.

Below, we specify the low-level control implementation on each system, which explains how the output of the OSMP is mapped into an actuation on the system.

\paragraph{\small UR5 Robotic Manipulator}
We deploy the periodic motion primitives in task space on the UR5 robotic manipulator
\begin{equation}
    x^*(t) = x(t) + \frac{k_\mathrm{v2p}}{\omega_\mathrm{ctrl}} \, \tilde{f}(x(t) \, ; z),
    \qquad
    f_\mathrm{v}(t) = k_\mathrm{p} \, (x^*(t) - x(t)) - k_\mathrm{d} \, \dot{x}(t),
\end{equation}
where $x(t) \in \mathbb{R}^3$ is the current task-space position of the end-effector as computed by the forward kinematic model, $x^* (t) \in \mathbb{R}^3$ is the task-space setpoint/goal, $\omega_\mathrm{ctrl} = 200 \, \mathrm{Hz}$ is the control frequency, and $k_\mathrm{v2p} = 1500$ is a gain to map the desired task-space velocity into the next task-space position goal.
$x^*$ is tracked by a virtual Cartesian impedance controller~\cite{scherzinger2017forward} to generate a virtual Cartesian force $f_\mathrm{v}(t)$ with a proportional gain of $0.05 \, \mathrm{N/m}$ and a damping gain of $0.005 \, \mathrm{Ns/m}$.

\paragraph{\small KUKA Cobot}
We also verified the OSMP on a KUKA LBR iiwa 14 collaborative robot with $n_\mathrm{q} = 7$ DOFs.
In contrast to the UR5 manipulator, in addition to the end-effector position $p \in \mathbb{R}^3$, we also consider here the orientation of the end-effector represented by a rotation matrix $C \in SO(3)$. Here, both $p(t)$ and $C(t)$ are computed by the forward kinematics $\mathrm{FK}(q): R^\mathrm{n_\mathrm{q}} \to \mathbb{R}^3 \times SO(3)$. 
However, the orientation does not live in Euclidean space, preventing us from directly applying the OSMP.
Instead, we formulate, inspired by Urain \textit{et al.}~\cite{urain2022learning}, the motion primitive in the tangent space of $SO(3)$. Specifically, we apply the LogMap to the rotation matrix $C(t)$ resulting in the motion primitive state $x(t) = \begin{bmatrix}
    p^\top(t) & \mathrm{Log}(C(t))^\top
\end{bmatrix}^\top \in \mathbb{R}^6$ such that the OSMP is defined in dimensionality $n=6$.
In order to train this motion primitive via the velocity prediction loss $\mathcal{L}_\mathrm{vi}$, we apply finite differences in tangent space
to gather the velocity reference $\dot{x}^\mathrm{d}(k)$ of the oracle.
Then, analogous to the UR5, the internal goal is computed by the OSMP as
\begin{equation}
    x^*(t) = x(t) + \frac{k_\mathrm{v2p}}{\omega_\mathrm{ctrl}} \, \tilde{f}(x(t); z),
\end{equation}
where $\omega_\mathrm{ctrl} = 150 \, \mathrm{Hz}$ is the control frequency, and $k_\mathrm{v2p} \in \mathbb{R}_+$ is a gain to map the desired velocities into the next (internal) position goal.
Subsequently, we define the internal position goal as $p_i^*(t) = x_i^*(t)$ for all $i \in \{1, 2, 3\}$, and assume that the trajectory in $SO(3)$ remains close to the identity—an assumption consistent with our dataset. This allows us to define the internal orientation goal as $ C^*(t) = \mathrm{Exp}(x_{4:6}^*(t))$, by applying the $SO(3)$ exponential map to the orientation component without introducing significant distortion. We note that for trajectories with larger rotational variability, it would be necessary to account for the nonlinear geometry of $SO(3)$, for instance by using group composition with the logarithmic and exponential maps~\cite{sola2018micro}.


Lastly, the reference $T^*(t) \in SE(3)$ consisting of the end-effector position $p^*(t) \in \mathbb{R}^3$ and orientation $C^*(t) \in SO(3)$ is tracked by combining an inverse kinematics solver designed for fast and smooth tracking~\cite{wang2023rangedik} with a joint-space impedance controller
\begin{equation}
     \tau = G(q) + K_\mathrm{p} \left( \mathrm{IK}(p^*, C^*) - q \right) - K_\mathrm{d} \, \dot{q},
\end{equation}
where $G(q) \in \mathbb{R}^{n_\mathrm{q}}$ captures the joint space gravitational forces, $\mathrm{IK}: \mathbb{R}^3 \times SO(3) \to \mathbb{R}^{n_\mathrm{q}}$ represents a solver that computes a reference joint state $q^{*}(t)$ from $x^{*}(t)$, i.e., $q^{*} = \mathrm{IK}(p^*, C^*)$, and $K_\mathrm{p}, K_\mathrm{d} \in \mathbb{R}^{n_\mathrm{q} \times n_\mathrm{q}}$ are proportional and derivative control gains, respectively, defining the impedance behavior of the robot.

\paragraph{\small Helix Continuum Soft Robot}
The Helix Robot~\cite{guan2023trimmed} is a continuum soft manipulator consisting of three independently actuated segments that can bend in the $x$,$y$ plane and adjust their length. Each segment is actuated via three independent tendons, controlled in length via a set of motors installed at the base. Each section is modeled as a constant curvature arc with variable length~\cite{guan2023trimmed, stella2023piecewise}, with this configuration space $q \in \mathbb{R}^{9}$ serving as an intermediary mapping between the tendon space and Cartesian space.

Call $x = h_\mathrm{x}(q)$ the standard forward kinematics mapping the shape coordinates $q$ into the end effector locations $x$, defined in Della Santina \textit{et al.}~\cite{della2020improved}.
For each independent section (i.e., the $i$th segment), the mapping between tendon lengths $l_{3(i-1) + j}$ and shape coordinates $q_i$ can be stated as
\begin{equation}
    q_i =
    \begin{bmatrix}
        \Delta_{\mathrm{x},i} \\
        \Delta_{\mathrm{y},i} \\
        \Delta L_i
    \end{bmatrix}
    =
    \begin{bmatrix}
        \frac{2}{3} & -\frac{1}{3} & -\frac{1}{3} \\
        0 & \frac{1}{\sqrt{3}} & \frac{-1}{\sqrt{3}} \\
        \frac{1}{3} & \frac{1}{3} & \frac{1}{3}
    \end{bmatrix}
    \begin{bmatrix}
        l_{3(i-1) + 1} \\
        l_{3(i-1) + 2} \\
        l_{3(i-1) + 3}
    \end{bmatrix}
    -
    \begin{bmatrix}
        0 \\
        0 \\
        l_0
    \end{bmatrix}
\end{equation}
\noindent
where $l_\mathrm{0} \in \mathbb{R}_{>0}$ is the length of the undeformed structure and $R \in \mathbb{R}_{>0}$ is the radius between the center of the robot and the tendon guide. By repeating the same mapping for each of the three sections, we obtain $q=h_\mathrm{q}(l)$, which defines the full robot shape as a function of the tendon lengths.
Combining the two maps yields $x = h_\mathrm{x}(h_\mathrm{q}(l))$. This composed end-to-end forward kinematics has the Jacobian $J(l) = J_\mathrm{x} J_\mathrm{q}$, with $J_\mathrm{x}$ and $J_\mathrm{q}$  being the Jacobian of $f_\mathrm{x}$ and $h_\mathrm{q}$, respectively. Thus, $J$ is such that
$\dot{x}(t) = J(l) \, \dot{l}(l)$ .
Having a well-defined differential map, we can then use the following kinematic inversion
scheme to find the $\overline{l}$ such that $||\overline{x}- f_\mathrm{x}(h_\mathrm{q}(\overline{l}))||^2$ is minimum, for any given $\overline{x}: \dot{\overline{l}}= \alpha J^+ \left ( \overline{x}-f_\mathrm{x}( h_\mathrm{q} \overline{t} ) \right )$ where $\alpha$ is a scalar gain and $^+$ indicates pseudoinversion. The pseudoinverse is weighted against a model-based estimation of the robot's shape stiffness, i.e., $J^+= (J^\top K J)^{-1}J^\top K$, where $K \in \mathbb{R}^{9 \times 9}$ is the diagonal stiffness matrix of the soft robot, so to avoid unbalanced deformations of the body and approaching the target end-effector location with the minimal elastic energy stored. 

We position the Helix robot within a motion capture cage equipped with six Optitrack Flex 13 cameras and track the 3D pose of its end-effector, denoted as $x(k) \in \mathbb{R}^3$. Additionally, we implement a task-space control law where the internal target for the end-effector position, $x^*(k) \in \mathbb{R}^3$, is updated iteratively.
\begin{equation}
    x^*(k) = x^*(t_{k-1}) + \frac{k_\mathrm{v2p}}{\omega_\mathrm{ctrl}} \, \tilde{f}(x(k); z),
\end{equation}
where $\omega_\mathrm{ctrl} = 50 \, \mathrm{Hz}$ is the control frequency and $k_\mathrm{v2p} = 0.45$ is a gain to map the desired task-space velocity into the next task-space position goal. 
Subsequently, the previously introduced statics-aware inverse kinematic algorithm is employed to map $x^*(k)$ to tendon-lengths $l^* \in \mathbb{R}^{9}$, which are then tracked by a Dynamixel position controller.

\paragraph{\small Crush Turtle Robot}
The Crush turtle robot is a bioinspired hybrid soft-rigid system developed by MIT’s Distributed Robotics Lab (DRL) that can swim and emulates the swimming motion of green sea turtles (Chelonia mydas)~\cite{van2022new, van2023soft}. It comprises a main body housing a Raspberry Pi 5 for computation and a battery for power, two flipper arms with three joints—each actuated by a Dynamixel XW540-T260-R motor—and a soft-rigid flipper design. Additionally, the robot features two rear flippers, each powered in a serial chain by two Dynamixel XW540-T260-R motors. The motors are controlled at the lowest level by Dynamixel's current control loop. Utilizing the approximately linear relationship between torque and current, we perform simple PD control in the joint space with torque as the input. 
We consider both joint and task space control settings. Here, the joint space is defined as the flipper arm motor angles and the task space corresponds to the position and twist angle of the turtle flipper tip.

\textbf{Joint Space Control.} In the first scenario, we directly train the motion primitive in joint space based on a bioinspired oracle published by van der Geest \textit{et al.}~\cite{van2023soft}. 
We perform direct velocity control on the motors by commanding a joint velocity
\begin{equation}
    \dot{q}^*(t) = \frac{1}{\omega_\mathrm{ctrl}} \, \tilde{f}(\tilde{q}(t) \, ;z) \in \mathbb{R}^3,
\end{equation}
that is tracked by the low-level motor controller. Here, $\tilde{q}(t) = (q(t) + \pi) \bmod 2 \pi$ are the flipper arm joint angles normalized into the interval $[-\pi, \pi]$.

\textbf{Task Space Control.} In the second scenario, we execute control of the pose of the tip of the front flipper that we define as $x(t) = \begin{bmatrix}
    x_1 & x_2 & x_3 & \theta
\end{bmatrix}^\mathrm{T} \in \mathbb{R}^4$, where $x_{1:3}$ are the positional coordinates of the flipper end-effector in Cartesian space and $\theta$ is the twist angle (i.e., the position of the last joint).
We train the motion primitive on an oracle based on the measurements of the swimming of green sea turtles (Chelonia mydas), where the flipper pose was estimated based on video recordings~\cite{van2022new}. Therefore, the motion primitive is formulated as
\begin{equation}
    \dot{x}^*(t) = \frac{1}{\omega_\mathrm{ctrl}} \, \tilde{f}(\tilde{x}(t) \, ; z),
    \qquad
    \dot{q}^* = \left ( \mathrm{diag}(1,1,1,w_\theta) \, J_{q \rightarrow x}(q) \right )^{-1} \, \dot{x}^*(t)
\end{equation}
where $J_{q \rightarrow x} \in \mathbb{R}^{4 \times 3}$ is the Jacobian relating joint- to task-space velocity $\dot{x} = J_{q \rightarrow x} \, \dot{q}$ and $w_\theta \in \mathbb{R}$ is a weight specifying how accurately commanded twist angular velocity $\theta^*$ should be tracked, as the system is overconstrained by one DOF. The corresponding Jacobian is computed as
\begin{equation}
    J_{q \rightarrow x}(q) = \begin{bmatrix}
        \frac{\partial x_1}{\partial q_1} & \frac{\partial x_1}{\partial q_2} & \frac{\partial x_1}{\partial q_3}\\
        \frac{\partial x_2}{\partial q_1} & \frac{\partial x_2}{\partial q_2} & \frac{\partial x_2}{\partial q_3}\\
        \frac{\partial x_3}{\partial q_1} & \frac{\partial x_3}{\partial q_2} & \frac{\partial x_3}{\partial q_3}\\
        0 & 0 & 1\\
    \end{bmatrix}.
\end{equation}
Analog to the joint space control approach, the desired joint-space velocity $\dot{q}^* \in \mathbb{R}^3$ is tracked by the low-level motor controller.
Note: Even though taking into account the twist angle renders the projection from task into joint space to be overconstrained, we found that it helps to avoid the kinematic singularities of the mechanism.




\clearpage


\subsection*{Supplementary Text}



\subsubsection*{Proof of Asymptotic Orbital Lyapunov Stability}

\begin{Theorem}\label{theorem:asymptotic_orbital_stability}
    Let $\alpha, \beta, R > 0$ and $f_\mathrm{s}(x): \mathbb{R}^n \to \mathbb{R}_{>0}$ and $z$ be constant. Then, the dynamics $\dot{x} = f(x;z)$ are asymptotically orbitally stable under the transverse Lyapunov function
    \begin{equation}
        V_\mathrm{x}(x;z) = \frac{\alpha \, R^2}{4} \left ( y_1^2+y_2^2 - R^2 \right )^2 + \frac{1}{2} y_{3:n}^\top \, \beta \, y_{3:n} \Bigg |_{y=\Psi(x;z)}.
    \end{equation}
\end{Theorem}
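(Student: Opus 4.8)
The plan is to transport everything into the Cartesian latent coordinates $y=\Psi(x;z)$, where both the Lyapunov function and the dynamics become transparent. For fixed $z$ the encoder $\Psi(\cdot;z)$ is a diffeomorphism, so $V_\mathrm{x}(x;z)=V_\mathrm{y}\bigl(\Psi(x;z)\bigr)$ with $V_\mathrm{y}(y)=\tfrac{\alpha R^2}{4}\bigl(y_1^2+y_2^2-R^2\bigr)^2+\tfrac{\beta}{2}\,y_{3:n}^\top y_{3:n}$, and differentiating $\Psi$ along \eqref{eq:dynamics} gives the latent evolution $\dot y=f_\mathrm{s}\bigl(\Psi^{-1}(y;z)\bigr)\,f_\mathrm{y}(y)$ — that is, the Hopf field \eqref{eq:latent_dynamics} rescaled by the strictly positive factor $f_\mathrm{s}>0$. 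Such a rescaling changes the time parametrization but not the orbits, and in particular leaves the latent limit cycle $\Gamma=\{\,y_1^2+y_2^2=R^2,\ y_{3:n}=0\,\}$ invariant; hence it suffices to establish asymptotic orbital stability of $\Gamma$ for the latent system and then pull the conclusion back through the diffeomorphism.

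The computation of $\dot V_\mathrm{y}$ is the core step, and it hinges on one cancellation. Writing $\rho=y_1^2+y_2^2$, the rotational terms $\mp\omega(y)\,y_2$ and $\pm\omega(y)\,y_1$ in \eqref{eq:latent_dynamics} contribute nothing to $\dot\rho=2y_1\dot y_1+2y_2\dot y_2$, so $\dot\rho=2\alpha\rho\,(1-\rho/R^2)$ regardless of how the angular velocity $\omega$ is parametrized, while $\tfrac{d}{dt}\bigl(y_{3:n}^\top y_{3:n}\bigr)=-2\beta\,y_{3:n}^\top y_{3:n}$. Substituting and using $1-\rho/R^2=-(\rho-R^2)/R^2$ gives, up to the positive factor $f_\mathrm{s}$,
\[
\dot V_\mathrm{y}\;=\;f_\mathrm{s}\bigl(\Psi^{-1}(y;z)\bigr)\Bigl(-\alpha^2\,\rho\,(\rho-R^2)^2\;-\;\beta^2\,y_{3:n}^\top y_{3:n}\Bigr)\;\le\;0,
\]
which is negative semidefinite everywhere, with equality exactly on $\bigl(\{\rho=0\}\cup\{\rho=R^2\}\bigr)\cap\{y_{3:n}=0\}$, i.e.\ on $\Gamma\cup\{0\}$.

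To conclude, note that $V_\mathrm{y}$ is $C^1$, nonnegative, vanishes precisely on $\Gamma$, and dominates $\operatorname{dist}(y,\Gamma)^2$ in a neighborhood of $\Gamma$ (there $(\rho-R^2)^2$ equals $4R^2$ times the squared radial error plus higher order, and $y_{3:n}^\top y_{3:n}$ is the squared transverse error), so it is a genuine transverse Lyapunov function for $\Gamma$. Choosing a sublevel set $\mathcal S_c=\{V_\mathrm{y}<c\}$ with $c$ small enough that $0\notin\mathcal S_c$, $\mathcal S_c$ is a forward-invariant neighborhood of $\Gamma$ on which $\dot V_\mathrm{y}$ vanishes only on $\Gamma$; LaSalle's invariance principle (set form) then forces every trajectory in $\mathcal S_c$ into the largest invariant subset of $\{\dot V_\mathrm{y}=0\}\cap\mathcal S_c=\Gamma$, namely $\Gamma$, which together with the non-increase of $V_\mathrm{y}$ is asymptotic orbital stability of $\Gamma$. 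Pulling back by the diffeomorphism $\Psi(\cdot;z)$ — which carries $\Gamma$ onto the $x$-space limit cycle, sublevel sets of $V_\mathrm{y}$ onto sublevel sets of $V_\mathrm{x}$, and preserves the sign of the Lyapunov derivative along trajectories — transfers AOS to $\dot x=f(x;z)$, and a one-line phase-line analysis of the scalar ODE $\dot\rho=2\alpha\rho(1-\rho/R^2)$ (every $\rho(0)>0$ converges to $R^2$) upgrades this to the almost-global statement quoted in the main text.

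I expect the only delicate point to be that $\dot V_\mathrm{y}$ is merely negative \emph{semi}definite and vanishes at the unstable equilibrium $0$ in addition to $\Gamma$, so a direct Lyapunov argument does not isolate the limit cycle; the remedy — restricting to a low sublevel set that excludes the origin and then invoking the invariance principle for sets, after checking that $V_\mathrm{y}$ genuinely bounds $\operatorname{dist}(\cdot,\Gamma)^2$ from below near $\Gamma$ — is where the care lies. The $\omega$-cancellation in $\dot\rho$, the decoupling of the radial and $y_{3:n}$ blocks, and the coordinate transfer through the diffeomorphism are otherwise routine.
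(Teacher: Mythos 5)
Your proof is correct and follows essentially the same route as the paper's: the same latent-space Lyapunov function, the same cancellation of the $\omega$-dependent rotational terms in $\dot V$, and the same transfer of the sign condition through the diffeomorphism $\Psi(\cdot;z)$, with $f_\mathrm{s}>0$ entering only as a positive multiplicative factor. The paper routes the computation through polar coordinates first and evaluates $\dot V$ along the component of the flow transverse to the orbit, but since $V$ does not depend on $\varphi$ this is the same calculation you perform directly in Cartesian latent coordinates along the full flow. Where you go beyond the paper is in handling the degeneracy at the center: the paper asserts $\dot V<0$ on all of $\mathbb{R}^n\setminus\mathcal{O}_\mathrm{pol}$, which in fact fails at $r=0$, $y_{3:n}=0$ where $\dot V=0$, and it stops at the sign computation without deriving attractivity; your restriction to a low sublevel set that excludes the origin, followed by LaSalle's invariance principle on that (compact, forward-invariant) set, is the step that actually converts the merely negative-semidefinite $\dot V$ into convergence to the orbit, and your phase-line analysis of $\dot\rho=2\alpha\rho(1-\rho/R^2)$ cleanly justifies the almost-global claim made informally in the main text. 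This is a welcome tightening of the argument rather than a different one.
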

\begin{proof}
    \textbf{Step 1: Proof of Orbital Stability in Polar Latent Dynamics.}
    Consider the transverse Lyapunov candidate~\cite{manchester2011transverse} for the polar latent dynamics $\dot{y}_\mathrm{pol} = f_\mathrm{pol}(y_\mathrm{pol})$
    \begin{equation}
        V_\mathrm{pol}(y_\mathrm{pol}) = \alpha \, \frac{R^2}{4} \left ( r^2 - R^2 \right )^2 + \frac{1}{2} y_{3:n}^\top \, \beta \, y_{3:n}.
    \end{equation}
    We can demonstrate that $V_\mathrm{pol}$ is a valid Lyapunov candidate with respect to the limit cycle $\mathcal{O}_\mathrm{pol} = \{ r, \varphi \in \mathbb{R}, y_{3:n} \in \mathbb{R}^{n-2} | r = R, \varphi \in [-\pi, \pi), y_{3:n} = 0_{n-2} \}$ via
    \begin{equation}
        V_\mathrm{pol}(y_{\mathrm{pol}}) = \alpha \, \frac{R^2}{4} \left ( R^2 - R^2 \right )^2 = 0,
        \quad
        \forall \, y_{\mathrm{pol}} \in \mathcal{O},
    \end{equation}
    and
    \begin{equation}
        V_\mathrm{pol}(y_{\mathrm{pol}}) = \frac{\alpha \, R^2}{4} \underbrace{\left ( r^2 - R^2 \right )^2}_{\geq 0} + \underbrace{\frac{1}{2} y_{3:n}^\top \, \beta \, y_{3:n}}_{\geq 0} > 0,
        \quad
        \forall y_{\mathrm{pol}} \in \mathbb{R}^{n} \setminus \mathcal{O}_\mathrm{pol}.
    \end{equation}
    The time derivative of the Lyapunov candidate perpendicular to the orbital flow $\begin{bmatrix}
        0 & f_\omega(\varphi) & 0
    \end{bmatrix}^\top$ is given by
    \begin{equation}
    \begin{split}
         \dot{V}_{\mathrm{pol},f_{\mathrm{pol},\perp}}(y_\mathrm{pol}) =& \: \frac{\partial V_\mathrm{y}}{\partial y_\mathrm{pol}} \, f_{\mathrm{pol},\perp} = \begin{bmatrix}
            -\alpha \left ( 1-\frac{r^2}{R^2} \right ) r & 0 & \beta \, y_{3:n}
        \end{bmatrix} \, \begin{bmatrix}
            \alpha \left ( 1-\frac{r^2}{R^2} \right ) r\\
            0\\
            -\beta \, y_{3:n}
        \end{bmatrix},\\
        =& \: -\underbrace{\alpha^2 \left (1 - \frac{r^2}{R^2} \right )^2 r^2}_{\geq 0} - \underbrace{y_{3:n}^\top \, \beta^2 \, y_{3:n}}_{\geq 0} < 0,
        \quad
        \forall y_\mathrm{pol} \in \mathbb{R}^{n} \setminus \mathcal{O}_\mathrm{pol}.
    \end{split}
    \end{equation}
    with $\dot{V}_{\mathrm{pol},f_{\mathrm{pol},\perp}}(y_\mathrm{pol}) = 0 \: \forall y_\mathrm{pol} \in \mathcal{O}_\mathrm{pol}$.\\
    \textbf{Step 2: Proof of Orbital Stability in Cartesian Latent Dynamics.}
    First, we define the limit cycle in latent space as $\mathcal{O}_\mathrm{y} = \{ y \in \mathbb{R}^n | \sqrt{y_1^2 + y_2^2} = R, y_{3:n} = 0_{n-2} \}$
    The Lyapunov function in Cartesian latent coordinates $y = h_\mathrm{p2c}(y_\mathrm{pol})$ can be stated as 
    \begin{equation}
        V_\mathrm{y}(y) = V_\mathrm{pol}(h_\mathrm{y2p}(y)) = \frac{\alpha \, R^2}{4} \left ( y_1^2+y_2^2 - R^2 \right )^2 + \frac{1}{2} y_{3:n}^\top \, \beta \, y_{3:n}.
    \end{equation}
    Naturally, the two conditions on the Lyapunov candidate $V_\mathrm{y}(y) = 0 \: \forall y \in \mathcal{O}_\mathrm{y}$ and $V_\mathrm{y}(y) > 0 \: \forall y \in \mathbb{R}^n \setminus \mathcal{O}_\mathrm{y}$ still hold. 
    As the flow orthogonal to the orbit is given by
    \begin{equation}
        f_{\mathrm{y},\perp}(y) = \frac{\partial h_\mathrm{p2c}}{\partial y_\mathrm{pol}} \, f_{\mathrm{pol},\perp}(h_\mathrm{c2p}(y)) = \begin{bmatrix}
        \alpha \, \left ( 1 - \frac{y_1^2 + y_2^2}{R^2} \right ) \, y_1\\
        \alpha \, \left ( 1 - \frac{y_1^2 + y_2^2}{R^2} \right ) \, y_2\\
        -\beta \, y_{3:n}\\
    \end{bmatrix},
    \end{equation}
    the corresponding time derivative of the Lyapunov function can be computed as
    \begin{equation}
    \begin{split}
        \dot{V}_{\mathrm{y},f_{\mathrm{y},\perp}}(y) =& \: \frac{\partial V_\mathrm{y}(y)}{\partial y} \, f_{\mathrm{y},\perp}(y) = \frac{\partial V_\mathrm{pol}(h_{c2p}(y))}{\partial y} \, \frac{\partial h_\mathrm{p2c}(y_\mathrm{pol})}{\partial y_\mathrm{pol}} \, f_{\mathrm{pol},\perp}(y_\mathrm{pol}) \Bigg |_{y_\mathrm{pol} = h_{c2p}(y)},\\
        =& \: \frac{\partial V_\mathrm{pol}(y_\mathrm{pol})}{\partial y_\mathrm{pol}} \, \underbrace{\frac{\partial h_\mathrm{c2p}}{\partial y} \, \frac{\partial h_\mathrm{p2c}}{\partial y_\mathrm{pol}}}_{\mathbb{I}_n} \, f_{\mathrm{pol},\perp}(y_\mathrm{pol}) \Bigg |_{y_\mathrm{pol} = h_{c2p}(y)},\\
        =& \: \frac{\partial V_\mathrm{pol}(y_\mathrm{pol})}{\partial y_\mathrm{pol}} \, f_{\mathrm{pol},\perp}(y_\mathrm{pol}) \Bigg |_{y_\mathrm{pol} = h_{c2p}(y)} < 0, \quad \forall \, y \in \mathbb{R}^n \setminus \mathcal{O}_\mathrm{y}.
    \end{split}
    \end{equation}
    \textbf{Step 3: Proof of Asymptotic Orbital Stability of OSMP Dynamics.}
    Similar to prior work~\cite{rana2020euclideanizing, urain2020imitationflow, zhi2024teaching}, we transfer the orbital stability guarantees back into oracle space.
    The orbit/limit cycle in oracle space is given by
    $\mathcal{O}_\mathrm{x} = \{ x \in \mathbb{R}^n | y=\Psi(x;z), \sqrt{y_1^2 + y_2^2} = R, y_{3:n} = 0_{n-2} \}$.
    Then, define the Lyapunov function by substituting $y = \Psi(x;z)$ into $V_\mathrm{y}(y)$
    \begin{equation}
         V_\mathrm{x}(x;z) = V_\mathrm{y}(\Psi(x;z)) = \frac{\alpha \, R^2}{4} \left ( y_1^2+y_2^2 - R^2 \right )^2 + \frac{1}{2} y_{3:n}^\top \, \beta \, y_{3:n} \Bigg |_{y=\Psi(x;z)},
    \end{equation}
    which still admits to the properties $V_\mathrm{x}(x) = 0 \: \forall x \in \mathcal{O}_\mathrm{x}$ and $V_\mathrm{x}(x) > 0 \: \forall x \in \mathbb{R}^n \setminus \mathcal{O}_\mathrm{x}$.
    The flow orthogonal to the limit cycle can be expressed as 
    \begin{equation}
        f_{\mathrm{x},\perp}(x) = f_\mathrm{s}(x) \, J_\Psi^{-1}(x;z) \, f_{\mathrm{y},\perp}(\Psi(x;z)).
    \end{equation}
    Similar to Step 2, but now also considering the oracle-space velocity scaling $f_\mathrm{s}(x)$, we derive the time derivative of the Lyapunov function orthogonal to the limit cycle flow as
    \begin{equation}
    \begin{split}
        \dot{V}_{\mathrm{x},f_{\mathrm{x},\perp}}(x) =& \: \frac{\partial V_\mathrm{x}(x)}{\partial x} \, f_{\mathrm{x},\perp}(x) = \frac{\partial V_\mathrm{y}(\Psi(x;z))}{\partial x} \, f_\mathrm{s}(x) \, J_\Psi^{-1}(x;z) \, f_{\mathrm{y},\perp}(\Psi(x;z)) \Bigg |_{y = \Psi(x;z)},\\
        =& \: f_\mathrm{s}(x) \, \frac{\partial V_\mathrm{y}(y)}{\partial y} \, \underbrace{J_\Psi(x;z) \, J_\Psi^{-1}(x;z)}_{\mathbb{I}_n} \, f_{\mathrm{y},\perp}(\Psi(x;z)) \Bigg |_{y = \Psi(x;z)},\\
        =& \: \underbrace{f_\mathrm{s}(x)}_{>0} \, \underbrace{\frac{\partial V_\mathrm{y}(y)}{\partial y} \, f_{\mathrm{y},\perp}(y_\mathrm{pol})}_{\leq 0} \bigg |_{y = \Psi(x;z)} < 0, \quad \forall \, x \in \mathbb{R}^n \setminus \mathcal{O}_\mathrm{x}.
    \end{split}
    \end{equation}
\end{proof}

\subsubsection*{Proof of Transverse Contraction}

In this section, we include supplementary material that is required for the proof that OSMPs are transverse contracting~\cite{manchester2014transverse} - an extension of the contraction criteria proposed by Lohmiller \textit{et al.}~\cite{lohmiller1998contraction, lohmiller1999phdthesis} for periodic/rhythmic systems.

\paragraph{\small (Transverse) Contraction Theory}
\begin{Definition}\label{def:transverse_contraction}
    An autonomous system $\dot{x} = f(x)$ with is said to be transverse contracting in the region $x \in \mathcal{X} \subseteq \mathbb{R}^n$ with rate $\zeta \in \mathbb{R}_{>0}$ if a positive definite metric $M(x) \succ 0 \in \mathbb{R}^{n \times n}$ exists such that
    \begin{equation}
        \delta_{x}^\top \left ( \frac{\partial f(x)}{\partial x}^\top \, M(x) + M(x) \, \frac{\partial f(x)}{\partial x} + \dot{M}(x) + 2 \, \zeta \, M(x) \right ) \delta_{x} \leq 0
        \quad
        \forall \, x \in \mathcal{X}
    \end{equation}
    for all $\delta_{x} \neq 0$ orthogonal to the flow satisfying $\delta_{x}^\top \, M(x) \, f(x) = 0$~\cite{manchester2014transverse}.
\end{Definition}

\begin{Definition}\label{def:exponential_orbital_stability}
    Let $\dot{x} = f(x)$ be an autonomous, transverse contracting system in the region $x \in \mathcal{X} \subseteq \mathbb{R}^n$ with contracting rate $\zeta \in \mathbb{R}_{>0}$. 
    Also, consider the non-trivial $T$-periodic solution $x_\mathrm{lc}(t) \in \mathcal{X}$ that defines the solution curve $X_\mathrm{lc} = \{ x \in \mathbb{R}^n : \exists t \in [0,T):x=x_\mathrm{lc} \}$.
    Then, the solution $x_\mathrm{lc}(t)$ is said to be \emph{exponentially orbitally stable} as there exists a $k > 0$ such that for any $x_0 \in \mathcal{X}$
    \begin{equation}
        \inf_{x_\mathrm{lc} \in X_\mathrm{lc}} \lVert x(t) - x_\mathrm{lc} \rVert_2 \leq k \inf_{x_\mathrm{lc} \in X_\mathrm{lc}} \lVert x_0 - x_\mathrm{lc} \rVert_2 \, e^{-\zeta t}.
    \end{equation}
\end{Definition}
Please note that the notion of \emph{exponential orbital stability}, also referred to as transverse exponential stability, is stronger than the commonly used \emph{asymptotic orbital stability} as it guarantees \textbf{exponential} convergence to the orbit/limit cycle~\cite{manchester2011transverse}.

\paragraph{\small Proof of Transverse Contraction of Latent Dynamics in Polar Coordinates}
First, we prove that the latent dynamics in polar coordinates $\dot{y}_\mathrm{pol} = f_\mathrm{pol}(y_\mathrm{pol})$ are transversely contracting: i.e., they are not contracting along the polar phase variable $\varphi$, but contracting orthogonal to the flow $f_\mathrm{pol}(y_\mathrm{pol})$~\cite{manchester2014transverse}. Inspired by a recent proof of transverse contraction for the Andronov-Hopf oscillator with state $(r, \varphi)$~\cite{nah2025combining}, we define the following Proposition.
\begin{Proposition}\label{prop:polar_latent_dynamics_transverse_contraction}
    Let $\alpha, \beta > 0$, $R > 0$, $f_\omega(r): [-\pi, \pi) \to \mathbb{R}_{>0}$, and $y_\mathrm{pol} = \begin{bmatrix}
    r & \varphi & y_{3:n}^\top
\end{bmatrix}^\top \in \mathcal{Y}_\mathrm{pol}$. Then, the latent dynamics in polar coordinates from \eqref{eq:latent_dynamics_polar_coordinates} are transverse contracting under the metric
    \begin{equation}\label{eq:contraction_metric_polar_latent_dynamics}
        M_\mathrm{pol}(y_\mathrm{pol}) = \begin{bmatrix}
            \frac{1}{r^2} & -\frac{\alpha \left( 1 - \frac{r^2}{R^2} \right )}{f_\omega(\varphi) \, r} & 0_{1 \times (n-2)}\\
            -\frac{\alpha \left( 1 - \frac{r^2}{R^2} \right )}{f_\omega(\varphi) \, r} & m_{\varphi \varphi}(r) & 0_{1 \times (n-2)}\\
            0_{(n-2) \times 1} & 0_{(n-2) \times 1} & \mathbb{I}_{n-2}
        \end{bmatrix} \succ 0 \in \mathbb{R}^{n \times n},
    \end{equation}
    with the contraction rate $\zeta_\mathrm{pol} \geq \left (\frac{2\alpha}{R^2} + \beta \right ) \, \frac{r_\epsilon^2}{r_\epsilon^2 + 1}$ in the region $\mathcal{Y}_\mathrm{pol} = \left \{ y_\mathrm{pol} \in \mathbb{R}^n | r_\epsilon \in \mathbb{R}_{>0}, r \geq r_\epsilon, \varphi \in [-\pi, \pi) \right \}$.
\end{Proposition}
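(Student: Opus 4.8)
The plan is to verify Definition~\ref{def:transverse_contraction} directly for the polar dynamics \eqref{eq:latent_dynamics_polar_coordinates} equipped with the proposed metric, i.e.\ to exhibit a rate $\zeta_\mathrm{pol}$ such that, writing $Q := \frac{\partial f_\mathrm{pol}}{\partial y_\mathrm{pol}}^\top M_\mathrm{pol} + M_\mathrm{pol}\frac{\partial f_\mathrm{pol}}{\partial y_\mathrm{pol}} + \dot{M}_\mathrm{pol}$, one has $\delta^\top (Q + 2\zeta_\mathrm{pol} M_\mathrm{pol})\delta \le 0$ for every $\delta \neq 0$ with $\delta^\top M_\mathrm{pol} f_\mathrm{pol} = 0$; by Definition~\ref{def:exponential_orbital_stability} this then delivers EOS of the latent orbit. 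The side conditions come first. Since $f_\omega$ is continuous and extends $2\pi$-periodically (in the implementation $f_\omega = e^{\mathrm{MLP}} + \epsilon_\omega$, hence $C^1$ with strictly positive infimum $\underline{\omega}$), Sylvester's criterion shows $M_\mathrm{pol} \succ 0$ on $\{r>0\}$ exactly when $m_{\varphi\varphi}(r)\,\underline{\omega}^2 > \alpha^2(1-r^2/R^2)^2$; I would fix $m_{\varphi\varphi}(r)$ to be any smooth function dominating $\big(1+\alpha^2(1-r^2/R^2)^2\big)/\underline{\omega}^2$, which simultaneously keeps $m_{\varphi\varphi}'$ bounded for the estimates below.

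The structural fact that drives everything is that the off-diagonal entries of $M_\mathrm{pol}$ are engineered so that the first component of $M_\mathrm{pol} f_\mathrm{pol}$ vanishes identically ($\frac{1}{r^2}\alpha(1-\frac{r^2}{R^2})r - \frac{\alpha(1-r^2/R^2)}{f_\omega r}f_\omega = 0$). Hence the flow-orthogonality constraint $\delta^\top M_\mathrm{pol} f_\mathrm{pol} = 0$ leaves $\delta_r$ free and only couples $\delta_\varphi$ to $\delta_{3:n}$ through the term $-\beta\,\delta_{3:n}^\top y_{3:n}$; on the orbit $y_{3:n}=0$ the transverse directions are precisely the radial ones, as expected. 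A clean way to carry out the computation is then to pass to $\rho = \ln(r/R)$ — transverse contraction is diffeomorphism-invariant, and although one could work directly in $(r,\varphi)$ as in~\cite{nah2025combining}, this substitution block-diagonalizes the Jacobian and shortens the algebra. In $(\rho,\varphi,y_{3:n})$ the radial and angular dynamics decouple, $\dot\rho = g(\rho) := \alpha(1-e^{2\rho}) = \alpha(1-r^2/R^2)$ and $\dot\varphi = f_\omega(\varphi)$, so $A := \frac{\partial f}{\partial y} = \mathrm{diag}\big(g'(\rho),\,f_\omega'(\varphi),\,-\beta\,\mathbb{I}_{n-2}\big)$ with $g'(\rho) = -2\alpha e^{2\rho} = -2\alpha r^2/R^2$, and $M_\mathrm{pol}$ pulls back to $\tilde{M}$ with $\tilde{M}_{\rho\rho}=1$, $\tilde{M}_{\rho\varphi} = -g(\rho)/f_\omega(\varphi)$, $\tilde{M}_{\varphi\varphi}=m_{\varphi\varphi}$, and unchanged $y_{3:n}$ block. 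A short computation of the pullback $A^\top\tilde{M} + \tilde{M} A + \dot{\tilde{M}}$ of $Q$ shows that the $f_\omega'(\varphi)$ contributions in its $(\rho,\varphi)$ entry — one from $\tilde{M} A$, one from $\dot{\tilde{M}}_{\rho\varphi}$ — cancel (this is exactly the purpose of the $1/f_\omega$ factor in the metric's cross term), so that its $(\rho,\varphi)$ block reduces to $2g'(\rho)\,\tilde{M}_{[\rho,\varphi]} + c(\rho,\varphi)\,e_\varphi e_\varphi^\top$ with $c = 2\big(f_\omega' - g'\big)m_{\varphi\varphi} + m_{\varphi\varphi}'\,g$, while its $y_{3:n}$ block is $-2\beta\,\mathbb{I}_{n-2}$.

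To finish, I would invoke Finsler's lemma / the S-procedure with $b = M_\mathrm{pol} f_\mathrm{pol}$ to recast ``$\delta^\top Q\delta \le 0$ on $\{b^\top\delta = 0\}$'' as the unconstrained matrix inequality $Q + 2\zeta_\mathrm{pol} M_\mathrm{pol} \preceq \mu(y_\mathrm{pol})\,bb^\top$, choosing the multiplier $\mu$ to absorb the rank-one nuisance term $c\,e_\varphi e_\varphi^\top$ together with the $\delta_\varphi$–$\delta_{3:n}$ coupling; on the effective transverse directions one is then left with $2g'(\rho) \le -2\alpha r_\epsilon^2/R^2$ on the radial part and $-2\beta$ on the $y_{3:n}$ part, and converting these bounds through the conditioning of $M_\mathrm{pol}$ over $\{r \ge r_\epsilon\}$ (whose $(r,r)$ weight is $1/r^2$, which is where the $r_\epsilon^2/(r_\epsilon^2+1)$ factor enters) yields $\delta^\top Q\delta \le -2\zeta_\mathrm{pol}\,\delta^\top M_\mathrm{pol}\delta$ with $\zeta_\mathrm{pol}$ of the stated form, completing the verification. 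I expect the main obstacle to be precisely this last step: off the limit cycle the transverse subspace genuinely mixes the phase direction with $y_{3:n}$, and the sign-indefinite term $c\,\delta_\varphi^2$ (since $f_\omega'$ has no definite sign) must be dominated — via the choice of $m_{\varphi\varphi}$ and $\mu$, or by further shrinking the region — so as to land on a clean rate instead of one that deteriorates with $\|y_{3:n}\|$; once the explicit metric is fixed, everything else is bookkeeping modeled on the Andronov–Hopf analysis of~\cite{nah2025combining}.
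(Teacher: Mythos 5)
Your setup is sound and in places tighter than the paper's: the Sylvester condition $m_{\varphi\varphi}\,f_\omega^2 > \alpha^2(1-r^2/R^2)^2$ is the correct positive-definiteness requirement (the paper's Step~1 instead produces an upper bound on $m_{\varphi\varphi}$ involving $\sqrt{-(r^2-R^2)^2}$, which is dubious), the observation that the cross term of $M_\mathrm{pol}$ is engineered so that $(M_\mathrm{pol}f_\mathrm{pol})_r \equiv 0$ is exactly the structural fact the paper exploits, and your logarithmic radial substitution with the cancellation of the $f_\omega'$ contributions reproduces the paper's computation of $\frac{\partial f_\mathrm{pol}}{\partial y_\mathrm{pol}}^\top M_\mathrm{pol} + M_\mathrm{pol}\frac{\partial f_\mathrm{pol}}{\partial y_\mathrm{pol}} + \dot M_\mathrm{pol}$ in cleaner coordinates.

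The gap is the one you flag yourself, and it is precisely where the paper departs from your plan. The paper does not attempt a Finsler/S-procedure argument over the full orthogonal complement $\{\delta : \delta^\top M_\mathrm{pol}f_\mathrm{pol}=0\}$. It tests the single direction $\delta_{y_\mathrm{pol}} = c\begin{bmatrix}1 & 0 & 1_{n-2}\end{bmatrix}^\top$, computes $\delta^\top M_\mathrm{pol}f_\mathrm{pol} = -c\,\beta\,y_{3:n}$, and argues---citing the proof of Theorem~5 in \cite{manchester2014transverse}---that since this residual converges uniformly to zero along trajectories, verifying the contraction inequality for this asymptotically transverse direction suffices. Because that $\delta$ has $\delta_\varphi = 0$, the sign-indefinite term $f_\omega'\,m_{\varphi\varphi}\,\delta_\varphi^2$ you worry about never enters, and the inequality collapses directly to $\zeta_\mathrm{pol} \le \left(\frac{2\alpha}{R^2}+\beta\right)\frac{r^2}{r^2+1}$. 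Your concern about the exact orthogonal complement is legitimate: on $\mathcal{Y}_\mathrm{pol}$, which is unbounded in $y_{3:n}$, the constraint forces $\delta_\varphi \propto \beta\,\delta_{3:n}^\top y_{3:n}/(M_\mathrm{pol}f_\mathrm{pol})_\varphi$, so the $\delta_\varphi^2$ contribution scales like $\lVert y_{3:n}\rVert^2\lVert\delta_{3:n}\rVert^2$ and cannot be uniformly dominated by $-2\beta\lVert\delta_{3:n}\rVert^2$ for any fixed $m_{\varphi\varphi}$ and multiplier $\mu$. To close your argument you must either import the paper's asymptotic-orthogonality device, or restrict the region to a bounded set in $y_{3:n}$ (on the invariant set $y_{3:n}=0$ your computation closes immediately with the stated rate). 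As written, the unconstrained matrix inequality you aim for does not hold uniformly on $\mathcal{Y}_\mathrm{pol}$.
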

\begin{proof} The proof consists of three steps: proof of positive-definiteness of the contraction metric, and the fulfillment of the orthogonality and contraction conditions in order to meet the conditions for transverse contraction stated in Theorem~3 of \cite{manchester2014transverse}.\\
    \textbf{Step 1: Positive definite contraction metric.} Positive definite contraction metric $M_\mathrm{pol}(y_\mathrm{pol})$. In order for $M_\mathrm{pol}(y_\mathrm{pol}) \in \mathbb{R}^{n \times n}$ to be a valid contraction metric, we need to ensure that it is positive definite (i.e., that the real part of its Eigenvalues is positive $\forall y_\mathrm{pol} \in \mathcal{Y}_\mathrm{pol}$). 
    For this to be the case, the following condition, derived from the smallest Eigenvalue of the contraction metric $\zeta_\mathrm{m}(M_\mathrm{pol}(y_\mathrm{pol}))$, must hold
    \begin{equation}\small
        \frac{m_{\varphi \varphi}(r) \, r^2 + 1}{2 \, r^2} - \frac{\sqrt{\left ( R^4 f_\omega^2(\varphi) r^4 \right ) m_{\varphi \varphi}^2 + \left ( -2 R^4 f_\omega^2(\varphi) r^2 \right ) m_{\varphi \varphi}(r) + \left ( 4 \alpha^2 r^6 - 8 R^2 \alpha^2 r^4 + 4 R^2 \alpha^2 r^2 + R^4 f_\omega^2(\varphi) \right )}}{2 \, R^2 \, f_\omega(\varphi) \, r^2} \geq 0
    \end{equation}
    which can be ensured if the following two conditions hold
    \begin{equation}\small
    \begin{split}
        m_{\varphi \varphi} \geq 0,
        \quad
        \left ( R^4 \right ) m_{\varphi \varphi}^2(r) + \left ( -2 R^4 f_\omega^2(\varphi) r^2 \right ) \, m_{\varphi \varphi}(r) + \left ( 4 \alpha^2 r^6 - 8 R^2 \alpha^2 r^4 + 4 R^2 \alpha^2 r^2 + R^4 f_\omega^2(\varphi) \right ) \leq 0.\\
    \end{split}
    \end{equation}
    Solving the quadratic equation results in
    \begin{equation}
    \begin{split}
        0 \leq m_{\varphi \varphi}(r) \leq m_{\varphi \varphi}^\mathrm{ub}(r) = \frac{R^2 f_\omega(\varphi) + 2 \, \alpha \, \sqrt{- r^4 + 2R^2 r^2 -R^4 } \, |r|}{R^2 \, f_\omega(\varphi) \, r^2}.
    \end{split}
    \end{equation}
    Thus, for example, the choice of $m_{\varphi \varphi}(r) = m_{\varphi \varphi}^\mathrm{ub}(r)$ admits to the stated condition.
    Then, the Eigenvalues of the $M_\mathrm{pol}(y_\mathrm{pol})$ are given by
    \begin{equation}
    \begin{split}
        \zeta_{1,2}(M_\mathrm{pol}(r)) =  \: \frac{R^2 f_\omega(\varphi) + \alpha \sqrt{- r^4 + 2R^2 r^2 -R^4 } \, |r|}{R^2 \, f_\omega(\varphi) \, r^2},
        \qquad
        \zeta_{3:n}M_\mathrm{pol}(r)) = \: 1.
    \end{split}
    \end{equation}
    with $\mathrm{Re}(\zeta_{1,2}(M_\mathrm{pol}(r))) > 0 \: \forall r \in R_{>0}$. Therefore, $M_\mathrm{pol}(y_\mathrm{pol}) \succ 0 \: \forall r \in R_{>0}$.
    \\
    \textbf{Step 2: Orthogonality condition.}
    Let $\delta_{y_\mathrm{pol}} = c \, \begin{bmatrix}
        1 & 0 & 1_{n-2}
    \end{bmatrix}^\top$ with $c \in \mathbb{R}_+$ be the incremental motion orthogonal to the flow. Then, the contraction metric \eqref{eq:contraction_metric_polar_latent_dynamics} fulfills the orthogonality condition for the stated choice of $\delta y_\mathrm{pol}$~\cite{manchester2014transverse}
    \begin{equation}\label{eq:orthogonality_condition_polar_latent_dynamics}
    \begin{split}
        \delta_{y_\mathrm{pol}}^\top \, M_\mathrm{pol}(y_\mathrm{pol}) \, f_\mathrm{pol}(y_\mathrm{pol}) = & \: \begin{bmatrix}
            c & 0 & c \, 1_{1 \times (n-2)}
        \end{bmatrix} \, \begin{bmatrix}
            0\\
            m_{\varphi \varphi}(r) \, f_\omega(\varphi) - \alpha^2 \, \frac{(R^2 - r^2)^2}{R^4 \, f_\omega(\varphi)}\\
            -\beta \, y_{3:n}
        \end{bmatrix} = -c \, \beta \, y_{3:n}.\\
    \end{split}
    \end{equation}
    Since $-c \, \beta \, y_{3:n}$ converges uniformly to zero, the orthogonality condition defined in \eqref{eq:orthogonality_condition_polar_latent_dynamics} converges to zero and is, therefore, fulfilled\footnote{See proof of Theorem~5 in \cite{manchester2014transverse}.}.\\
    \textbf{Step 3: Transverse contraction condition.} The transverse contraction condition is given by~\cite{manchester2014transverse}
    \begin{equation}\label{eq:contraction_condition_polar_latent_dynamics}
    \begin{split}
        \delta_{y_\mathrm{pol}}^\top \left ( \frac{\partial f_\mathrm{pol}}{\partial y_\mathrm{pol}}^\top M_\mathrm{pol}(y_\mathrm{pol}) + M_\mathrm{pol}(y_\mathrm{pol}) \, \frac{\partial f_\mathrm{pol}}{\partial y_\mathrm{pol}} + \dot{M}_\mathrm{pol}(y_\mathrm{pol}) + 2 \, \zeta_\mathrm{pol} \, M_\mathrm{pol}(y_\mathrm{pol}) \right ) \delta_{y_\mathrm{pol}} &\leq 0\\
        \frac{2 \, c^2}{r^2} \left (\zeta_\mathrm{pol} \left ( r^2 + 1 \right ) - \left ( \frac{2 \, \alpha}{R^2} + \beta \right ) r^2 \right ) & \leq 0
    \end{split}
    \end{equation}
    where
    \begin{equation}
    \begin{split}
        \frac{\partial f_\mathrm{pol}}{\partial y_\mathrm{pol}} &= \begin{bmatrix}
            \alpha - 3 \alpha \frac{r^2}{R^2} & 0 & 0_{1 \times (n-2)}\\
            0 & \frac{f_\omega(\varphi)}{\partial \varphi} & 0_{1 \times (n-2)}\\
            0_{(n-2) \times 1} & 0_{(n-2) \times 1} & -\beta \, \mathbb{I}_{n-2}
        \end{bmatrix},\\
        \dot{M}_\mathrm{pol} &= \begin{bmatrix}
            -\frac{2 \alpha}{r^2} + \frac{2 \alpha}{R^2} & \frac{\alpha \left ( R^2-r^2 \right ) \left ( \alpha(R^2+r^2) + R^2 \frac{\partial f_\omega}{\partial \varphi} \right )}{R^4 \, r \,  f_\omega(\varphi)} & 0_{1 \times (n-2)}\\
           \frac{\alpha \left ( R^2-r^2 \right ) \left ( \alpha(R^2+r^2) + R^2 \frac{\partial f_\omega}{\partial \varphi} \right )}{R^4 \, r \,  f_\omega(\varphi)} & \alpha \frac{R^2-r^2}{R^2} r \frac{\partial m_{\varphi \varphi}}{\partial r} & 0_{1 \times (n-2)}\\
            0_{(n-2) \times 1} & 0_{(n-2) \times 1} & 0_{(n-2) \times (n-2)}
        \end{bmatrix}.
     \end{split}
    \end{equation}
    We can simplify \eqref{eq:contraction_condition_polar_latent_dynamics} to
    \begin{equation}
    \begin{split}
        2 \, c^2 \left ( \zeta_\mathrm{pol} \left ( 1 + \frac{1}{r^2} \right ) - \left ( \frac{2\alpha}{R^2}  + \beta\right ) \right ) \leq 0,\\
        \zeta_\mathrm{pol} \, \left ( r^2 + 1 \right ) - \left ( \frac{2 \, \alpha}{R^2} + \beta \right ) r^2 \leq 0,\\
        \zeta_\mathrm{pol}  \leq  \left ( \frac{2 \, \alpha}{R^2} + \beta \right ) \, \frac{r^2}{r^2 + 1}.
    \end{split}
    \end{equation}
    Given $y_\mathrm{pol} \in \mathcal{Y}_\mathrm{pol}$ (i.e., $r \geq r_\epsilon$), we can, therefore, guarantee that the actual contraction rate admits to the lower bound
    \begin{equation}
        \zeta_\mathrm{pol} \geq \left ( \frac{2 \, \alpha}{R^2} + \beta \right ) \, \frac{r_\epsilon^2}{r_\epsilon^2 + 1}.
    \end{equation}
\end{proof}
In practical robotics settings, the contraction rate is particularly relevant for the region $r \geq R^2$ (i.e., the system is outside the defined limit cycle contour). In such a setting with $r_\epsilon = R$, the contraction rate is given $\forall r \geq R$ by $\zeta_\mathrm{pol} \geq \frac{2 \alpha + \beta R^2}{R^2 + 1}$ and for $R = 1$ by $\zeta_\mathrm{pol} \geq \alpha + \frac{\beta}{2}$. This illustrates well how the latent dynamics, $\alpha, \beta$, allow us to modulate the contraction behavior of the system.

\paragraph{\small Proof of Transverse Contraction of Latent Dynamics in Cartesian Coordinates}
\begin{Proposition}\label{prop:cartesian_latent_dynamics_transverse_contraction}
    Let $\omega \geq 0$, $\alpha, \beta > 0$, $R > 0$, and $y = \begin{bmatrix}
    y_1 & y_2 & y_{3:n}^\top
\end{bmatrix}^\top \in \mathcal{Y}$. Then, the latent dynamics in Cartesian coordinates from \eqref{eq:latent_dynamics} are transverse contracting under the metric
    \begin{equation}\label{eq:contraction_metric_cartesian_latent_dynamics}
        M_y(y) = \frac{\partial h_{\mathrm{p2c}}}{\partial y_\mathrm{pol}}^{-\top} \, M_\mathrm{pol} \, \frac{\partial h_{\mathrm{p2c}}}{\partial y_\mathrm{pol}}^{-1} \Bigg |_{y_\mathrm{pol} = h_\mathrm{p2c}^{-1}(y)} \succ 0 \in \mathbb{R}^{n \times n},
    \end{equation}
    with the contraction rate $\zeta_y \geq \left (\frac{2\alpha}{R^2} + \beta \right ) \, \frac{r_\epsilon^2}{r_\epsilon^2 + 1}$ in the region $\mathcal{Y} = \left \{ y \in \mathbb{R}^n | r_\epsilon \in \mathbb{R}_{>0}, \sqrt{y_1^2 + y_2^2} \geq r_\epsilon \right \}$.
\end{Proposition}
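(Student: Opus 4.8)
The plan is to deduce the Cartesian statement from Proposition~\ref{prop:polar_latent_dynamics_transverse_contraction} by transporting its transverse-contraction certificate through the polar-to-Cartesian change of coordinates $y = h_\mathrm{p2c}(y_\mathrm{pol})$, exploiting that transverse contraction is invariant under diffeomorphic coordinate changes. Write $J_\mathrm{p2c} := \frac{\partial h_\mathrm{p2c}}{\partial y_\mathrm{pol}}$, whose closed form and invertibility for $r = \sqrt{y_1^2+y_2^2}>0$ are recorded in the excerpt. On $\{r>0\}$ the map $h_\mathrm{p2c}$ is a local diffeomorphism, it carries $\mathcal{Y}_\mathrm{pol}$ bijectively onto $\mathcal{Y}$ once $\varphi$ is restricted to $[-\pi,\pi)$, and $M_\mathrm{pol}$ depends on $\varphi$ only through the $2\pi$-periodic quantities $f_\omega(\varphi)$ and $\partial f_\omega/\partial\varphi$, so the pulled-back metric $M_y = J_\mathrm{p2c}^{-\top} M_\mathrm{pol}\, J_\mathrm{p2c}^{-1}$ is single-valued and $C^1$ on $\mathcal{Y}$.

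With this setup the argument proceeds in three short steps. (i) \emph{Positive definiteness}: for each $y\in\mathcal{Y}$, $M_y(y)$ is congruent to $M_\mathrm{pol}(h_\mathrm{c2p}(y))\succ0$ through the invertible matrix $J_\mathrm{p2c}^{-1}$, hence $M_y\succ0$. (ii) \emph{Dynamics and orthogonality}: the Cartesian field \eqref{eq:latent_dynamics} is the pushforward $f_\mathrm{y}(y) = J_\mathrm{p2c}\, f_\mathrm{pol}(h_\mathrm{c2p}(y))$ of the polar field \eqref{eq:latent_dynamics_polar_coordinates}; for any incremental direction $\delta_{y_\mathrm{pol}}$ for which Proposition~\ref{prop:polar_latent_dynamics_transverse_contraction} verifies the orthogonality and contraction conditions, set $\delta_y = J_\mathrm{p2c}\,\delta_{y_\mathrm{pol}}$ and note the pairing is preserved verbatim, $\delta_y^\top M_y f_\mathrm{y} = \delta_{y_\mathrm{pol}}^\top M_\mathrm{pol} f_\mathrm{pol}$, so it still converges uniformly to zero. (iii) \emph{Contraction condition}: using $\frac{\partial f_\mathrm{y}}{\partial y} = J_\mathrm{p2c}\frac{\partial f_\mathrm{pol}}{\partial y_\mathrm{pol}}J_\mathrm{p2c}^{-1} + \dot{J}_\mathrm{p2c} J_\mathrm{p2c}^{-1}$ (along trajectories) together with the matching product-rule expansion of $\dot{M}_y$, the $\dot{J}_\mathrm{p2c}$-terms telescope and produce the congruence identity
\begin{equation}\small
    \frac{\partial f_\mathrm{y}}{\partial y}^{\top} M_y + M_y \frac{\partial f_\mathrm{y}}{\partial y} + \dot{M}_y + 2\zeta M_y = J_\mathrm{p2c}^{-\top}\left(\frac{\partial f_\mathrm{pol}}{\partial y_\mathrm{pol}}^{\top} M_\mathrm{pol} + M_\mathrm{pol}\frac{\partial f_\mathrm{pol}}{\partial y_\mathrm{pol}} + \dot{M}_\mathrm{pol} + 2\zeta M_\mathrm{pol}\right) J_\mathrm{p2c}^{-1},
\end{equation}
so, pairing with $\delta_y = J_\mathrm{p2c}\delta_{y_\mathrm{pol}}$, the Cartesian transverse-contraction inequality reduces exactly to the polar one. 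Proposition~\ref{prop:polar_latent_dynamics_transverse_contraction} certifies the right-hand side is $\le0$ on $\mathcal{Y}_\mathrm{pol}$ with rate $\zeta_\mathrm{pol}\ge\left(\frac{2\alpha}{R^2}+\beta\right)\frac{r_\epsilon^2}{r_\epsilon^2+1}$; since $r = \sqrt{y_1^2+y_2^2}\ge r_\epsilon$ on $\mathcal{Y}$, the same lower bound holds for $\zeta_y$. In practice I would cite the coordinate-invariance of transverse contraction established in \cite{manchester2017control, mohammadi2024neural, jaffe2024learning} and skip re-deriving the telescoping, exactly as step~3 of the proof of Theorem~\ref{theorem:transverse_contraction} does.

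I expect the main difficulty to be bookkeeping rather than analysis: one must be explicit that $h_\mathrm{p2c}$ ceases to be a diffeomorphism precisely at $r=0$ — which is why both regions in the statement exclude the center — and that the $\varphi$-periodicity of $M_\mathrm{pol}$ is what makes $M_y$ a well-defined smooth field on $\mathcal{Y}$; and one must read $\dot{M}_y$ in Def.~\ref{def:transverse_contraction} as the time derivative along the Cartesian flow, which is exactly what the congruence identity above encodes. Beyond these points the proof is a routine congruence argument, and nothing in it requires the explicit, cumbersome entries of $M_\mathrm{pol}$ or $\dot{M}_\mathrm{pol}$.
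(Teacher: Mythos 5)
Your proposal is correct and follows essentially the same three-step route as the paper: positive definiteness by congruence of $M_y$ with $M_\mathrm{pol}$ through the invertible Jacobian $\frac{\partial h_{\mathrm{p2c}}}{\partial y_\mathrm{pol}}$, transport of the orthogonality condition via $\delta_y = \frac{\partial h_{\mathrm{p2c}}}{\partial y_\mathrm{pol}}\,\delta_{y_\mathrm{pol}}$ yielding the same $-c\,\beta\,y_{3:n}$ residual, and reduction of the transverse-contraction inequality to the polar one with the identical rate bound. The only (immaterial) difference is that you obtain the final inequality from the general telescoping/congruence identity for pushforward dynamics, whereas the paper writes out the Cartesian quadratic form explicitly and simplifies it to the same scalar condition in $y_1^2+y_2^2$.
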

\begin{proof}
    Again, the proof consists of three steps: proof of positive-definiteness of the contraction metric, and the fulfillment of the orthogonality and contraction conditions in order to meet the conditions for transverse contraction.\\
    \textbf{Step 1: Positive definite contraction metric.} As shown in Proposition \ref{prop:polar_latent_dynamics_transverse_contraction}, $M_\mathrm{pol}(y) \succ 0 \: \forall \, y \in \mathbb{R}^n$. As $M_\mathrm{pol}$ is square and $\mathrm{rank} \left ( \frac{\partial h_{\mathrm{p2c}}}{\partial y_\mathrm{pol}}^{-1} \right ) = n \: \forall y \in \mathcal{Y}$, $M_y(y)$, as defined in \eqref{eq:contraction_metric_cartesian_latent_dynamics}, is positive definite~\cite{petersen2008matrix}.\\
    \textbf{Step 2: Orthogonality condition.} Let the incremental motion orthogonal to the flow be defined as
    \begin{equation}
        \delta_y = \frac{\partial h_{\mathrm{p2c}}}{\partial y_\mathrm{pol}} \, \delta_{y_\mathrm{pol}} \bigg |_{y_\mathrm{pol} = h_\mathrm{p2c}^{-1}(y)} = c \, \begin{bmatrix}
        \frac{y_1}{\sqrt{y_1^2 + y_2^2}} & \frac{y_2}{\sqrt{y_1^2 + y_2^2}} & 1
    \end{bmatrix}^\top.
    \end{equation}
    The orthogonality condition is then given by
    \begin{equation}
        \delta_y^\top \, M_y(y) \, f_y(y) = -c \, \beta \, y_{3:n}.
    \end{equation}
    Since $-c \, \beta \, y_{3:n}$ converges uniformly to zero, the orthogonality condition defined in \eqref{eq:orthogonality_condition_polar_latent_dynamics} converges to zero.\\
    \textbf{Step 3: Transverse contraction condition.} 
    The transverse contraction condition can be stated as
    \begin{equation}
    \begin{split}
        \delta_{y}^\top \left ( \frac{\partial f_y}{\partial y}^\top M_y(y) + M_y(y) \, \frac{\partial f_y}{\partial y} + \dot{M}_y(y) + 2 \, \zeta_y \, M_y(y) \right ) \delta_{y} &\leq 0\\
        \frac{2 \, c^2}{R^2 \, (y_1^2 + y_2^2)} \left ( - 2 \alpha (y_1^2 + y_2^2) - \beta R^2 (y_1^2 + y_2^2) + \zeta_y R^2 (y_1^2 + y_2^2) + \zeta_y R^2 \right ) & \leq 0,\\
        -(2\alpha + \beta R^2) (y_1^2 + y_2^2) + \zeta_y R^2 (y_1^2 + y_2^2 + 1)& \leq 0.
    \end{split}
    \end{equation}
    Subsequently, we can isolate the contraction rate in the inequality condition
    \begin{equation}
        \zeta_y \leq \left ( \frac{2 \alpha}{R^2} + \beta \right ) \frac{y_1^2+y_2^2}{y_1^2 + y_2^2 + 1}.
    \end{equation}
    Now, if we define the contraction region as $\sqrt{y_1^2+y_2^2} \geq r_\epsilon > 0$, we can guarantee a transverse contraction rate $\zeta_y \geq \left (\frac{2\alpha}{R^2} + \beta \right ) \, \frac{r_\epsilon^2}{r_\epsilon^2 + 1} \: \forall y \in \mathcal{Y}$.
\end{proof}

\subsubsection*{Ablation Study Loss Functions}
In order to quantify the impact of each loss term on the overall performance of OSMP, we conduct an ablation study and report the statistics of the evaluation metrics across three random seeds in Tab.~\ref{tab:ablation_study_loss_functions} and qualitative results in Fig.~\ref{fig:ablation_study_loss_functions}.
It can be seen that the combination of velocity imitation loss $\mathcal{L}_\mathrm{vi}$, the encoder regularization loss $\mathcal{L}_\mathrm{er}$, the limit cycle matching loss $\mathcal{L}_\mathrm{lcm}$, and the time guidance loss $\mathcal{L}_\mathrm{tgd}$ works the best for very complex and curved oracles, such as the TUD-Flame. For ``\emph{easier}'' oracles, such as the Ellipse and the Star, the encoder regularization loss and the time guidance loss can slightly degrade performance and can be left out of training. For very basic oracles, such as the Ellipse, even the limit cycle matching loss is not necessary, and it is sufficient to rely on the velocity imitation loss.
Finally, while the Hausdorff~\cite{hausdorff1914grundzuge} loss is suitable for simple oracle shapes~\cite{zhi2024teaching}, the limit cycle matching loss proposed in this paper is better suited for complex oracle shapes.

\subsubsection*{Inference Time Benchmarking}
We benchmarked multiple inference modes and Jacobian‐estimation schemes for an OSMP trained with the CorneliaTurtleRobotJointSpace oracle; the outcomes are listed in Tab.~\ref{tab:inference_time_benchmarking}. Here, “inference time” is the wall-clock duration—measured on an Apple MacBook M4 Max CPU—for a single forward pass (batch size = 1), averaged over 1,000 runs.

Because numerical Jacobians reduce accuracy, we also computed the velocity RMSE with respect to the demonstrated velocities. In addition, starting from the oracle’s initial states, we determined the trajectory RMSE. Unlike Sec.~\ref{sub:osmp_benchmarking}, we did not enforce the demonstration’s fixed integration step; instead, each rollout used a step size equal to the model’s measured inference time. Consequently, faster models are integrated with finer temporal resolution, potentially achieving higher numerical accuracy.


The results reveal an interesting trade-off: models with an exact Jacobian run more slowly, whereas those using approximate Jacobians support higher control rates and can sometimes track the trajectory more precisely thanks to the finer integration grid. Switching from analytical to numerical Jacobians increases the velocity RMSE by 12.5\%, yet the lowest trajectory RMSE is obtained with an AOT-compiled model that employs numerical Jacobians — 36 \% lower than an eager-mode baseline with autograd Jacobians. This improvement stems from the roughly ninefold reduction in integration step size made possible by the compilation speed-up.

We note that this Trajectory RMSE tradeoff highly depends on the specific trained model, the oracle, and the available inference hardware. For example, the advantages of a high control rate are particularly pronounced where fast velocities are needed, such as in the case of a turtle swimming. Therefore, we recommend a separate analysis for each use case if losing accuracy via numerical Jacobians can be outweighed by a faster inference time.

\subsubsection*{Quantitative Evaluation of Real-World Experiments}
In addition to extended qualitative evaluation of the real-world experimental results in Fig.~\ref{fig:robot_embodiments_extended_results}, w also quantitatively evaluate the real-world experiments - specifically, the imitation metrics and the shape similarity metrics between the actual and desired motion for the experiments with the UR5, Helix soft robot and turtle robot.
For the shape similarity metrics, we adopt the same metrics as for the convergence analysis.

We perform this quantitative evaluation by first aligning the experimental data temporarily with the oracle data. For this purpose, we compute the error between the actual position sequence and the first position of the oracle. Subsequently, we identify the minimas of this alignment error and use this information to temporarily shift the experimental data such that it aligns with the starting position of the oracle.
Next, we resample the experimental data such that it aligns with the oracle steps. For the imitation metrics, we resample the experimental data such that each time step aligns with the oracle trajectory. Instead, for the shape similarity metrics, we resample the experimental data containing one trajectory period to contain the same number of time steps as the oracle without temporarily aligning each step.

The results shown in Tab.~\ref{tab:robot_embodiments_quantitative_evaluation} show the robots controlled by the OSMPs are able to quite accurately imitate the demonstration and also, that the actual trajectory shapes matches closely the desired oracle shape.
For the Helix soft robot, we additionally quantitatively compare the performance against a classical trajectory tracking (TT) controller.
The result show that the OSMPs are able to track the oracle shape much more accurately. However, the imitation performance can be sometimes worse than the trajectory tracking controller as the OSMPs being a subclass of DMPs do not contain an explicit time reference and, therefore, any low-level control delays, external disturbances, etc. lead to a time delay in the tracking of the oracle, which cannot be compensated as OSMPs do not contain any error-based feedback correction term that is operating on the trajectory/oracle phase.

\newpage




\begin{figure}[ht!]
    \centering
    \includegraphics[width=1.0\linewidth]{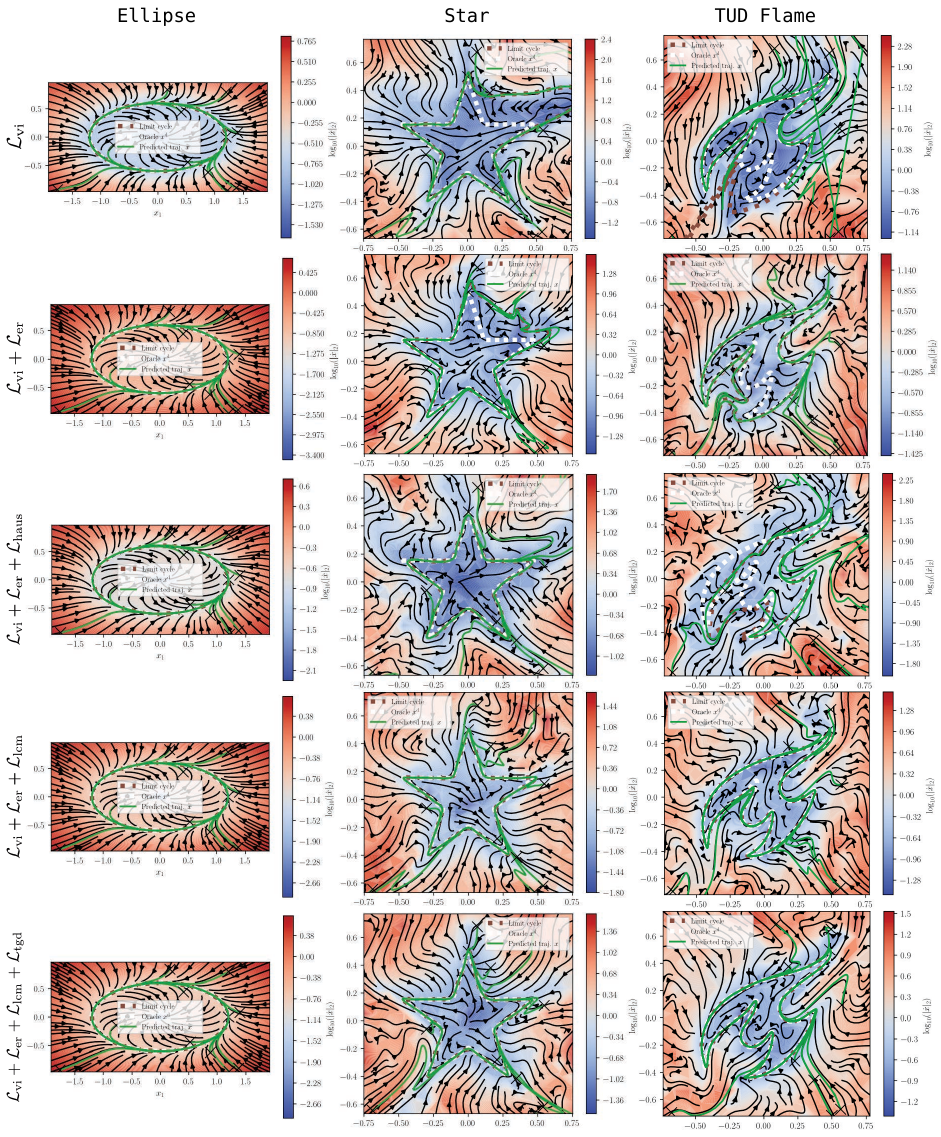}
    \caption{\textbf{Qualitative results for ablation study on loss functions.}
    This figure presents the qualitative results for the ablation study on the effect of the proposed loss functions on the imitation and convergence characteristics of the learned motion policy on a selection of datasets demonstrating that the full set of loss functions improves the quality of the learned velocity field on very complex and highly-curved oracles, such as the TUD-Flame logo dataset. 
    }
    \label{fig:ablation_study_loss_functions}
\end{figure}

\begin{figure}[ht!]
    \centering
    \includegraphics[width=0.9\linewidth]{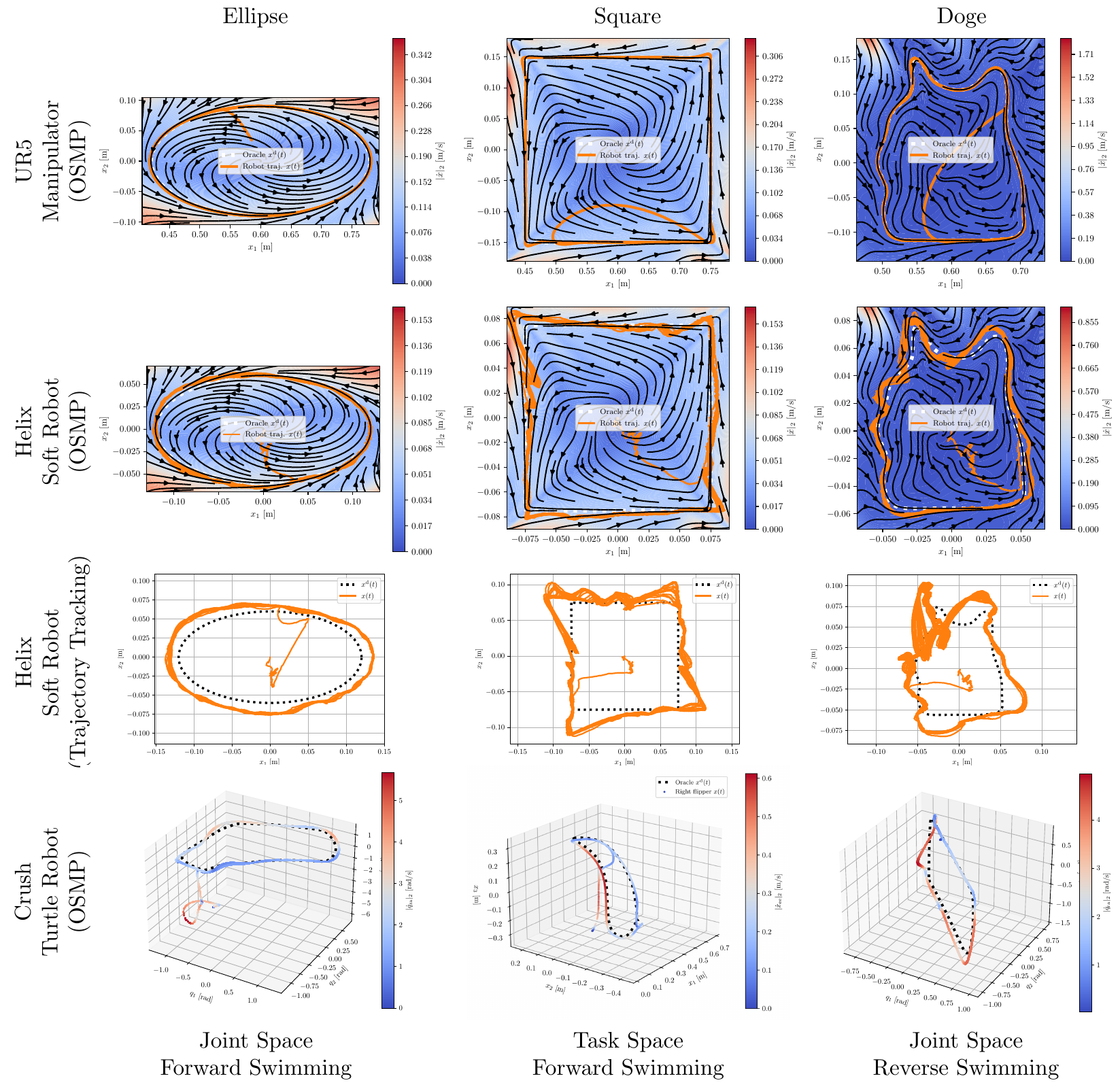}
    \caption{\textbf{Supplementary results for stable tracking of oracles across robot embodiments in the real world.}
    This figure shows the trajectories of various robot embodiments controlled with OSMPs based on data recorded during real-world experiments.
    The first row shows the behavior of a UR5 manipulator in task space where the OSMP is trained on various geometric shape oracles, including an ellipse, a square, and a doge. The black dotted lines denote the oracle, the orange line is the actual trajectory of the UR5's end-effector, and the velocity field is based on the learned OSMP.
    The second row considers the same trained OSMPs, but this time evaluates their behavior on a Helix soft robot.
    The third row presents the behavior of classical trajectory tracking controllers on the Helix soft robot for the same oracles.
    Finally, the fourth row contains measurements of the Crush turtle robot operated (in water) by OSMPs trained on biological oracles, where the black dotted line denotes the oracle and the colored line the actual behavior of the right flipper arm. 
    }
    \label{fig:robot_embodiments_extended_results}
\end{figure}




\begin{table}
    \centering
    \caption{\textbf{Comparison of characteristics of proposed methods against relevant baseline methods.} 
    In some cases, we denote with a \emph{x$^*$} that the feature could (probably) be developed for the respective method, but was not presented in the original paper.
    \emph{Note:} The CLF-CBF-NODE only guarantees convergence to a target trajectory as predicted by the learned NODE. However, the learned NODE is not guaranteed to exhibit a stable limit cycle behavior.
    }
    \label{tab:osmp_characteristics_vs_baselines} 
    \begin{scriptsize}
    \setlength\tabcolsep{2.5pt}
    \begin{tabular}{l ccccccc}\\
        \hline
        \textbf{Method} & \textbf{Model} & \textbf{Orbital Stability} & \textbf{Velocity} & \textbf{$N$-Policies} & \textbf{Task} & \textbf{Smooth Task}\\
        & \textbf{Expressiveness} & \textbf{Guarantees} & \textbf{Imitation} & \textbf{Synchronization} & \textbf{Conditioning} & \textbf{Interpolation}\\
        \hline
        MLP \& RNN & Moderate & x & Noisy & x & $\checkmark$ & x\\
        Neural ODE (NODE) & Moderate & x & $\checkmark$ & x & $\checkmark$ & x\\
        Diffusion Policy (DP)~\cite{chi2023diffusion} & High & x & Noisy & x & $\checkmark$ & x\\
        Classical Rythmic DMPs~\cite{ijspeert2002learning, wensing2017sparse, abu2024learning}  & Limited & $\checkmark$ & $\checkmark$ & $\checkmark$ & x & x\\
        HB-GMR~\cite{khadivar2021learning} & Limited & $\checkmark$ & x & x$^*$ & x & x\\
        Imitation Flow (iFlow)~\cite{urain2020imitationflow} & Moderate & $\checkmark$ &  $\checkmark$ & x$^*$ & x$^*$ & x\\
        CLF-CBF-NODE~\cite{nawaz2024learning} & Moderate & x & $\checkmark$ & x & x$^*$ & x\\
        SPDT~\cite{zhi2024teaching} & Moderate & $\checkmark$ & x & x$^*$ & x & x\\
        \textbf{OSMP (ours)} & Moderate & $\checkmark$ & $\checkmark$ & $\checkmark$ & $\checkmark$ & $\checkmark$\\
        \hline
    \end{tabular}
    \end{scriptsize}
\end{table}

\begin{table}[htbp]\centering
    \centering
    \caption{\textbf{Quantitative results for ablation study on loss functions.} We report the mean $\pm$ std across three random seeds for each dataset evaluation.
    As a particular point of emphasis, we compare the limit cycle matching loss $\mathcal{L}_\mathrm{lcm}$ proposed in this work with the Hausdorff distance loss as employed by Zhi~\textit{et al.}~\cite{zhi2024teaching} as they serve a similar purpose.
    }
    \label{tab:ablation_study_loss_functions} 
    \setlength{\tabcolsep}{3pt}   
    \renewcommand{\arraystretch}{1.2} 
    \begin{tiny}
    \begin{tabular}{|l|l|ccc|cc|cc|}   
    \hline
    & & \multicolumn{3}{c|}{\textbf{Imitation Metrics}} & \multicolumn{2}{c|}{\textbf{Local Convergence Metrics}} & \multicolumn{2}{c|}{\textbf{Global Convergence Metrics}}\\
    \cline{3-9}
    \textbf{Dataset} & \textbf{Loss Config.} & \textbf{Traj. RMSE~$\downarrow$} & \textbf{Norm. Traj. DTW~$\downarrow$} & \textbf{Vel. RMSE~$\downarrow$} & \textbf{Hausdorff Dist.~$\downarrow$} & \textbf{ICP MED~$\downarrow$} & \textbf{Hausdorff Dist.~$\downarrow$} & \textbf{ICP MED~$\downarrow$}\\
    \hline
    \multirow{5}{*}{Ellipse} & $\mathcal{L}_\mathrm{vi}$ & $0.001 \pm 0.000$ & $0.0010 \pm 0.0000$ & $0.004 \pm 0.000$ & $\mathbf{0.042 \pm 0.000}$ & $\mathbf{0.002 \pm 0.000}$ & $0.002 \pm 0.000$ & $0.001 \pm 0.000$\\
    & $\mathcal{L}_\mathrm{vi} + \mathcal{L}_\mathrm{er}$ & $0.001 \pm 0.000$ & $0.0007 \pm 0.0001$ & $0.004 \pm 0.000$ & $\mathbf{0.042 \pm 0.000}$ & $\mathbf{0.002 \pm 0.000}$ & $\mathbf{0.001 \pm 0.000}$ & $\mathbf{0.000 \pm 0.000}$\\
    & $\mathcal{L}_\mathrm{vi} + \mathcal{L}_\mathrm{er} + \mathcal{L}_\mathrm{haus}$ & $0.006 \pm 0.002$ & $0.0025 \pm 0.0006$ & $0.006 \pm 0.000$ & $\mathbf{0.042 \pm 0.000}$ & $0.003 \pm 0.000$ & $0.005 \pm 0.001$ & $0.002 \pm 0.001$\\
    & $\mathcal{L}_\mathrm{vi} + \mathcal{L}_\mathrm{er} + \mathcal{L}_\mathrm{lcm}$ & $\mathbf{0.000 \pm 0.000}$ & $\mathbf{0.0006 \pm 0.0001}$ & $\mathbf{0.004 \pm 0.000}$ & $\mathbf{0.042 \pm 0.000}$ & $\mathbf{0.002 \pm 0.000}$ & $\mathbf{0.001 \pm 0.000}$ & $\mathbf{0.000 \pm 0.000}$\\
    & $\mathcal{L}_\mathrm{vi} + \mathcal{L}_\mathrm{er} + \mathcal{L}_\mathrm{lcm} + \mathcal{L}_\mathrm{tgd}$ & $\mathbf{0.000 \pm 0.000}$ & $\mathbf{0.0006 \pm 0.0001}$ & $\mathbf{0.004 \pm 0.000}$ & $\mathbf{0.042 \pm 0.000}$ & $\mathbf{0.002 \pm 0.000}$ & $\mathbf{0.001 \pm 0.000}$ & $\mathbf{0.000 \pm 0.000}$\\
    \hline
    \multirow{5}{*}{Star} & $\mathcal{L}_\mathrm{vi}$ & $0.153 \pm 0.026$ & $0.0817 \pm 0.0193$ & $0.124 \pm 0.018$ & $0.368 \pm 0.065$ & $0.068 \pm 0.014$ & $0.350 \pm 0.055$ & $0.062 \pm 0.017$\\
    & $\mathcal{L}_\mathrm{vi} + \mathcal{L}_\mathrm{er}$ & $0.172 \pm 0.041$ & $0.1024 \pm 0.0246$ & $0.095 \pm 0.008$ & $0.393 \pm 0.118$ & $0.072 \pm 0.018$ & $0.340 \pm 0.081$ & $0.067 \pm 0.014$\\
    & $\mathcal{L}_\mathrm{vi} + \mathcal{L}_\mathrm{er} + \mathcal{L}_\mathrm{haus}$ & $0.077 \pm 0.020$ & $0.0144 \pm 0.0040$ & $0.089 \pm 0.018$ & $\mathbf{0.041 \pm 0.001}$ & $0.006 \pm 0.000$ & $0.015 \pm 0.001$ & $0.005 \pm 0.000$\\
    & $\mathcal{L}_\mathrm{vi} + \mathcal{L}_\mathrm{er} + \mathcal{L}_\mathrm{lcm}$ & $\mathbf{0.024 \pm 0.006}$ & $\mathbf{0.0031 \pm 0.0007}$ & $\mathbf{0.041 \pm 0.007}$ & $0.044 \pm 0.002$ & $\mathbf{0.003 \pm 0.000}$ & $\mathbf{0.013 \pm 0.002}$ & $\mathbf{0.002 \pm 0.000}$\\
    & $\mathcal{L}_\mathrm{vi} + \mathcal{L}_\mathrm{er} + \mathcal{L}_\mathrm{lcm} + \mathcal{L}_\mathrm{tgd}$ & $0.027 \pm 0.007$ & $0.0035 \pm 0.0007$ & $0.049 \pm 0.008$ & $0.049 \pm 0.005$ & $0.004 \pm 0.000$ & $0.017 \pm 0.003$ & $0.003 \pm 0.000$\\
    \hline
    \multirow{5}{*}{TUD-Flame} & $\mathcal{L}_\mathrm{vi}$ & $0.116 \pm 0.127$ & $0.0879 \pm 0.1198$ & $0.107 \pm 0.058$ & $0.069 \pm 0.042$ & $0.009 \pm 0.007$ & $\infty$ & $\infty$\\
    & $\mathcal{L}_\mathrm{vi} + \mathcal{L}_\mathrm{er}$ & $0.084 \pm 0.070$ & $0.0287 \pm 0.0352$ & $0.114 \pm 0.058$ & $0.082 \pm 0.057$ & $0.011 \pm 0.010$ & $0.066 \pm 0.072$ & $0.010 \pm 0.011$\\
     & $\mathcal{L}_\mathrm{vi} + \mathcal{L}_\mathrm{er} + \mathcal{L}_\mathrm{haus}$ & $0.375 \pm 0.048$ & $0.4067 \pm 0.0289$ & $0.180 \pm 0.006$ & $0.094 \pm 0.020$ & $0.016 \pm 0.004$ & $0.113 \pm 0.030$ & $0.010 \pm 0.005$\\
    & $\mathcal{L}_\mathrm{vi} + \mathcal{L}_\mathrm{er} + \mathcal{L}_\mathrm{lcm}$ & $0.113 \pm 0.084$ & $0.0249 \pm 0.0282$ & $0.114 \pm 0.049$ & $0.053 \pm 0.018$ & $0.006 \pm 0.004$ & $\infty$ & $\infty$\\
    & $\mathcal{L}_\mathrm{vi} + \mathcal{L}_\mathrm{er} + \mathcal{L}_\mathrm{lcm} + \mathcal{L}_\mathrm{tgd}$ & $\mathbf{0.033 \pm 0.008}$ & $\mathbf{0.0025 \pm 0.0002}$ & $\mathbf{0.073 \pm 0.010}$ & $\mathbf{0.039 \pm 0.003}$ & $\mathbf{0.003 \pm 0.000}$ & $\mathbf{0.009 \pm 0.001}$ & $\mathbf{0.002 \pm 0.000}$\\
    \hline
    \end{tabular}
    \end{tiny}
\end{table}

\begin{table}
    \centering
    \caption{\textbf{Numerical Jacobians allow for compilation of the motion policy and inference rates of up to 15,000~Hz on modern CPUs.}
    We evaluated the OSMP model trained on the \emph{CorneliaTurtleRobotJointSpace} dataset with $n=3$ and report the inference time, the Velocity RMSE, and the Trajectory RMSE for various inference modes and methods for computing the Jacobian of the encoder $J_\Psi$. 
    It is crucial to describe the evaluation procedure of the Trajectory RMSE. We begin by measuring the inference time for a single motion-policy step, from which we derive the control frequency at which the policy can run. We then execute the policy using a time step equal to this measured inference time, so a faster model, with its shorter step, can integrate the trajectory more finely and may, therefore, attain higher accuracy as long as it doesn't compromise the performance with a lowered velocity prediction accuracy.
    }
    \label{tab:inference_time_benchmarking} 

    \begin{small}
    \begin{tabular}{cc ccc} 
        \\
        \hline
        \textbf{Inference Mode} & \textbf{Jacobian Method} & \textbf{Inference Time [ms] $\downarrow$} & \textbf{Vel. RMSE $\downarrow$} & \textbf{Traj. RMSE $\downarrow$}\\
        \hline
        Eager & Autograd & $0.563$ & $\mathbf{0.00841}$ & $0.0315$\\
        Eager & VJP & $0.474$ & $\mathbf{0.00841}$ & $0.0326$\\
        Compiled & VJP & $0.116$ & $\mathbf{0.00841}$ & $0.0270$\\
        Eager & Numerical & $0.441$ & $0.00946$ & $0.0236$\\
        Compiled & Numerical & $0.117$ & $0.00946$ & $0.0212$\\
        Export & Numerical & $0.135$ & $0.00946$ & $0.0212$\\
        AOTInductor & Numerical & $\mathbf{0.064}$ & $0.00946$ & $\mathbf{0.0202}$\\
        \hline
    \end{tabular}
    \end{small}
\end{table}

\begin{table}
    \centering
    \caption{
    \textbf{Imitation and shape similarity evaluation metrics for real-world robot experiments.}
    We report the imitation metrics that evaluate the similarity between the actual robot behavior and the oracle/demonstration. Furthermore, we report shape similarity metrics between the actual trajectory shape and the desired oracle. In order to compute the metrics, we first identify the alignment between the first oracle time step and the actual trajectory by identifying the time shift that minimizes the error between the two. Subsequently, we evaluate the similarity between the oracle and the system trajectory that was resampled to the same timings for the imitation metrics and to the same number of steps for the shape similarity metrics, respectively.
    For the Helix soft robot experiments, we also compare the performance of OSMP against a classical trajectory tracking (TT) controller.
    }
    \label{tab:robot_embodiments_quantitative_evaluation} 
    \setlength{\tabcolsep}{2.5pt}   
    \renewcommand{\arraystretch}{1.2} 
    \begin{tiny}
    \begin{tabular}{|c | ccc | cccc | cc |}
        \hline
        \textbf{Robot} & \textbf{Task/Oracle} & \textbf{Speed-up} & \textbf{Motion Policy} & \multicolumn{4}{c|}{\textbf{Imitation Metrics}} & \multicolumn{2}{c|}{\textbf{Shape Similarity Metrics}}\\
        & & \textbf{Factor $s_\omega$} & & \textbf{Traj. RMSE $\downarrow$} & \textbf{Norm. Traj. DTW $\downarrow$} & \textbf{Traj. Frechet Dist. $\downarrow$} & \textbf{Vel. RMSE $\downarrow$} & \textbf{Hausdorff Dist.~$\downarrow$} & \textbf{ICP MED~$\downarrow$}\\
        \hline
        \multirow{6}{*}{UR5} & Square & $2.0$ & OSMP & $0.0094$~m & $0.0014$ & $0.016$~m & $0.0225$~m/s & $0.0077$~m & $0.0012$~m \\
        & Star & $2.0$ & OSMP & $0.0390$~m & $0.0083$ & $0.111$~m & $0.0802$~m/s & $0.0102$~m & $0.0019$~m \\
        & Doge & $2.0$ & OSMP & $0.0090$~m & $0.0018$ & $0.007$~m & $0.0230$~m/s & $0.0071$~m & $0.0016$~m \\
        & MIT-CSAIL & $2.0$ & OSMP & $0.0178$~m & $0.0027$ & $0.041$~m & $0.0426$~m/s & $0.0063$~m & $0.0015$~m \\
        & Dolphin & $0.5$ & OSMP & $0.0152$~m & $0.0016$ & $0.037$~m & $0.0166$~m/s & $0.0033$~m & $0.0007$~m \\
        & Whiteboard Cleaning & $0.5$ & OSMP & $0.0136$~m & $0.0024$ & $0.031$~m & $0.0240$~m/s & $0.0142$~m & $0.0016$~m \\
        \hline
        \multirow{4}{*}{Turtle Robot} & Joint Space Forw. Swim. & $0.5$ & OSMP & $0.2237$~rad & $0.0398$ & $0.536$~rad & $0.3673$~rad/s & $0.0854$~rad & $0.0215$~rad \\
        & Joint Space Forw. Swim. & $1.0$ & OSMP & $0.2544$~rad & $0.0660$ & $0.590$~rad & $0.8805$~rad/s & $0.1790$~rad & $0.0419$~rad \\
        & Joint Space Rev. Swim. & $1.5$ & OSMP & $0.6715$~rad & $0.3609$ & $1.835$~rad & $2.1995$~rad/s & $0.3815$~rad & $1.4600$~rad \\
        & Task Space Forw. Swim. & $0.5$ & OSMP & $0.0620$~m & $0.0244$ & $0.083$~m & $0.0609$~m/s & $0.0517$~m & $0.0125$~m \\
        \hline
        \multirow{6}{*}{Helix Soft Robot} & Ellipse & $2.0$ & TT & $0.0096$~m & $0.0126$ & $0.017$~m & $0.0119$~m/s & $0.0175$~m & $0.0122$~m \\
        & Ellipse & $2.0$ & OSMP & $0.0308$~m & $0.0066$ & $0.055$~m & $0.0195$~m/s & $0.0061$~m & $0.0020$~m \\
        & Square & $2.0$ & TT & $0.0166$~m & $0.0123$ & $0.039$~m & $0.0526$~m/s & $0.0388$~m & $0.0099$~m \\
        & Square & $2.0$ & OSMP & $0.0216$~m & $0.0073$ & $0.067$~m & $0.0302$~m/s & $0.0112$~m & $0.0026$~m \\
        & Doge & $2.0$ & TT & $0.0128$~m & $0.0141$ & $0.033$~m & $0.0514$~m/s & $0.0290$~m & $0.0101$~m \\
        & Doge & $2.0$ & OSMP & $0.0059$~m & $0.0039$ & $0.011$~m & $0.0182$~m/s & $0.0108$~m & $0.0033$~m \\
        \hline
    \end{tabular}
    \end{tiny}
\end{table}


\clearpage 


\subsubsection*{Captions for Movies}
\paragraph{\small Caption for Movie S1.}
\textbf{Deployment of OSMPs on diverse robot embodiments.}
This movie is an animated version of Fig.~\ref{fig:robot_embodiments_results}.
Specifically, we present real-world experimental results for deploying OSMPs achieving locomotion and rhythmic tool use on a diverse set of robot embodiments. This includes tracking challenging image contours with the UR5 manipulator and the Helix soft robot, and imitating biological turtle swimming oracles with the ``Crush'' turtle robot (joint-space forward and reverse swimming, task-space forward swimming).

\paragraph{\small Caption for Movie S2.}
\textbf{Kinesthetic teaching experiments.}
This movie animates the kinesthetic teaching experiments presented in Fig.~\ref{fig:robot_embodiments_results} and compares the demonstration motion obtained via kinesthetic teaching with the motion generated when executing the OSMP on the robot. We consider two types of demonstrations: whiteboard cleaning with the UR5 robot and brooming with the Kuka cobot.

\paragraph{\small Caption for Movie S3.}
\textbf{Verification of motion policy compliance.}
This movie is an animated version of Figure~\ref{fig:compliance_results} and evaluates the compliance of OSMPs against a classical time-parametrized trajectory tracking controller by analyzing the motion behavior upon interaction with the environment and perturbations with respect to the nominal oracle and the time reference.

\paragraph{\small Caption for Movie S4.}
\textbf{Phase synchronization of motion policies.}
This movie is an animated version of Figure~\ref{fig:phase_sync_results} and presents both simulation and experimental results for synchronizing multiple OSMPs in their phase. First, we demonstrate the scalability of the approach by synchronizing up to 6 OSMPs on the Dolphin contour. Subsequently, we showcase how the turtle robot's locomotion differs significantly with and without an activated phase synchronization.

\paragraph{\small Caption for Movie S5.}
\textbf{Task-conditioning of the motion policy and transfer to unseen tasks.}
This movie is an animated version of Figure~\ref{fig:conditioning_results} and exhibits how task conditioning of the encoder allows OSMPs to exhibit multiple, distinct motion behaviors. First, we show the results of an ablation study where we compare the behavior of models trained with and without the $\mathcal{L}_\mathrm{sci}$ loss when transferring to / prompted with tasks that were unseen during training. Finally, we include a video of the turtle robot swimming at various task conditionings, ranging from $z=-1$ (the reverse swimming joint space oracle), to $z=1$ (the forward swimming joint space oracle).





\end{document}